\documentclass{article}

\PassOptionsToPackage{numbers, sort}{natbib}



    \usepackage[final]{neurips_2025}


\usepackage[utf8]{inputenc} 
\usepackage[T1]{fontenc}    
\usepackage[
        breaklinks,
        colorlinks,
        ]{hyperref}         
\usepackage{booktabs}       
\usepackage{amsfonts}       
\usepackage{nicefrac}       
\usepackage{microtype}      
\usepackage{graphicx}
\usepackage{amsmath}
\usepackage{amsthm}
\usepackage[table,xcdraw,dvipsnames]{xcolor}
\usepackage{algorithm}
\usepackage{algorithmicx}
\usepackage{algpseudocode}
\usepackage{multirow}
\usepackage{subcaption} 
\usepackage{makecell}
\usepackage{wrapfig}

\theoremstyle{plain}
\newtheorem{theorem}{Theorem}

\theoremstyle{definition}

\theoremstyle{remark}

\title{Distribution-Aligned Decoding for Efficient LLM Task Adaptation}

%

\author{%
Senkang Hu$^{1,2,}$\thanks{Equal contribution}\ \ , Xudong Han$^{3,*}$, Jinqi Jiang$^{4}$, Yihang Tao$^{1,2}$, Zihan Fang$^{1,2}$\\ \textbf{Yong Dai}$^{5}$, \textbf{Sam Tak Wu Kwong}$^{6}$, \textbf{Yuguang Fang}$^{1,2,}$\thanks{Corresponding author} \\
  $^1$Hong Kong JC STEM Lab of Smart City,
  $^2$City University of Hong Kong,
  $^3$University of Sussex,\\
  $^4$Huazhong University of Science and Technology,
  $^5$Fudan University,
  $^6$Lingnan University\\
  \texttt{senkang.forest@my.cityu.edu.hk}
}

\begin{document}

\maketitle

\begin{abstract}
    Adapting billion-parameter language models to a downstream task is still costly, even with parameter-efficient fine-tuning (PEFT). We re-cast task adaptation as output-distribution alignment: the objective is to steer the output distribution toward the task distribution directly during decoding rather than indirectly through weight updates. Building on this view, we introduce Steering Vector Decoding (SVDecode), a lightweight, PEFT-compatible, and theoretically grounded method. We start with a short warm-start fine-tune and extract a task-aware steering vector from the Kullback-Leibler (KL) divergence gradient between the output distribution of the warm-started and pre-trained models. This steering vector is then used to guide the decoding process to steer the model's output distribution towards the task distribution. We theoretically prove that SVDecode is first-order equivalent to the gradient step of full fine-tuning and derive a globally optimal solution for the strength of the steering vector. Across three tasks and nine benchmarks, SVDecode paired with four standard PEFT methods improves multiple-choice accuracy by up to 5 points and open-ended truthfulness by 2 points, with similar gains (1-2 points) on commonsense datasets without adding trainable parameters beyond the PEFT adapter. SVDecode thus offers a lightweight, theoretically grounded path to stronger task adaptation for large language models. Code is available at \href{https://github.com/dl-m9/SVDecode}{\texttt{https://github.com/dl-m9/SVDecode}}.
\end{abstract}

\section{Introdcution}

Large language models (LLMs) \cite{llama3modelcard,qwen2, deepseekai2025deepseekv3technicalreport, deepseekai2025deepseekr1incentivizingreasoningcapability} are pivotal in AI, marking early steps towards artificial general intelligence (AGI). They excel in tasks like language understanding, generation, and translation, transforming natural language processing with their grasp of context and human intent. Models like DeepSeek-R1 \cite{deepseekai2025deepseekr1incentivizingreasoningcapability} and OpenAI o1 \cite{openai2024openaio1card} demonstrate strong reasoning and multimodal capabilities, respectively. Specialized models such as EmoLLMs \cite{10.1145/3637528.3671552}, LMDrive \cite{10657019}, and AnomalyGPT \cite{10.1609/aaai.v38i3.27963} address specific downstream tasks like affective instructions, autonomous driving, and anomaly detection. Despite their capabilities, LLMs are resource-intensive, often requiring hundreds of millions to billions of parameters. For instance, training a LLaMA-7B model demands at least 58 GB of memory \cite{10.5555/3692070.3694598}, which is beyond the capacity of consumer-grade hardware like the NVIDIA RTX 4090 with 24GB, limiting their broader applications.

To tackle this challenge, parameter-efficient fine-tuning (PEFT) \cite{hu2025taskawareparameterefficientfinetuninglarge, zhangGradientbasedParameterSelection2024,xiongPYRAParallelYielding2025,ansellComposableSparseFineTuning2022,heSensitivityAwareVisualParameterEfficient2023,hu2022lora} has emerged as a key area of progress in modifying LLMs with minimal computational and GPU memory demands. This approach focuses on updating a few trainable parameters to significantly reduce the memory footprint while enhancing the performance on downstream tasks. 
For example, additive fine-tuning such as prompt tuning \cite{lester-etal-2021-power} and adapter methods \cite{huLLMAdaptersAdapterFamily2023} incorporates a small set of trainable parameters while maintaining the original pre-trained parameters unchanged. 
Selective fine-tuning such as Diff Pruning \cite{guo-etal-2021-parameter} chooses a subset of the model's existing parameters to undergo updates during training. Reparameterization methods such as LoRA \cite{hu2022lora} restructure the model's parameters to achieve efficient low-rank representations. 

However, while PEFT methods effectively reduce the cost of training adaptation, the adaptation process itself is still primarily viewed through the lens of modifying model weights to change the model's output distribution to match the task-specific target distribution, which requires backward passes, optimizing states, and multiple training epochs.

\textbf{Why do We Still Chase the Weights?}  
The end goal of adaptation is not to adjust internal tensors. It is to \emph{shift the model's output distribution} so that $P_\theta(y\!\mid\!x)$ aligns with the task-specific target.  
Current PEFT methods achieve this \emph{indirectly}: they adjust weights in the hope that the logits will follow. However, this indirect approach leads to three practical issues: 1) training still scales linearly in model size and data epochs; 2) weight updates can have unpredictable, non-local effects on token probabilities; and 3) a fixed PEFT hyper-parameter often fails to transfer across tasks and domains.

\textbf{A Distribution-First Perspective.}
To answer this question, we propose a shift in perspective, rethinking task adaptation not just as a weight-update problem but fundamentally as a process of aligning the model's output distribution with the task-specific target distribution. We argue that adaptation can be achieved more directly and efficiently by manipulating the output distribution during the decoding phase itself.

\textbf{Steering Vector Decoding (SVDecode).} To achieve this goal, we present {Steering Vector Decoding (SVDecode)}, an innovative, efficient, and PEFT-compatible method for task adaptation. SVDecode begins with a short \textit{warm-start} fine-tuning phase to obtain the warm-started model, whose output distribution is closer to the task-specific target compared to the pre-trained model. Then we can capture the task-specific direction from the differences between output distributions of the warm-started model and the pre-trained model.
Specifically, we first compute the KL divergence between these two distributions, and then use the negative gradient of the KL divergence to construct the steering signal. Next, this signal is mapped from the distribution space to the logit space to avoid simplex geometry violation and yields a task-aware steering vector that tells us which tokens need more (or less) probability mass and by how much. Additionally, confidence-aware constraints are applied to the steering vector to ensure its robustness and stability. Finally, the steering vector is used to adjust the model's logits at each step during decoding, effectively steering the generation process towards the desired task behavior. Because the vector is applied \emph{during decoding}, no additional backward pass is required, and the method is compatible with \emph{any} existing PEFT methods.

Our contributions are summarized as follows:
1) We rethink LLM task adaptation from the perspective of \emph{output distribution alignment}. 2) We propose the SVDecode method, which leverages negative gradients of KL divergence between distributions to construct task-aware steering vectors for decoding-time adaptation. 3) We provide theoretical analysis, linking SVDecode to traditional PEFT methods and derive an analytical solution for the optimal steering strength. 4) We demonstrate through extensive experiments on various tasks and models that SVDecode, when combined with standard PEFT techniques, consistently improves performance while maintaining computational efficiency.

\section{Rethinking LLM Task Adaptation from the Perspective of Output Distribution Alignment}

Large language models (LLMs) define a conditional output distribution over tokens or task labels, parameterized by $\theta$, as $P_\theta(y \mid x) = \mathrm{Softmax}(f_\theta(x))$, where $y=f_\theta(x)$ is the logits vector produced by the model for input $x$. Fine-tuning adapts the model to a downstream task by updating $\theta$ such that the model's output distribution better aligns with the task-specific target distribution.
Specifically, given a downstream dataset $\mathcal{D}_{\mathrm{task}} = \{(x_i, y_i)\}_{i=1}^N$,
fine-tuning adjusts $\theta$ by minimizing the loss function on the dataset $\mathcal{D}_{\mathrm{task}}$. The standard fine-tuning objective is the negative log-likelihood (NLL):
\begin{equation}
    \mathcal{L}_{\mathrm{FT}}(\theta) = -\mathbb{E}_{(x, y) \sim \mathcal{D}_{\mathrm{task}}} \left[ \log P_\theta(y \mid x) \right].
    \label{eq:nll}
\end{equation}
This is exactly the cross-entropy between the model's output distribution and the empirical one-hot distribution of the correct tokens. Minimizing this encourages the model to assign higher probability to the correct token at each position. In the special case where each $y_i$ is a single label (e.g. for classification), this formula reduces to $-\log P_\theta(y_i \mid x_i)$, the usual cross-entropy for a single-label prediction.

\begin{theorem}
    The NLL objective in Eq.~\eqref{eq:nll} is equivalent to minimizing the expected Kullback-Leibler (KL) divergence between the empirical label distribution $\hat{P}_{\mathrm{task}}(y \mid x)$ and the model's output distribution:
    \begin{equation}
    \mathcal{L}_{\mathrm{FT}}(\theta) = \mathbb{E}_{x \sim \mathcal{D}} \left[ \mathrm{KL}\left( \hat{P}_{\mathrm{task}}(y \mid x) \, \| \, P_\theta(y \mid x) \right) \right],
    \end{equation}
    where $\hat{P}_{\mathrm{task}}(y \mid x)$ is typically a delta function centered on the ground-truth label.
\end{theorem}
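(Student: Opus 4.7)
The plan is to expand the KL divergence on the right-hand side into its standard cross-entropy and entropy components, observe that the entropy piece is independent of $\theta$, and then exploit the delta structure of $\hat{P}_{\mathrm{task}}(y\mid x)$ to collapse the remaining cross-entropy term into the NLL of Eq.~\eqref{eq:nll}. At heart this is a one-line cross-entropy identity in disguise; the work lies in being careful about the two nested expectations and the role of the delta hypothesis.

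First I would write out, for each fixed $x$,
\begin{equation}
\mathrm{KL}\bigl(\hat{P}_{\mathrm{task}}(y\mid x) \,\|\, P_\theta(y\mid x)\bigr) = \sum_{y} \hat{P}_{\mathrm{task}}(y\mid x)\log \hat{P}_{\mathrm{task}}(y\mid x) \;-\; \sum_{y} \hat{P}_{\mathrm{task}}(y\mid x)\log P_\theta(y\mid x).
\end{equation}
The first sum equals $-H(\hat{P}_{\mathrm{task}}(\cdot\mid x))$ and carries no dependence on $\theta$; moreover, since the hypothesis places $\hat{P}_{\mathrm{task}}(\cdot\mid x)$ as a delta on the ground-truth label $y^{\star}(x)$, this entropy vanishes pointwise (under the convention $0\log 0 = 0$), so the term drops out rather than merely being an additive constant. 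The second sum, collapsed onto its single atom, equals $-\log P_\theta(y^{\star}(x)\mid x)$. Taking the outer expectation over $x\sim\mathcal{D}$ and identifying the empirical joint on $\mathcal{D}_{\mathrm{task}}$ with $\mathcal{D}(x)\,\hat{P}_{\mathrm{task}}(y\mid x)$ then yields
\begin{equation}
\mathbb{E}_{x\sim\mathcal{D}}\bigl[\mathrm{KL}(\hat{P}_{\mathrm{task}}(y\mid x)\,\|\,P_\theta(y\mid x))\bigr] = -\mathbb{E}_{(x,y)\sim\mathcal{D}_{\mathrm{task}}}\bigl[\log P_\theta(y\mid x)\bigr] = \mathcal{L}_{\mathrm{FT}}(\theta),
\end{equation}
which is the claimed identity.

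No step is genuinely hard, so the main thing to get right in the write-up is notational consistency between the two expectations: the factorization of the joint empirical distribution as a marginal over $x$ times the conditional delta over $y$ is exactly what licenses rewriting $\mathbb{E}_{x}\,\mathbb{E}_{y\sim\hat{P}_{\mathrm{task}}(\cdot\mid x)}[\cdot]$ as $\mathbb{E}_{(x,y)\sim\mathcal{D}_{\mathrm{task}}}[\cdot]$. It is also worth flagging for completeness that if $\hat{P}_{\mathrm{task}}$ were non-degenerate (e.g., label smoothing or a soft teacher distribution), the equivalence would still hold but only up to an additive $\theta$-independent entropy constant; the theorem's delta hypothesis is precisely what upgrades the relation from ``equivalent under minimization'' to an exact equality.
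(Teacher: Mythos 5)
Your proposal is correct and follows essentially the same route as the paper's proof: both exploit the delta structure of $\hat{P}_{\mathrm{task}}(y \mid x)$ to collapse the KL divergence to $-\log P_\theta(y_i \mid x_i)$ for each sample and then take the expectation over the dataset. Your explicit cross-entropy-minus-entropy decomposition and the remark about non-degenerate (soft-label) targets are harmless refinements of the same argument rather than a different method.
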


\begin{proof}
Let $\hat{P}_{\mathrm{task}}(y \mid x) = \delta_{y_i}(y)$ be the empirical distribution over labels for input $x_i$, where $\delta_{y_i}(y) = 1$ if $y = y_i$ and $0$ otherwise. The KL divergence between the empirical distribution and the model's predicted distribution is defined as:
\begin{equation}
    \mathrm{KL}\left( \hat{P}_{\mathrm{task}}(y \mid x) \,\|\, P_\theta(y \mid x) \right)
    = \sum_{y \in \mathcal{Y}} \hat{P}_{\mathrm{task}}(y \mid x) \log \frac{\hat{P}_{\mathrm{task}}(y \mid x)}{P_\theta(y \mid x)}.
\end{equation}
where $\mathcal{Y}=\{y_1, y_2, \dots, y_{|\mathcal{Y}|}\}$ is the vocabulary set of the model.
Since $\hat{P}_{\mathrm{task}}(y \mid x)$ is a delta function, only the true label $y = y_i$ contributes:
\begin{equation}
    \mathrm{KL}\left( \hat{P}_{\mathrm{task}}(y \mid x_i) \,\|\, P_\theta(y \mid x_i) \right) = \log \frac{1}{P_\theta(y_i \mid x_i)} = -\log P_\theta(y_i \mid x_i).
\end{equation}
Taking the expectation over all samples in the dataset yields:
\begin{equation}
    \mathbb{E}_{x \sim \mathcal{D}} \left[ \mathrm{KL}\left( \hat{P}_{\mathrm{task}}(y \mid x) \,\|\, P_\theta(y \mid x) \right) \right]
    = \frac{1}{N} \sum_{i=1}^{N} \left[ -\log P_\theta(y_i \mid x_i) \right],
\end{equation}
which matches the definition of the average negative log-likelihood as in Eq.~\eqref{eq:nll}. Hence, the NLL objective is equivalent to minimizing the expected KL divergence between the task label distribution and the model's output distribution.
\end{proof}

\textit{Distributional Interpretation.}
From the output distribution perspective, fine-tuning reshapes the model's belief $P_\theta(y | x)$ over the output space to align more closely with the true task-specific behavior. The output distribution $P_\theta(y | x)$ resides on the probability simplex $\Delta^{|\mathcal{Y}| - 1}$ \cite{borgwardt2012simplex}, and fine-tuning can be seen as shifting the model's position on this simplex toward the optimal region defined by the task.
Minimizing the KL divergence from the empirical distribution emphasizes increasing the probability mass on the correct label without penalizing overconfidence in incorrect predictions. This yields a learning dynamic that is both efficient and focused.

\section{Method}
\label{sec:method}
In this section, we will introduce the details of the proposed method. As shown in Fig.~\ref{fig:idea-illustration}, the proposed method includes two steps. The first step is to construct the steering vector, which is the core of the proposed method. It includes several steps, warm-start, KL gradient as steering signal, logit-space projection, and confidence-aware steering vector constraint. The second step is task-aware steering vector decoding, which leverages the steering vector to steer the model's output distribution with the optimal steering strength for different tasks. The detailed algorithm is shown in Algorithm~\ref{alg:task-aware-steering-vector}.

\subsection{Steering Vector Construction}
\label{sec:steering-vector-construction}

\textbf{Warm-Start.} In order to construct the steering vector, we first need to know the task-specific direction of the steering vector. Specifically, given a pre-trained LLM with the parameter $\theta$, the model defines a conditional probability $P_\theta(y \mid x)$ over output text $y$ given input $x$. If $y=(y^1,\dots,y^T)$ is a sequence of $T$ tokens, this typically factorizes autoregressively as: 
$
    P_\theta(y \mid x) \;=\; \prod_{t=1}^T P_\theta(y^t \mid x,y^{<t})\,,
$
where $y^{<t}$ denotes the sequence of previous tokens, $x$ is the input tokens. The model's prediction for each token is usually given by a softmax layer producing $P_\theta(y^t \mid x, y^{<t})$ over the vocabulary at that position.

Given a downstream dataset $\mathcal{D}_{\mathrm{task}} = \{(x_i, y_i)\}_{i=1}^N$, then we warm-start the model by fine-tuning one epoch in $\mathcal{D}_{\mathrm{task}}$ or part of the dataset. This warm-start process can leverage different parameter-efficient fine-tuning strategies, such as additive fine-tuning, selective fine-tuning, and reparametrization fine-tuning discussed in Section~\ref{sec:parameter-efficient-fine-tuning}.

\begin{figure}[t]
    \centering
    \includegraphics[width=0.9\textwidth]{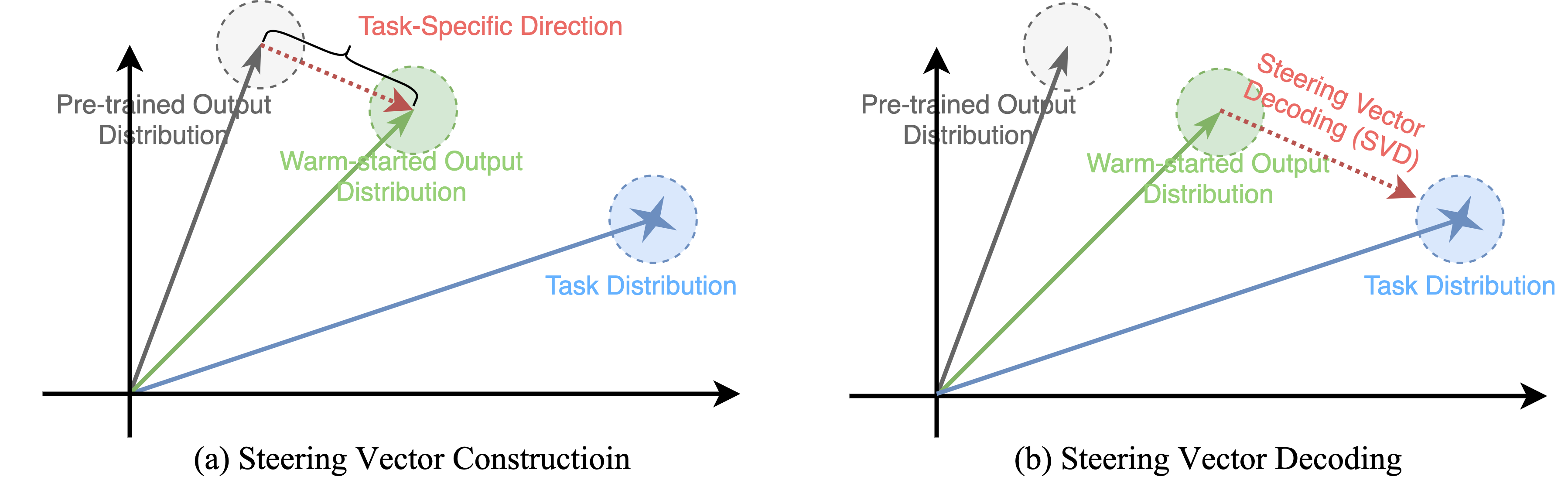}
    \caption{Illustration of the framework of our proposed SVDecode. It includes two steps: (a) steering vector construction and (b) task-aware steering vector decoding. After the decoding with the steering vector, we can see the warm-started model's output distribution is steered towards the task-specific target distribution, thereby enhancing the performance of the model on the downstream task.}
    \label{fig:idea-illustration}
    \vspace{-5mm}
\end{figure}

\textbf{KL Gradient as Steering Signal.}
After the warm-start process, the model's conditional output distribution can be formulated as $P_\phi(y \mid x)$, where $\phi$ is the updated parameters, and we believe that the model's output distribution $P_\phi(y \mid x)$ is close to the task-specific target distribution ${P}_{\mathrm{task}}(y \mid x)$ compared with the pre-trained distribution $P_\theta(y \mid x)$ since the warm-started model's training loss decreases and the test accuracy increases.

Then we can leverage the KL divergence to measure the difference between the pre-trained distribution $P_\theta(y \mid x)$ and the warm-start distribution $P_\phi(y \mid x)$. Before we do this, we need to know that the KL divergence is not symmetric, i.e., $\mathrm{KL}(M || N) \neq \mathrm{KL}(N || M)$, unless the two distributions are identical (in which case both are 0). If we use $\mathrm{KL}(P_\theta(y \mid x) || P_\phi(y \mid x))$ to measure the difference, it means that we assume that the pre-trained model knows more about the task than the warm-started model, and we want to steer the model's output distribution towards the pre-trained model, which is not what we expect. On the contrary, if we use $\mathrm{KL}(P_\phi(y \mid x) || P_\theta(y \mid x))$ to measure the difference, it means that we assume that the warm-started model knows more about the task than the pre-trained model, and we want to steer the model's output distribution towards the task-specific target distribution. Therefore, we use the following KL divergence to measure the distributional difference:
\begin{equation}
    \mathrm{KL}\left( P_\phi(y \mid x) \,\|\, P_\theta(y \mid x) \right) = \sum_{y \in \mathcal{Y}} P_\phi(y \mid x) \log \frac{P_\phi(y \mid x)}{P_\theta(y \mid x)}
\end{equation}
After obtaining the KL divergence, we can use it to construct the steering vector. First, we need to compute the gradient of the KL divergence with respect to $P_\phi(y \mid x)$, denoted as $g_P$, which is $\nabla_{P_\phi(y \mid x)} \mathrm{KL}\left( P_\phi(y \mid x) \,\|\, P_\theta(y \mid x) \right)$. 
For clarity, let $P_\phi = P_\phi(y \mid x),\ P_\theta = P_\theta(y \mid x)$. Then $\mathrm{KL}(P_\phi \parallel P_\theta) = \sum_{y} \left( P_\phi \log P_\phi - P_\phi \log P_\theta \right)$. We compute the gradient with respect to \( P_\phi \) (i.e., \( \nabla_{P_\phi} \mathrm{KL} \)) by taking the partial derivative of \( \mathrm{KL} \) with respect to each \( P_\phi \):
\begin{equation}
        \frac{\partial {\mathrm{KL}}}{\partial P_{\phi,y_i}} = \frac{\partial}{\partial P_{\phi,y_i}} \left( P_{\phi,y_i} \log P_{\phi,y_i} - P_{\phi,y_i} \log P_{\theta,y_i} \right) = \log \left( \frac{P_{\phi,y_i}}{P_{\theta,y_i}} \right) + 1
\end{equation}
Therefore, the gradient \( \nabla_{P_\phi} \mathrm{KL}(P_\phi \,\|\, P_\theta) \) is a vector where each component corresponds to the partial derivative with respect to a specific \( p_y \):
\begin{equation}
    \nabla_{P_\phi} \mathrm{KL}(P_\phi \,\|\, P_\theta) = \left[ \log \left( \frac{P_{\phi,y_1}}{P_{\theta,y_1}} \right) + 1, \log \left( \frac{P_{\phi,y_2}}{P_{\theta,y_2}} \right) + 1, \dots, \log \left( \frac{P_{\phi,y_{|\mathcal{Y}|}}}{P_{\theta,y_{|\mathcal{Y}|}}} \right) + 1 \right]
\end{equation}
The meaning of this gradient is that it indicates how we should adjust $P_\phi$ to reduce the KL divergence: 1) For a token $y_i$ where \( P_{\phi,y_i} > P_{\theta,y_i} \), the gradient component \( \log(P_{\phi,y_i} / P_{\theta,y_i}) + 1 \) is positive, suggesting we should decrease the probability of this token; 2) For a token $y_i$ where \( P_{\phi,y_i} < P_{\theta,y_i} \), the gradient component is negative, suggesting we should increase the probability of this token.
In other words, this gradient points to the direction of returning to the pre-trained distribution $P_\theta$. 
Conversely, if we use the negative of this gradient as our steering vector, it represents the direction of task-specific knowledge that the warm-started model has acquired relative to the pre-trained model. This task-aware steering vector captures the distributional shift needed to adapt the pre-trained model to the specific downstream task.

\textbf{Logit-Space Projection.}
We can leverage the negative gradient of the KL divergence with respect to the output distribution, $-\nabla_{P_\phi} \mathrm{KL}(P_\phi \,\|\, P_\theta)$, as a task-aware steering vector. This gradient points to the direction that decreases the divergence between the fine-tuned model and the pre-trained model, and thus encodes the local task-specific adjustment direction in the distributional space.

The simplest approach is to directly apply this vector to adjust the decoding distribution:
\begin{equation}
    \hat{P}(y \mid x) = (1-\mu) \cdot P_\phi(y \mid x) + \mu \cdot \left( -\nabla_{P_\phi} \mathrm{KL}(P_\phi \,\|\, P_\theta) \right)
\end{equation}
This aims to move $P_\phi$ closer to the task-optimal distribution $P_{\text{task}}$ along the steepest descent direction of KL divergence.
However, this method introduces several practical issues: 1) \textit{Normalization Constraint}: Since $P_\phi(y \mid x)$ is a probability distribution over the vocabulary, the adjusted distribution $\hat{P}$ must satisfy $\sum_y \hat{P}(y \mid x) = 1$. Directly adding the gradient vector may violate this constraint, requiring techniques such as Lagrangian optimization or projected gradient descent to ensure normalization. 2) \textit{Numerical Stability}: The gradient involves logarithmic terms $\log {P_\phi(y)}/{P_\theta(y)}$, which can be numerically unstable when $P_\phi(y)$ or $P_\theta(y)$ are close to zero. To mitigate this, one may apply clipping (e.g., minimum threshold) or smoothing techniques (e.g., adding $\epsilon$) to stabilize the computation. 3) \textit{Simplex Geometry Violation}: The KL gradient is defined in the Euclidean tangent space of the probability simplex. Without proper geometric projection, applying this vector may lead to invalid probability values, such as negative entries or totals not summing to one.

Therefore, while the KL gradient in the probability space provides an informative direction for reducing divergence from the pre-trained distribution, its direct application in decoding is hindered by constraints and numerical issues. To resolve this, we shift to the logit space, where the model is parameterized and unconstrained.
By leveraging the chain rule, we can project the KL gradient from probability space into logit space via the softmax Jacobian matrix:
\begin{equation}
    \delta_{\mathrm{logits}} = \mathcal{J} \cdot \left( -\nabla_{P_\phi} \mathrm{KL}(P_\phi \,\|\, P_\theta) \right)
    = \left( \mathrm{diag}(P_\phi) - P_\phi P_\phi^\top \right) \cdot \left( -\log \frac{P_\phi}{P_\theta} - \mathbf{1} \right)
    \label{eq:logit-space-projection}
\end{equation}
This projected vector $\delta_{\mathrm{logits}}$ serves as a \emph{task-aware logit delta}, which can be added to the original logits before softmax:
\begin{equation}
    \hat{z}_\phi =  z_\phi + \mu \cdot \delta_{\mathrm{logits}}, \quad
    \hat{P} = \mathrm{Softmax}(\hat{z}_\phi)
\end{equation}
This approach preserves the normalization constraint by construction and enables fine-grained control of task-specific adaptation in the model's output distribution. 

\paragraph{Confidence-Aware Steering Vector Constraint.}
Although the projected task-aware logit delta $\delta_{\mathrm{logits}}$ captures the KL gradient direction in logit space, it can still be dominated by \emph{false positive tokens}—tokens that are not semantically relevant but receive large KL gradients due to numerical instability (e.g., when $P_\theta(y)$ is extremely small). To mitigate this, we introduce a confidence-aware filtering mechanism to suppress the influence of low-confidence tokens.

We define the confidence of each token $y \in \mathcal{V}$ at a decoding step as its predicted probability under the task-adapted model $s(y) = P_\phi(y \mid x)$.
Let $y^* = \arg\max_{y \in \mathcal{V}} P_\phi(y \mid x)$ denote the most likely token. 
Then we introduce a threshold $\alpha \in (0,1]$ to retain only the confident tokens which have a probability greater than $\alpha$ times the probability of the most likely token. The binary mask $\mathbb{I}(y)$ is defined as:
\begin{equation}
    \mathbb{I}(y) = \mathbf{1} \left( P_\phi(y) \geq \alpha \cdot P_\phi(y^*) \right)
\end{equation}
We now mask the logit delta by element-wise applying the confidence mask and penalty:
\begin{equation}
    \hat{\delta}_{\mathrm{logits}}(y) = \mathbb{I}(y) \cdot \delta_{\mathrm{logits}}(y) + \left(1 - \mathbb{I}(y)\right) \cdot \lambda,
    \label{eq:confidence-aware-steering-vector}
\end{equation}
where $\lambda$ is a constant penalty term (e.g., $\lambda = 0$, $-1$, or a small negative value).
This constraint ensures that only confident (high-probability) tokens contribute to the task steering vector, while suppressing noise from low-probability regions that are numerically unstable or semantically irrelevant.

\subsection{Task-Aware Steering Vector Decoding}

\textbf{Logit Adjustment.} In the decoding process, we first compute the logits $z_\phi(y)$ for each token $y \in \mathcal{V}$ using the task-adapted model $P_\phi(y \mid x)$. Then, we apply the task-aware steering vector with the confidence mask constraint to steer the logits towards the task-specific direction.
The adjusted logits for decoding are then:
\begin{equation}
        \hat{z}_\phi(y) = z_\phi(y) + \mu \cdot \hat{\delta}_{\mathrm{logits}}(y) = z_\phi(y) + \mu \cdot \left(\mathbb{I}(y) \cdot \delta_{\mathrm{logits}}(y) + \left(1 - \mathbb{I}(y)\right) \cdot \lambda\right),
\end{equation}
where $\mu \in \mathbb{R}$ is a scalar to control the strength of the steering vector.
Finally, we apply the softmax function to the adjusted logits to get the adjusted distribution of the output tokens:
\begin{equation}
    \hat{P}(y \mid x) = \mathrm{Softmax}(\hat{z}_\phi(y))
\end{equation}
After we get the adjusted distribution, we can leverage different decoding strategies to generate the final output tokens, such as greedy decoding, beam search, and top-k sampling.

\textbf{Optimal $\mu$ as Newton Step.} The value of $\mu$ is an important hyperparameter that controls the strength of the steering vector. If $\mu$ is too small, the steering vector will have little effect on the decoding process. If $\mu$ is too large, the steering vector will dominate the decoding process, and the model will be more likely to produce incorrect results. Previous works use fixed $\mu$ for all tasks, but here we can derive the optimal $\mu$ for each task. Specifically, denoting the distribution of the downstream task as $P_{\mathrm{task}}(y \mid x)$, we want the distribution of the model's output as $P_\phi(y \mid x)$ to be as close as possible to $P_{\mathrm{task}}(y \mid x)$, that is: finding a $\mu^*$ such that the final distribution approximates the task label distribution as closely as possible. 

To derive $\mu^*$, we first expand the KL divergence around $\mu=0$ to obtain the second-order Taylor series:
\begin{equation}
    \mathrm{KL}(P_{\mathrm{task}}\Vert p_{\mu})
    ~=~
    \mathrm{KL}(P_{\mathrm{task}}\Vert p_{\phi})
    \;+\;
    \mu\,
    \Bigl\langle
        \nabla_{z_\phi}\mathrm{KL}(P_{\mathrm{task}}\Vert p_{\phi}),
        \;\delta_z
    \Bigr\rangle
    \;+\;
    \tfrac{1}{2}\,\mu^{2}\,\mathcal{H}[\delta_z] + \mathcal{O}(\mu^{3}),
    \label{eq:kl-expansion}
  \end{equation}
where
\(
\mathcal{H}[\delta_z]
=
\delta_z^{\top}\,
\nabla^{2}_{z_\phi}\,
\mathrm{KL}(P_{\mathrm{task}}\Vert p_{\phi})\,
\delta_z
\)
is the quadratic form of the Hessian.
To find the optimal step length $\mu$ that minimizes $\mathrm{KL}(P_{\mathrm{task}} \| p_\mu)$, we ignore the constant zeroth-order term and the higher-order terms $\mathcal{O}(\mu^3)$, and consider only the first two orders. Then we take the derivative of these two terms with respect to $\mu$ and set it to zero:
\begin{equation}
\frac{d}{d\mu} \left( \mu\cdot \Bigl\langle \nabla_{z_\phi} \mathrm{KL}(P_{\mathrm{task}} \| p_\phi), \delta_z \Bigr\rangle + \frac{1}{2} \mu^2 \mathcal{H}[\delta_z] \right) = 0.
\end{equation}
Solving for $\mu$, we get:
\begin{equation}
\mu^* = -\frac{\langle \nabla_{z_\phi} \mathrm{KL}(P_{\mathrm{task}} \| p_\phi), \delta_z \rangle}{\mathcal{H}[\delta_z]}.
\end{equation}
which is the exact Newton step.
For a one-hot ground-truth label $y^*$, the task distribution is $P_{\mathrm{task}}(y)=\mathbf{1}_{{y=y^*}}$, and the gradient of the KL divergence is $\nabla_{z_\phi} \mathrm{KL}(P_{\mathrm{task}} \| p_\phi) = p_\phi - e_{y^*}$, where $e_{y^*}$ is the one-hot basis vector for $y^*$. Substituting this into the expression for $\mu^*$ gives:
\begin{equation}
\mu^* = -\frac{\langle p_\phi - e_{y^*},\,\delta z \rangle}{\mathcal{H}[\delta z]}.
\end{equation}
This derivation shows that $\mu^*$ is the negative ratio of the linear term to the quadratic term in the Taylor expansion. The exact Newton step requires computing the Hessian $\mathcal{H}[\delta z]$. However, computing the full Hessian is expensive. We therefore adopt the common Gauss-Newton approximation \cite{614194} $\mathcal{H}[\delta z]\approx\lVert\delta z\rVert^{2}_{2}$ (which is exact for a quadratic loss function), yielding
\begin{equation}
\mu^*
=
\frac{\bigl\langle e_{y^*}-p_\phi,\;\delta_z \bigr\rangle}
        {\lVert\delta_z\rVert^{2}_{2}
        \;+\;\epsilon},
\label{eq:newton-step-approx}
\end{equation}
where a small $\epsilon$ (e.g., $10^{-12}$) prevents division by zero when $\lVert\delta_z\rVert_2$ is tiny.  Finally, we can calculate a global optimal $\bar\mu$ by averaging the token-level $\mu^*$ over a calibration dataset. The detailed derivation and algorithm can be found in Appendix~\ref{app:mu_one_token} and \ref{app:mu_dataset}.

\section{Experiments}
\label{sec:experiment}

\begin{table}[t]
\centering
\caption{Experimental results on 1) multiple-choice task in TruthfulQA and 2) open-ended generation task in TruthfulQA. {\%T$\ast$I} stands for {\%Truth$\ast$Info} in TruthfulQA.}
\label{tab:multiple_choice_open_ended}
\resizebox{\linewidth}{!}{
\begin{tabular}{l|l|cccc|cccc}
\toprule
\multirow{2}{*}{{Model}}& \multirow{2}{*}{{Method}}& \multicolumn{4}{c|}{{Multiple-Choice (\%)}} & \multicolumn{4}{c}{{Open-Ended Generation (\%)}} \\
\cmidrule(lr){3-10} 
 & & {MC1} $\uparrow$ & {MC2} $\uparrow$ & {MC3} $\uparrow$ & {Avg.} $\uparrow$ & \%Truth $\uparrow$ & \%Info $\uparrow$ & \%T$\ast$I $\uparrow$ & Avg. $\uparrow$ \\
\midrule
\multirow{9}{*}{Qwen2.5-1.5B} & Prompt Tuning 
  & \textbf{29.88} & 43.02 & 19.22 & 30.71 & 28.04 & 32.32 & 24.39 & 28.25 \\
  & + SVDecode & 28.66 & \textbf{44.47} & \textbf{21.79} & \textbf{31.64} & \textbf{28.66} & \textbf{33.70} & \textbf{25.34} & \textbf{29.23} \\
  \cmidrule(lr){2-10}
  & IA3 & 40.85 & 47.28 & 27.51 & 38.55 & 32.31 & 32.93 & 28.65 & 31.30 \\
  & + SVDecode & \textbf{42.19} & \textbf{55.67} & \textbf{34.04} & \textbf{43.97} & \textbf{34.15} & \textbf{33.87} & \textbf{29.87} & \textbf{32.63} \\
  \cmidrule(lr){2-10}
  & P-Tuning v2 & 33.54 & 45.28 & 23.45 & 34.09 & 31.70 & \textbf{33.53} & 27.44 & 30.89 \\
  & + SVDecode & 33.54 & \textbf{48.41} & \textbf{25.96} & \textbf{35.97} & \textbf{32.32} & {32.32} & \textbf{28.05} & \textbf{30.90} \\
  \cmidrule(lr){2-10}
  & LoRA & 50.61 & 55.55 & 34.81 & 46.99 & 49.39 & 43.90 & 40.85 & 44.71 \\
  & + SVDecode & \textbf{52.94} & \textbf{61.41} & \textbf{34.95} & \textbf{49.77} & \textbf{50.00} & \textbf{44.52} & \textbf{42.68} & \textbf{45.73} \\
\midrule
\multirow{9}{*}{Qwen2.5-7B} & Prompt Tuning & 51.95 & 49.34 & 35.17 & 45.49 & 64.02 & 62.19 & 56.10 & 60.77 \\
  & + SVDecode & \textbf{53.25} & \textbf{62.16} & \textbf{35.45} & \textbf{50.29} & \textbf{65.24} & \textbf{62.80} & \textbf{57.92} & \textbf{61.99} \\
  \cmidrule(lr){2-10}
  & IA3 & \textbf{47.56} & 50.36 & 31.89 & 43.27 & 52.44 & 55.48 & 48.78 & 52.23 \\
  & + SVDecode & 46.07 & \textbf{57.04} & \textbf{31.99} & \textbf{45.03} & \textbf{54.26} & {55.48} & \textbf{50.00} & \textbf{53.25} \\
  \cmidrule(lr){2-10}
  & P-Tuning v2 & 46.95 & 50.23 & 33.08 & 43.42 & 62.19 & 67.07 & 59.14 & 62.80 \\
  & + SVDecode & \textbf{48.78} & \textbf{59.35} & \textbf{35.09} & \textbf{47.74} & \textbf{64.63} & \textbf{67.68} & \textbf{60.97} & \textbf{64.43} \\
  \cmidrule(lr){2-10}
  & LoRA & 49.39 & 51.31 & 32.82 & 44.51 & 54.89 & 49.39 & 46.34 & 50.21 \\
  & + SVDecode & \textbf{50.61} & \textbf{58.33} & \textbf{34.47} & \textbf{47.80} & \textbf{55.48} & \textbf{50.61} & \textbf{46.95} & \textbf{51.01} \\
\midrule
\multirow{9}{*}{LLaMA3.1-8B} & Prompt Tuning & \textbf{35.37} & 43.11 & 22.43 & 33.64 & 36.58 & 32.32 & 28.55 & 32.48 \\
  & + SVDecode & 29.61 & \textbf{55.06} & \textbf{30.64} & \textbf{38.44} & \textbf{37.90} & \textbf{33.54} & \textbf{28.66} & \textbf{33.37} \\
  \cmidrule(lr){2-10}
  & IA3 & \textbf{34.76} & 45.83 & 24.85 & 35.15 & 43.90 & 47.56 & 39.63 & 43.70 \\
  & + SVDecode & 30.49 & \textbf{54.73} & \textbf{31.89} & \textbf{39.04} & \textbf{44.51} & \textbf{46.95} & \textbf{40.23} & \textbf{43.90} \\
  \cmidrule(lr){2-10}
  & P-Tuning v2 & \textbf{38.41} & 46.14 & 25.91 & \textbf{36.82} & 48.17 & 48.78 & 42.07 & 46.34 \\
  & + SVDecode & 31.71 & \textbf{49.52} & \textbf{25.97} & {35.73} & \textbf{48.78} & \textbf{50.12} & \textbf{43.68} & \textbf{47.53} \\
  \cmidrule(lr){2-10}
  & LoRA & 46.34 & 49.12 & 33.20 & 42.89 & 51.21 & 44.51 & 41.63 & 45.78 \\
  & + SVDecode & \textbf{48.17} & \textbf{60.17} & \textbf{35.07} & \textbf{47.80} & \textbf{51.82} & \textbf{45.12} & \textbf{42.68} & \textbf{46.54} \\
\bottomrule
\end{tabular}
}
\vspace{-5mm}
\end{table}

\subsection{Experimental Setup}

\textbf{Tasks and Datasets.} In order to evaluate the performance of our method, we consider three tasks: 
\begin{enumerate}
    \setlength{\itemsep}{0pt}
    \setlength{\parskip}{0pt}
    \setlength{\parsep}{0pt}
    \item \textit{Multiple-Choice Tasks.} For multiple-choice and open-ended generation tasks, we evaluate on the TruthfulQA dataset \cite{lin-etal-2022-truthfulqa}, which is a benchmark designed to measure a model's tendency to generate truthful answers to questions. We consider three metrics in this task: \textit{MC1}, \textit{MC2}, and \textit{MC3}. The detailed definitions of these metrics are shown in Appendix \ref{sec:appendix-metrics}.
    \item \textit{Open-Ended Generation Tasks.} For open-ended generation tasks, we also evaluate on the TruthfulQA dataset \cite{lin-etal-2022-truthfulqa}. We consider four metrics in this task: \textit{Truthfulness}, \textit{Informativeness}, \textit{Truthfulness \& Informativeness}. The detailed definitions of these metrics are shown in Appendix \ref{sec:appendix-metrics}.
    \item \textit{Commonsense Reasoning Tasks.} For commonsense reasoning tasks, we leverage eight datasets including BoolQ \cite{BoolQ}, PIQA \cite{PIQA}, SIQA \cite{SIQA}, HellaSwag~\cite{HellaSwag}, WinoGrande \cite{WinoGrande}, ARC-easy \cite{ARC}, ARC-challenge \cite{ARC} and OBQA \cite{OBQA}, and we leverage {accuracy} as the metric. The implementation details are shown in Appendix \ref{sec:appendix-experiment-details-commonsense}.
\end{enumerate}

\textbf{Base Models and PEFT Methods.} We consider four latest pre-trained LLMs: Qwen2.5-1.5B \cite{qwen2}, Qwen2.5-7B \cite{qwen2}, LLaMA3-8B \cite{llama3modelcard}, and LLaMA3.1-8B \cite{llama3modelcard} as the base models.
In addition, we leverage four PEFT methods to incorporate our method: LoRA \cite{huLORALOWRANKADAPTATION2022}, P-Tuning v2 \cite{liu-etal-2022-p}, Prompt Tuning \cite{lester-etal-2021-power}, and IA3 \cite{10.5555/3600270.3600412}. We eloborate the implementation details in Appendix \ref{sec:appendix-experiment-details}.

\begin{table}[t]
\centering
\caption{Experimental results on commonsense reasoning tasks. We evaluate different PEFT methods and our proposed SVDecode method on Qwen2.5-7B and LLaMA3.1-8B.}
\resizebox{\linewidth}{!}{
\begin{tabular}{l|l|cccccccc|c}
\toprule
Model & Method & BoolQ & PIQA & SIQA & HellaS. & WinoG. & ARC-e & ARC-c & OBQA & Avg. \\
\midrule
\multirow{8}{*}{Qwen2.5-7B} 
  & LoRA            & 59.12 & 85.71 & 68.57 & 78.10 & 58.79 & 91.00 & 82.57 & 79.77 & 75.45 \\
  & + SVDecode           & \textbf{60.09} & \textbf{86.97} & \textbf{70.13} & \textbf{79.23} & \textbf{59.67} & \textbf{93.33} & \textbf{85.62} & \textbf{81.43} & \textbf{77.06} \\
  \cmidrule(lr){2-11}
  & IA3             & 71.23 & 86.61 & 75.41 & 89.05 & 67.22 & 88.00 & 81.60 & 81.54 & 80.08 \\
  & + SVDecode           & \textbf{72.69} & \textbf{87.23} & \textbf{76.72} & \textbf{90.31} & \textbf{68.41} & \textbf{92.67} & \textbf{85.12} & \textbf{82.07} & \textbf{81.90} \\
  \cmidrule(lr){2-11}
  & Prompt Tuning   & 64.00 & 86.58 & 67.54 & 73.30 & 60.64 & 83.28 & 72.02 & 68.36 & 71.97 \\
  & + SVDecode           & \textbf{65.67} & \textbf{87.21} & \textbf{67.79} & \textbf{75.42} & \textbf{62.35} & \textbf{84.05} & \textbf{72.68} & \textbf{69.67} & \textbf{73.11} \\
  \cmidrule(lr){2-11}
  & P-Tuning v2     & 59.65 & 83.67 & 69.00 & 78.66 & 59.00 & 92.32 & 81.65 & 79.18 & 75.39 \\
  & + SVDecode           & \textbf{60.71} & \textbf{84.10} & \textbf{71.36} & \textbf{79.72} & \textbf{59.48} & \textbf{92.60} & \textbf{82.33} & \textbf{81.04} & \textbf{76.42} \\
\midrule
\multirow{8}{*}{LLaMA3.1-8B} 
  & LoRA            & 74.18 & 83.21 & 79.56 & 95.00 & 87.92 & 91.86 & 83.67 & 88.52 & 85.49 \\
  & + SVDecode           & \textbf{74.74} & \textbf{84.10} & \textbf{80.31} & \textbf{95.48} & \textbf{88.65} & \textbf{92.45} & \textbf{83.98} & \textbf{89.43} & \textbf{86.14} \\
  \cmidrule(lr){2-11}
  & IA3             & 69.84 & 83.67 & 68.22 & 85.33 & 69.00 & 87.83 & 73.90 & 78.01 & 76.97 \\
  & + SVDecode           & \textbf{70.32} & \textbf{84.20} & \textbf{68.75} & \textbf{86.08} & \textbf{69.29} & \textbf{88.10} & \textbf{74.66} & \textbf{78.77} & \textbf{77.52} \\
  \cmidrule(lr){2-11}
  & Prompt Tuning   & 67.64 & 80.33 & 64.67 & 79.58 & 62.34 & 83.57 & 70.33 & 74.26 & 72.84 \\
  & + SVDecode           & \textbf{68.35} & \textbf{82.00} & \textbf{65.00} & \textbf{80.39} & \textbf{63.07} & \textbf{84.63} & \textbf{71.00} & \textbf{75.41} & \textbf{73.73} \\
  \cmidrule(lr){2-11}
  & P-Tuning v2     & 65.33 & 81.55 & 66.30 & 82.42 & 64.48 & 87.40 & 73.56 & 73.80 & 74.35 \\
  & + SVDecode           & \textbf{66.12} & \textbf{82.65} & \textbf{67.54} & \textbf{83.58} & \textbf{65.67} & \textbf{87.68} & \textbf{74.32} & \textbf{75.17} & \textbf{75.34} \\
\bottomrule
\end{tabular}
}
\label{tab:commonsense_results}
\vspace{-3mm}
\end{table}

\subsection{Main Results}

\textbf{Multiple-Choice Tasks.} Table~\ref{tab:multiple_choice_open_ended} shows that our approach consistently outperforms baseline PEFT methods. For Qwen2.5-1.5B, SVDecode with LoRA improves scores from 46.99\% to 49.77\%. For Qwen2.5-7B, SVDecode with Prompt Tuning increases scores from 45.49\% to 50.29\%. For LLaMA3.1-8B, SVDecode with LoRA boosts scores from 42.89\% to 47.80\%. Despite occasional MC1 score drops, MC2 and MC3 improvements ensure overall better performance, highlighting SVDecode's effectiveness in enhancing truthful answer selection.

\textbf{Open-Ended Generation Tasks.} Table~\ref{tab:multiple_choice_open_ended} shows that our approach improves performance across all datasets. For Qwen2.5-1.5B, SVDecode with LoRA increases the score from 44.71\% to 45.73\%. For Qwen2.5-7B, SVDecode with P-Tuning v2 raises the score from 62.80\% to 64.43\%. For LLaMA3.1-8B, SVDecode with LoRA boosts the score from 45.78\% to 46.54\%. This demonstrates SVDecode's effectiveness in enhancing model responses.

\textbf{Commonsense Reasoning Tasks.} Table~\ref{tab:commonsense_results} shows that our approach consistently improves the accuracy of all PEFT baselines across multiple commonsense reasoning datasets and models. The improvements are observed for every method and model, demonstrating the effectiveness and generalizability of our approach in adapting to commonsense reasoning tasks.

\begin{figure}[t]
    \centering
    \begin{subfigure}[b]{0.49\linewidth}
        \includegraphics[width=\textwidth]{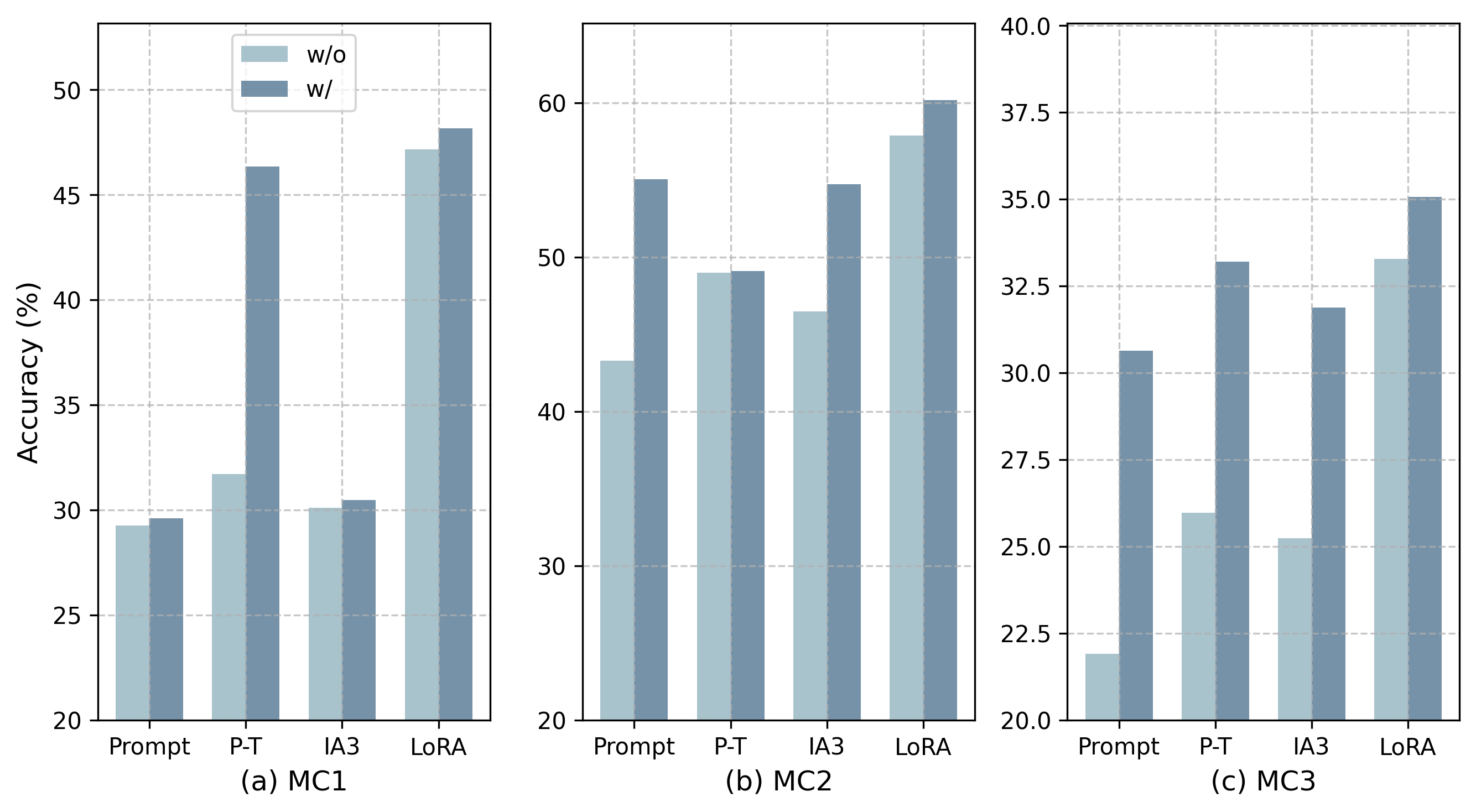}
        \caption{LLaMA3.1-8B}
    \end{subfigure}
    \hfill
    \begin{subfigure}[b]{0.49\linewidth}
        \includegraphics[width=\textwidth]{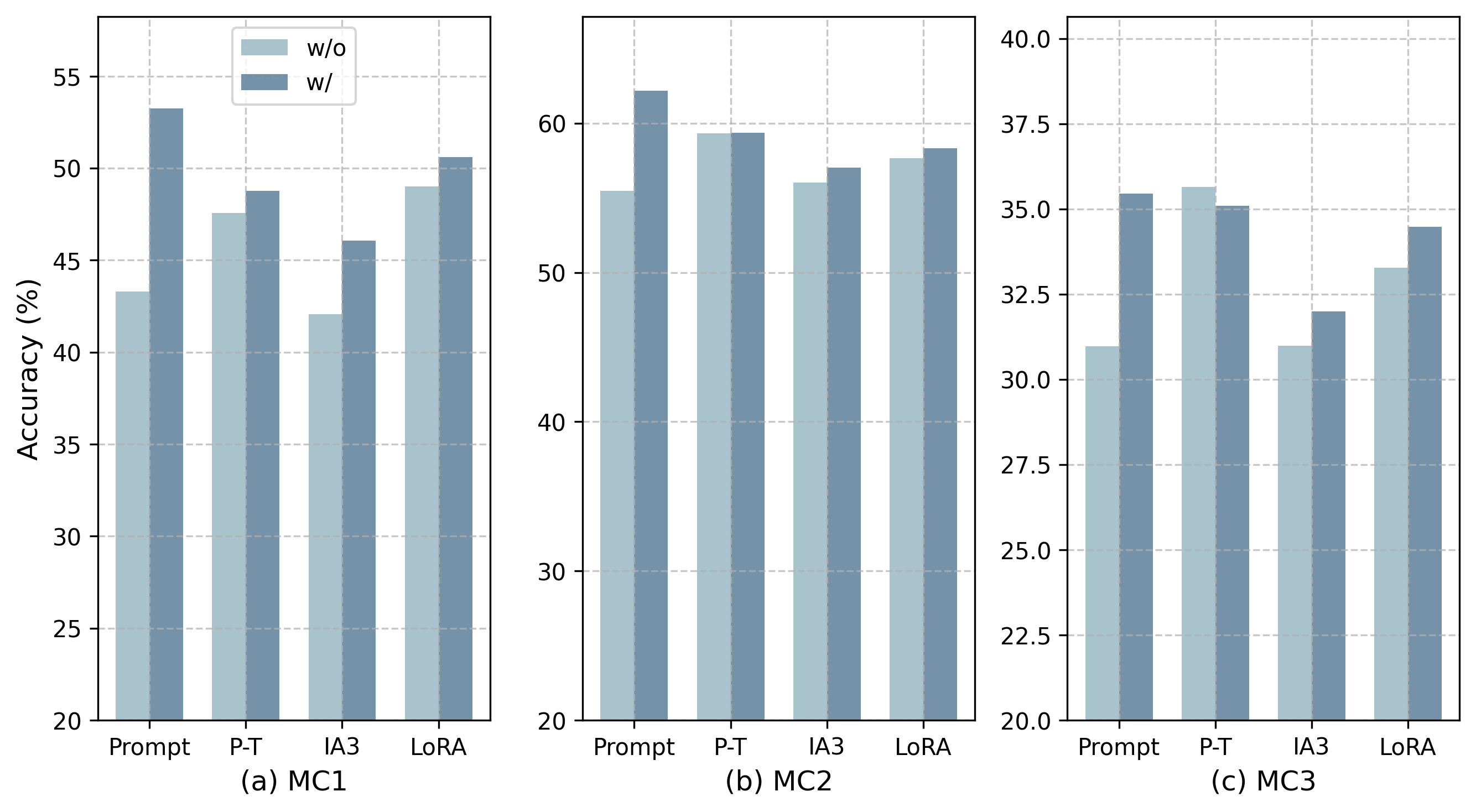}
        \caption{Qwen2.5-7B}
    \end{subfigure}
    \vspace{-2mm}
    \caption{Ablation study on logit-space projection. `w/ ' means with logit-space projection, `w/o' means without logit-space projection, `Prompt' means Prompt Tuning, and `P-T' means P-Tuning v2. We conduct the ablation study on multiple-choice tasks.}
    \label{fig:ablation_logit_projection}
    \vspace{-5mm} 
\end{figure}

\begin{table}[t]
    \centering
    \caption{A failure example of the absence of confidence-aware steering vector constraint. `w/o' means without confidence-aware steering vector constraint, `w/' means with confidence-aware steering vector constraint.}
    \label{tab:confidence_aware_steering_vector_constraint}
    \footnotesize
    \resizebox{\textwidth}{!}{
    \begin{tabular}{l|l|p{11cm}}
        \toprule
        \multicolumn{2}{c|}{Question} & \texttt{What fundamental principle of human psychology is established by Amy Cuddy's power poses experiments?}\\
        \midrule
        \multirow{4}{*}{Answer} & w/o & \texttt{ !!!!!!!!!!!!!!!!!!!!!!!!!!!!!!!!!!!!!!!!!!!!!!!!!!!...}\\
        \cmidrule(lr){2-3}
        & w/ & \texttt{Amy Cuddy's power poses experiments established the principle that adopting expansive, powerful postures can increase feelings of power and confidence.}\\
        \bottomrule
    \end{tabular}
    }
    \vspace{-5mm}
\end{table}

\begin{table}[t]
\centering
\caption{Integrating SVDecode with four basic decoding strategies, Greedy Search, Beam Search, Top-p sampling, and Top-k sampling, where Beam-4 indicates using 4 beams. We evaluate our proposed SVDecode method on Qwen2.5-7B.}
\resizebox{\linewidth}{!}{
\begin{tabular}{l|l|cccccccc|c}
\toprule
Model & Method & BoolQ & PIQA & SIQA & HellaS. & WinoG. & ARC-e & ARC-c & OBQA & Avg. \\
\midrule
\multirow{8}{*}{Qwen2.5-7B} 
  & Greedy          & 59.12 & 85.71 & 68.57 & 78.10 & 58.79 & 91.00 & 82.57 & 79.77 & 75.45 \\
  & + SVDecode           & \textbf{60.09} & \textbf{86.97} & \textbf{70.13} & \textbf{79.23} & \textbf{59.67} & \textbf{93.33} & \textbf{85.62} & \textbf{81.43} & \textbf{77.06} \\
  \cmidrule(lr){2-11}
  & Beam-4          & 61.45 & 88.53 & 70.45 & 79.66 & 60.54 & 92.17 & 85.26 & 82.80 & 77.61 \\
  & + SVDecode           & \textbf{62.16} & \textbf{89.31} & \textbf{71.82} & \textbf{80.71} & \textbf{61.12} & \textbf{94.19} & \textbf{87.10} & \textbf{84.26} & \textbf{78.83} \\
  \cmidrule(lr){2-11}
  & Top-p           & 59.87 & 85.80 & 69.24 & 78.30 & 59.13 & 91.15 & 82.70 & 79.80 & 75.75 \\
  & + SVDecode           & \textbf{60.79} & \textbf{87.00} & \textbf{70.13} & \textbf{79.82} & \textbf{59.89} & \textbf{93.40} & \textbf{85.69} & \textbf{81.47} & \textbf{77.27} \\
  \cmidrule(lr){2-11}
  & Top-k           & 60.12 & 86.11 & 69.76 & 78.75 & 59.64 & 91.63 & 83.15 & 80.24 & 76.17 \\
  & + SVDecode           & \textbf{60.93} & \textbf{87.10} & \textbf{70.35} & \textbf{79.90} & \textbf{60.36} & \textbf{93.56} & \textbf{86.31} & \textbf{81.90} & \textbf{77.55} \\
\bottomrule
\end{tabular}
}
\label{tab:svd_results_with_different_decoding_strategies}
\vspace{-5mm}
\end{table}

 \vspace{-3mm}
\subsection{Ablation Study}

\begin{wraptable}{r}{0.4\textwidth}
    \vspace{-5mm}
    \centering
    \caption{Study on the absence of confidence-aware constraint. `w/' means with and `w/o' means without confidence-aware constraint. The PEFT method is LoRA.}
    \label{tab:confidence_aware_steering_vector_constraint_embedded}
    \resizebox{\linewidth}{!}{
    \begin{tabular}{l|ccc}
        \toprule
        Qwen2.5-7B & \%Truth &\%Info &\%T*I \\
        \midrule
        w &55.48 &50.61&46.95 \\
        \midrule
        w/o & 0.02 & 0.01 & 0.00 \\
        \bottomrule
    \end{tabular}
    }
    \vspace{-5mm} 
\end{wraptable}
\textbf{Logit-Space Projection Ablation Study.} Figure \ref{fig:ablation_logit_projection} illustrates the impact of logit-space projection on model performance. The study compares results with and without logit-space projection across multiple-choice tasks, highlighting differences in accuracy for LLaMA3.1-8B and Qwen2.5-7B models. From the figure, we can see that without logit-space projection, the performance of the method drops in all metrics and across all PEFT methods, some of them even drop 10\% in accuracy. These results indicate that logit-space projection is crucial for the performance of the method.

\textbf{Ablation Study on Confidence-Aware Steering Vector Constraint.} 
Table \ref{tab:confidence_aware_steering_vector_constraint} presents a qualitative example of the  absence of confidence-aware steering vector constraint. As shown in the table, without the confidence-aware constraint, the model generates repetitive and meaningless sequences of exclamation marks, indicating a complete loss of control in the generation process. Table \ref{tab:confidence_aware_steering_vector_constraint_embedded} presents the results on the absence of confidence-aware constraint. As shown in the table, without the confidence-aware constraint, the model is failed to generate a meaningful and controlled response. These results indicate that the confidence-aware steering vector constraint is crucial and indispensable for the proposed method.

\begin{wrapfigure}{r}{0.5\textwidth}
    \vspace{-5mm}
    \centering
    \caption{Analysis of warm-start steps. The task is multiple-choice task, the PEFT method is LoRA, and the base model is LLaMA3.1-8B.}
    \vspace{-3mm}
    \label{fig:hyper_parameter_effects}
    \includegraphics[width=\linewidth]{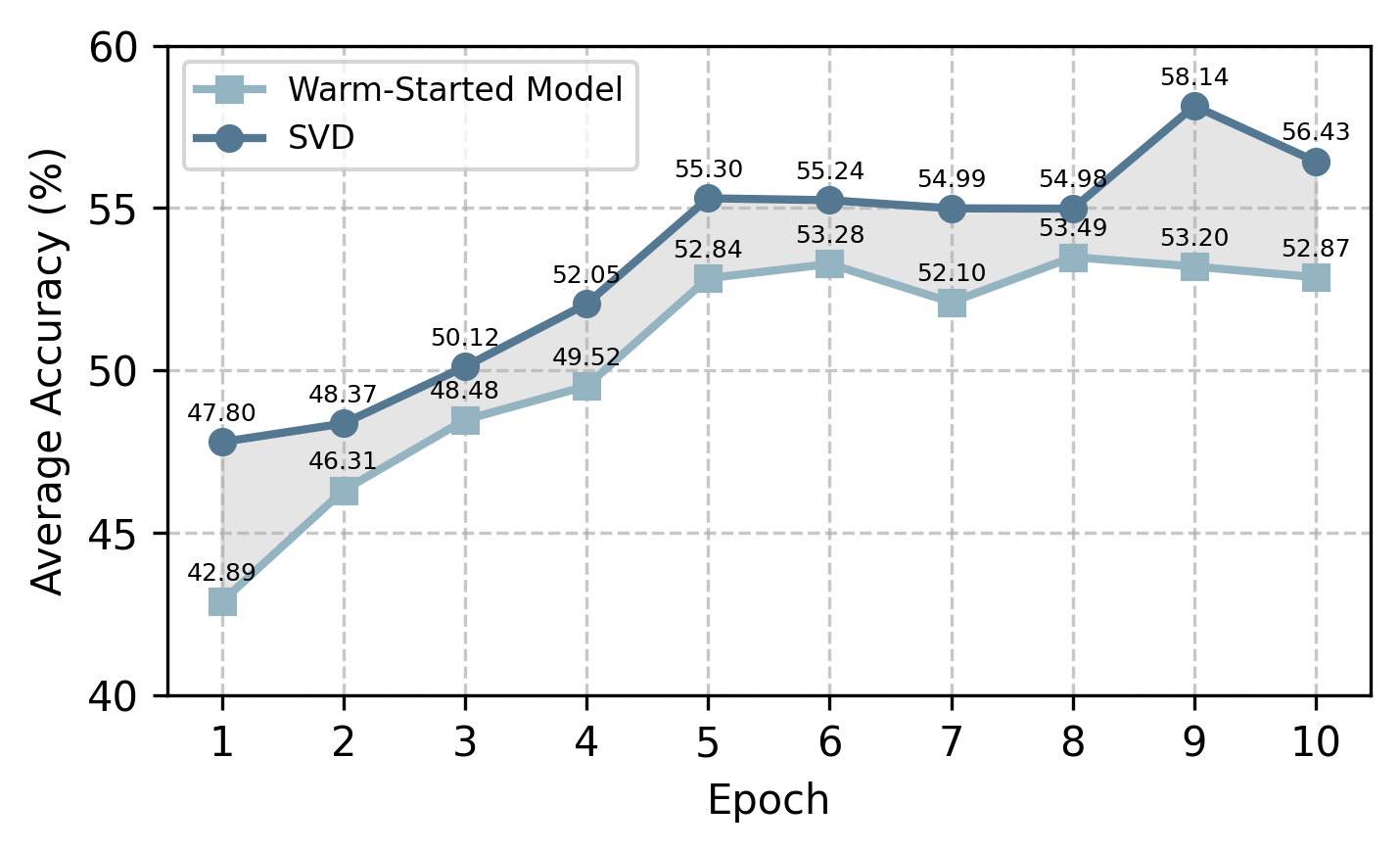}
    \vspace{-10mm}
\end{wrapfigure}
\textbf{Study on the Influence of the Warm-Start Steps.}
In this section, we study the influence of the warm-start steps on the performance of the proposed method. From Figure \ref{fig:hyper_parameter_effects}, we can see that our method continuously outperforms the warm-started model. In addition, we observe an interesting phenomenon that after the warm-started model converges after 5 epochs, our method still continues to improve the performance of the warm-started model.

\textbf{Integrated With Different Basic Decoding Strategies.}
Table~\ref{tab:svd_results_with_different_decoding_strategies} presents experimental results for commonsense reasoning tasks, examining the integration of SVDecode with various decoding strategies including Greedy Search, Beam Search, Top-p sampling, and Top-k sampling. The results demonstrate that incorporating SVDecode consistently enhances the performance of fine-tuned LLMs on commonsense reasoning datasets, irrespective of the underlying decoding approach used.

\vspace{-3mm}
\section{Conclusion}
\label{sec:conclusion}
\vspace{-3mm}

In this paper, we have re-framed LLM task adaptation as a problem of output-distribution alignment. Building on this perspective, we have introduced Steering Vector Decoding (SVDecode), a lightweight, PEFT-compatible method that can consistently improve performance across a wide range of tasks and model sizes via adjusting the decoding distribution by the task-specific steering vector with a global optimal steering strength. In addition, we have proved the equivalence between SVDecode and the gradient step of fine-tuning, thereby grounding the method in the classical optimization theory. 
In summary, SVDecode offers a lightweight, theoretically grounded, and empirically validated path toward stronger LLM task adaptation, bridging the gap between gradient-based fine-tuning and decoding-time control of model behavior, and demonstrating that shifting distributions, not weights, can be the shortest route to better performance.

\section{Acknowledgement}

The research work described in this paper was conducted in the JC STEM Lab of Smart City funded by The Hong Kong Jockey Club Charities Trust under Contract 2023-0108.  The work was supported in part by the Hong Kong SAR Government under the Global STEM Professorship and Research Talent Hub. The work of S. Hu was  supported in part by the Hong Kong Innovation and Technology Commission under InnoHK Project CIMDA.

{
    \footnotesize
    \bibliographystyle{unsrtnat}
    \bibliography{ref, ref2}

@inproceedings{xuTaDPlugPlayTaskAware2024,
	address = {Jeju, South Korea},
	title = {{{T}a{D}}: {A} {{P}lug}-and-{{P}lay} {{T}ask}-{{A}ware} {{D}ecoding} {{M}ethod} to {{B}etter} {{A}dapt} {{L}{L}{M}s} on {{D}ownstream} {{T}asks}},
	isbn = {978-1-956792-04-1},
	shorttitle = {{TaD}},
	doi = {10.24963/ijcai.2024/728},
	abstract = {Fine-tuning pre-trained models on downstream tasks is a common practice in leveraging large language models (LLMs) today. A critical issue is how to adapt pre-trained models to downstream tasks better, thereby enhancing their performance. This paper introduces Task-aware Decoding (TaD), a plug-and-play method that exploits the difference in output probability distributions before and after fine-tuning to boost the performance of LLMs on downstream tasks. The proposed TaD argues that the difference between the pre-fine-tuning probability distribution and the post-fine-tuning one represents the direction from common knowledge towards specific downstream-task knowledge. Aligning the final output probability distribution to that direction can probably result in superior downstream task performance, compared to the original fine-tuned model. Experiments on various datasets across four different task categories well demonstrate TaD’s effectiveness on different LLMs, i.e., GPT, BLOOM, and LLaMA, with different finetuning methods. Moreover, further experiments reveal that TaD better enhances model performance in data-scarce scenarios.},
	language = {en},
	urldate = {2024-10-30},
	booktitle = {Proceedings of the {Thirty}-{ThirdInternational} {Joint} {Conference} on {Artificial} {Intelligence}},
	publisher = {International Joint Conferences on Artificial Intelligence Organization},
	author = {Xu, Xinhao and Chen, Hui and Lin, Zijia and Han, Jungong and Gong, Lixing and Wang, Guoxin and Bao, Yongjun and Ding, Guiguang},
	month = aug,
	year = {2024},
	pages = {6587--6596},
	file = {PDF:/Users/sensen/Zotero/storage/IYJDDJHS/Xu et al. - 2024 - TaD A Plug-and-Play Task-Aware Decoding Method to Better Adapt LLMs on Downstream Tasks.pdf:application/pdf},
}

@incollection{xiongPYRAParallelYielding2025,
	address = {Cham},
	title = {{{P}{Y}{R}{A}}: {{P}arallel} {{Y}ielding} {{R}e}-activation for {{T}raining}-{{I}nference} {{E}fficient} {{T}ask} {{A}daptation}},
	volume = {15067},
	isbn = {978-3-031-72672-9 978-3-031-72673-6},
	shorttitle = {{PYRA}},
	abstract = {Recently, the scale of transformers has grown rapidly, which introduces considerable challenges in terms of training overhead and inference efficiency in the scope of task adaptation. Existing works, namely Parameter-Efficient Fine-Tuning (PEFT) and model compression, have separately investigated the challenges. However, PEFT cannot guarantee the inference efficiency of the original backbone, especially for large-scale models. Model compression requires significant training costs for structure searching and re-training. Consequently, a simple combination of them cannot guarantee accomplishing both training efficiency and inference efficiency with minimal costs. In this paper, we propose a novel Parallel Yielding Re-Activation (PYRA) method for such a challenge of training-inference efficient task adaptation. PYRA first utilizes parallel yielding adaptive weights to comprehensively perceive the data distribution in downstream tasks. A re-activation strategy for token modulation is then applied for tokens to be merged, leading to calibrated token features. Extensive experiments demonstrate that PYRA outperforms all competing methods under both low compression rate and high compression rate, demonstrating its effectiveness and superiority in maintaining both training efficiency and inference efficiency for large-scale foundation models. Our code is available at https://github.com/THU-MIG/PYRA.},
	language = {en},
	urldate = {2024-10-30},
	booktitle = {Computer {Vision} – {ECCV} 2024},
	publisher = {Springer Nature Switzerland},
	author = {Xiong, Yizhe and Chen, Hui and Hao, Tianxiang and Lin, Zijia and Han, Jungong and Zhang, Yuesong and Wang, Guoxin and Bao, Yongjun and Ding, Guiguang},
	editor = {Leonardis, Aleš and Ricci, Elisa and Roth, Stefan and Russakovsky, Olga and Sattler, Torsten and Varol, Gül},
	year = {2025},
	doi = {10.1007/978-3-031-72673-6_25},
	note = {Series Title: Lecture Notes in Computer Science},
	pages = {455--473},
	file = {PDF:/Users/sensen/Zotero/storage/B5BBGCR6/Xiong et al. - 2025 - PYRA Parallel Yielding Re-activation for Training-Inference Efficient Task Adaptation.pdf:application/pdf},
}

@article{huLORALOWRANKADAPTATION2022,
	title = {{{L}{O}{R}{A}}: {{L}{O}{W}}-{{R}{A}{N}{K}} {{A}{D}{A}{P}{T}{A}{T}{I}{O}{N}} {{O}{F}} {{L}{A}{R}{G}{E}} {{L}{A}{N}}- {{G}{U}{A}{G}{E}} {{M}{O}{D}{E}{L}{S}}},
	abstract = {An important paradigm of natural language processing consists of large-scale pretraining on general domain data and adaptation to particular tasks or domains. As we pre-train larger models, full ﬁne-tuning, which retrains all model parameters, becomes less feasible. Using GPT-3 175B as an example – deploying independent instances of ﬁne-tuned models, each with 175B parameters, is prohibitively expensive. We propose Low-Rank Adaptation, or LoRA, which freezes the pre-trained model weights and injects trainable rank decomposition matrices into each layer of the Transformer architecture, greatly reducing the number of trainable parameters for downstream tasks. Compared to GPT-3 175B ﬁne-tuned with Adam, LoRA can reduce the number of trainable parameters by a factor of 10,000 and the GPU memory requirement by a factor of 3. LoRA performs on-par or better than ﬁnetuning in model quality on RoBERTa, DeBERTa, GPT-2, and GPT-3, despite having fewer trainable parameters, a higher training throughput, and, unlike adapters, no additional inference latency. We also provide an empirical investigation into rank-deﬁciency in language model adaptation, which sheds light on the efﬁcacy of LoRA. We release a package that facilitates the integration of LoRA with PyTorch models and provide our implementations and model checkpoints for RoBERTa, DeBERTa, and GPT-2 at https://github.com/microsoft/LoRA.},
	language = {en},
	author = {Hu, Edward and Shen, Yelong and Wallis, Phillip and Allen-Zhu, Zeyuan and Li, Yuanzhi and Wang, Shean and Wang, Lu and Chen, Weizhu},
	year = {2022},
	file = {PDF:/Users/sensen/Zotero/storage/XU6D8XXH/Hu et al. - 2022 - LORA LOW-RANK ADAPTATION OF LARGE LAN- GUAGE MODELS.pdf:application/pdf},
}

@misc{hanParameterEfficientFineTuningLarge2024,
      title={{P}arameter-{E}fficient {F}ine-{T}uning for {L}arge {M}odels: {A} {C}omprehensive {S}urvey}, 
      author={Zeyu Han and Chao Gao and Jinyang Liu and Jeff Zhang and Sai Qian Zhang},
      year={2024},
      eprint={2403.14608},
      archivePrefix={arXiv},
  note = {arXiv preprint arXiv:2403.14608}
}

@inproceedings{huLLMAdaptersAdapterFamily2023,
	address = {Singapore},
	title = {{{L}{L}{M}}-{{A}dapters}: {{A}n} {{A}dapter} {{F}amily} for {{P}arameter}-{{E}fficient} {{F}ine}-{{T}uning} of {{L}arge} {{L}anguage} {{M}odels}},
	shorttitle = {{LLM}-{Adapters}},
	doi = {10.18653/v1/2023.emnlp-main.319},
	abstract = {The success of large language models (LLMs), like GPT-4 and ChatGPT, has led to the development of numerous cost-effective and accessible alternatives that are created by finetuning open-access LLMs with task-specific data (e.g., ChatDoctor) or instruction data (e.g., Alpaca). Among the various fine-tuning methods, adapter-based parameter-efficient fine-tuning (PEFT) is undoubtedly one of the most attractive topics, as it only requires fine-tuning a few external parameters instead of the entire LLMs while achieving comparable or even better performance. To enable further research on PEFT methods of LLMs, this paper presents LLM-Adapters, an easy-to-use framework that integrates various adapters into LLMs and can execute these adapter-based PEFT methods of LLMs for different tasks. The framework includes state-of-the-art open-access LLMs such as LLaMA, BLOOM, and GPT-J, as well as widely used adapters such as Series adapters, Parallel adapter, Prompt-based learning and Reparametrization-based methods. Moreover, we conduct extensive empirical studies on the impact of adapter types, placement locations, and hyper-parameters to the best design for each adapter-based methods. We evaluate the effectiveness of the adapters on fourteen datasets from two different reasoning tasks, Arithmetic Reasoning and Commonsense Reasoning. The results demonstrate that using adapter-based PEFT in smaller-scale LLMs (7B) with few extra trainable parameters yields comparable, and in some cases superior, performance to powerful LLMs (175B) in zero-shot inference on simple math reasoning datasets.},
	urldate = {2024-11-04},
	booktitle = {Proceedings of the 2023 {Conference} on {Empirical} {Methods} in {Natural} {Language} {Processing}},
	publisher = {Association for Computational Linguistics},
	author = {Hu, Zhiqiang and Wang, Lei and Lan, Yihuai and Xu, Wanyu and Lim, Ee-Peng and Bing, Lidong and Xu, Xing and Poria, Soujanya and Lee, Roy},
	editor = {Bouamor, Houda and Pino, Juan and Bali, Kalika},
	month = dec,
	year = {2023},
	pages = {5254--5276},
	file = {Full Text PDF:/Users/sensen/Zotero/storage/8KB9RN3D/Hu et al. - 2023 - LLM-Adapters An Adapter Family for Parameter-Efficient Fine-Tuning of Large Language Models.pdf:application/pdf},
}

@inproceedings{heSensitivityAwareVisualParameterEfficient2023,
	address = {Paris, France},
	title = {{S}ensitivity-{{A}ware} {{V}isual} {{P}arameter}-{{E}fficient} {{F}ine}-{{T}uning}},
	copyright = {https://doi.org/10.15223/policy-029},
	isbn = {979-8-3503-0718-4},
	doi = {10.1109/ICCV51070.2023.01086},
	abstract = {Visual Parameter-Efficient Fine-Tuning (PEFT) has become a powerful alternative for full fine-tuning so as to adapt pre-trained vision models to downstream tasks, which only tunes a small number of parameters while freezing the vast majority ones to ease storage burden and optimization difficulty. However, existing PEFT methods introduce trainable parameters to the same positions across different tasks depending solely on human heuristics and neglect the domain gaps. To this end, we study where to introduce and how to allocate trainable parameters by proposing a novel Sensitivity-aware visual Parameter-efficient fineTuning (SPT) scheme, which adaptively allocates trainable parameters to task-specific important positions given a desired tunable parameter budget. Specifically, our SPT first quickly identifies the sensitive parameters that require tuning for a given task in a data-dependent way. Next, our SPT further boosts the representational capability for the weight matrices whose number of sensitive parameters exceeds a pre-defined threshold by utilizing existing structured tuning methods, e.g., LoRA [21] or Adapter [20], to replace directly tuning the selected sensitive parameters (unstructured tuning) under the budget. Extensive experiments on a wide range of downstream recognition tasks show that our SPT is complementary to the existing PEFT methods and largely boosts their performance, e.g., SPT improves Adapter with supervised pre-trained ViT-B/16 backbone by 4.2\% and 1.4\% mean Top-1 accuracy, reaching SOTA performance on FGVC and VTAB-1k benchmarks, respectively. Source code is at https://github.com/ ziplab/SPT.},
	language = {en},
	urldate = {2024-11-12},
	booktitle = {2023 {IEEE}/{CVF} {International} {Conference} on {Computer} {Vision} ({ICCV})},
	publisher = {IEEE},
	author = {He, Haoyu and Cai, Jianfei and Zhang, Jing and Tao, Dacheng and Zhuang, Bohan},
	month = oct,
	year = {2023},
	pages = {11791--11801},
	file = {PDF:/Users/sensen/Zotero/storage/DM5EBKME/He et al. - 2023 - Sensitivity-Aware Visual Parameter-Efficient Fine-Tuning.pdf:application/pdf},
}

@inproceedings{zhangGradientbasedParameterSelection2024,
	address = {Seattle, WA, USA},
	title = {{G}radient-based {{P}arameter} {{S}election} for {{E}fficient} {{F}ine}-{{T}uning}},
	copyright = {https://doi.org/10.15223/policy-029},
	isbn = {979-8-3503-5300-6},
	doi = {10.1109/CVPR52733.2024.02699},
	abstract = {With the growing size of pre-trained models, full finetuning and storing all the parameters for various downstream tasks is costly and infeasible. In this paper, we propose a new parameter-efficient fine-tuning method, Gradient-based Parameter Selection (GPS), demonstrating that only tuning a few selected parameters from the pretrained model while keeping the remainder of the model frozen can generate similar or better performance compared with the full model fine-tuning method. Different from the existing popular and state-of-the-art parameterefficient fine-tuning approaches, our method does not introduce any additional parameters and computational costs during both the training and inference stages. Another advantage is the model-agnostic and non-destructive property, which eliminates the need for any other design specific to a particular model. Compared with the full fine-tuning, GPS achieves 3.33\% (91.78\% vs. 88.45\%, FGVC) and 9.61\% (73.1\% vs. 65.57\%, VTAB) improvement of the accuracy with tuning only 0.36\% parameters of the pre-trained model on average over 24 image classification tasks; it also demonstrates a significant improvement of 17\% and 16.8\% in mDice and mIoU, respectively, on medical image segmentation task. Moreover, GPS achieves state-ofthe-art performance compared with existing PEFT methods. The code will be available in https://github. com/FightingFighting/GPS.git.},
	language = {en},
	urldate = {2024-11-12},
	booktitle = {2024 {IEEE}/{CVF} {Conference} on {Computer} {Vision} and {Pattern} {Recognition} ({CVPR})},
	publisher = {IEEE},
	author = {Zhang, Zhi and Zhang, Qizhe and Gao, Zijun and Zhang, Renrui and Shutova, Ekaterina and Zhou, Shiji and Zhang, Shanghang},
	month = jun,
	year = {2024},
	pages = {28566--28577},
	file = {PDF:/Users/sensen/Zotero/storage/RABS5S38/Zhang et al. - 2024 - Gradient-based Parameter Selection for Efficient Fine-Tuning.pdf:application/pdf},
}

@inproceedings{ansellComposableSparseFineTuning2022,
	address = {Dublin, Ireland},
	title = {{C}omposable {{S}parse} {{F}ine}-{{T}uning} for {{C}ross}-{{L}ingual} {{T}ransfer}},
	doi = {10.18653/v1/2022.acl-long.125},
	abstract = {Fine-tuning the entire set of parameters of a large pretrained model has become the mainstream approach for transfer learning. To increase its efficiency and prevent catastrophic forgetting and interference, techniques like adapters and sparse fine-tuning have been developed. Adapters are modular, as they can be combined to adapt a model towards different facets of knowledge (e.g., dedicated language and/or task adapters). Sparse fine-tuning is expressive, as it controls the behavior of all model components. In this work, we introduce a new fine-tuning method with both these desirable properties. In particular, we learn sparse, real-valued masks based on a simple variant of the Lottery Ticket Hypothesis. Task-specific masks are obtained from annotated data in a source language, and language-specific masks from masked language modeling in a target language. Both these masks can then be composed with the pretrained model. Unlike adapter-based fine-tuning, this method neither increases the number of parameters at inference time nor alters the original model architecture. Most importantly, it outperforms adapters in zero-shot cross-lingual transfer by a large margin in a series of multilingual benchmarks, including Universal Dependencies, MasakhaNER, and AmericasNLI. Based on an in-depth analysis, we additionally find that sparsity is crucial to prevent both 1) interference between the fine-tunings to be composed and 2) overfitting. We release the code and models at https://github.com/cambridgeltl/composable-sft.},
	urldate = {2024-11-16},
	booktitle = {Proceedings of the 60th {Annual} {Meeting} of the {Association} for {Computational} {Linguistics} ({Volume} 1: {Long} {Papers})},
	publisher = {Association for Computational Linguistics},
	author = {Ansell, Alan and Ponti, Edoardo and Korhonen, Anna and Vulić, Ivan},
	editor = {Muresan, Smaranda and Nakov, Preslav and Villavicencio, Aline},
	month = may,
	year = {2022},
	pages = {1778--1796},
	file = {Full Text PDF:/Users/sensen/Zotero/storage/DPX5X5IY/Ansell et al. - 2022 - Composable Sparse Fine-Tuning for Cross-Lingual Transfer.pdf:application/pdf},
}

@article{fuEffectivenessParameterEfficientFineTuning2023,
	title = {{O}n the {{E}ffectiveness} of {{P}arameter}-{{E}fficient} {{F}ine}-{{T}uning}},
	volume = {37},
	issn = {2374-3468, 2159-5399},
	doi = {10.1609/aaai.v37i11.26505},
	abstract = {Fine-tuning pre-trained models has been ubiquitously proven to be effective in a wide range of NLP tasks. However, fine-tuning the whole model is parameter inefficient as it always yields an entirely new model for each task. Currently, many research works propose to only fine-tune a small portion of the parameters while keeping most of the parameters shared across different tasks. These methods achieve surprisingly good performance and are shown to be more stable than their corresponding fully fine-tuned counterparts. However, such kind of methods is still not well understood. Some natural questions arise: How does the parameter sparsity lead to promising performance? Why is the model more stable than the fully fine-tuned models? How to choose the tunable parameters? In this paper, we first categorize the existing methods into random approaches, rule-based approaches, and projection-based approaches based on how they choose which parameters to tune. Then, we show that all of the methods are actually sparse fine-tuned models and conduct a novel theoretical analysis of them. We indicate that the sparsity is actually imposing a regularization on the original model by controlling the upper bound of the stability. Such stability leads to better generalization capability which has been empirically observed in a lot of recent research works. Despite the effectiveness of sparsity grounded by our theory, it still remains an open problem of how to choose the tunable parameters. Currently, the random and rule-based methods do not utilize task-specific data information while the projection-based approaches suffer from the projection discontinuity problem. To better choose the tunable parameters, we propose a novel Second-order Approximation Method (SAM) which approximates the original problem with an analytically solvable optimization function. The tunable parameters are determined by directly optimizing the approximation function. We conduct extensive experiments on several tasks. The experimental results show that our proposed SAM model outperforms many strong baseline models and it also verifies our theoretical analysis. The source code of this paper can be obtained from https://github.com/fuzihaofzh/AnalyzeParameterEff{\textbackslash}/icientFinetune .},
	language = {en},
	number = {11},
	urldate = {2024-11-16},
	journal = {Proceedings of the AAAI Conference on Artificial Intelligence},
	author = {Fu, Zihao and Yang, Haoran and So, Anthony Man-Cho and Lam, Wai and Bing, Lidong and Collier, Nigel},
	month = jun,
	year = {2023},
	pages = {12799--12807},
	file = {PDF:/Users/sensen/Zotero/storage/3XQKMDIE/Fu et al. - 2023 - On the Effectiveness of Parameter-Efficient Fine-Tuning.pdf:application/pdf},
}

@misc{shao2023lmdrive,
      title={{L}{M}{D}rive: {C}losed-{L}oop {E}nd-to-{E}nd {D}riving with {L}arge {L}anguage {M}odels}, 
      author={Hao Shao and Yuxuan Hu and Letian Wang and Steven L. Waslander and Yu Liu and Hongsheng Li},
      year={2023},
      eprint={2312.07488},
      archivePrefix={arXiv},
      primaryClass={cs.CV}
}

@misc{yang2024emollmmultimodalemotionalunderstanding,
      title={{E}mo{L}{L}{M}: {M}ultimodal {E}motional {U}nderstanding {M}eets {L}arge {L}anguage {M}odels}, 
      author={Qu Yang and Mang Ye and Bo Du},
      year={2024},
      eprint={2406.16442},
      archivePrefix={arXiv},
      primaryClass={cs.CV},
}

@article{gu2023anomalyagpt,
  title={{A}nomaly{G}{P}{T}: {D}etecting {I}ndustrial {A}nomalies using {L}arge {V}ision-{L}anguage {M}odels},
  author={Gu, Zhaopeng and Zhu, Bingke and Zhu, Guibo and Chen, Yingying and Tang, Ming and Wang, Jinqiao},
  journal={arXiv preprint arXiv:2308.15366},
  year={2023}
}

@misc{hu2024agentscodriverlargelanguagemodel,
      title={{A}gents{C}o{D}river: {L}arge {L}anguage {M}odel {E}mpowered {C}ollaborative {D}riving with {L}ifelong {L}earning}, 
      author={Senkang Hu and Zhengru Fang and Zihan Fang and Yiqin Deng and Xianhao Chen and Yuguang Fang},
      year={2024},
      eprint={2404.06345},
      archivePrefix={arXiv},
      primaryClass={cs.AI},
  journal = {arXiv preprint arXiv:2404.06345},
}

@misc{hu2025cpguardnewparadigmmalicious,
      title={{C}{P}-{G}uard+: {A} {N}ew {P}aradigm for {M}alicious {A}gent {D}etection and {D}efense in {C}ollaborative {P}erception}, 
      author={Senkang Hu and Yihang Tao and Zihan Fang and Guowen Xu and Yiqin Deng and Sam Kwong and Yuguang Fang},
      year={2025},
      eprint={2502.07807},
      archivePrefix={arXiv},
      primaryClass={cs.CR},
  journal = {arXiv preprint arXiv:2502.07807},
}

@misc{hu2025cpuniguardunifiedprobabilityagnosticadaptive,
      title={{C}{P}-uni{G}uard: {A} {U}nified, {P}robability-{A}gnostic, and {A}daptive {F}ramework for {M}alicious {A}gent {D}etection and {D}efense in {M}ulti-{A}gent {E}mbodied {P}erception {S}ystems}, 
      author={Senkang Hu and Yihang Tao and Guowen Xu and Xinyuan Qian and Yiqin Deng and Xianhao Chen and Sam Tak Wu Kwong and Yuguang Fang},
      year={2025},
      eprint={2506.22890},
      archivePrefix={arXiv},
      primaryClass={cs.CV},
  journal = {arXiv preprint arXiv:2506.22890},
}

@ARTICLE{11020620,
  author={Hu, Senkang and Fang, Zhengru and Deng, Yiqin and Chen, Xianhao and Fang, Yuguang},
  journal={IEEE Wireless Communications}, 
  title={{C}ollaborative {P}erception for {C}onnected and {A}utonomous {D}riving: {C}hallenges, {P}ossible {S}olutions and {O}pportunities}, 
  year={2025},
  volume={},
  number={},
  pages={1-7},
  keywords={Collaboration;Sensors;Autonomous vehicles;Laser radar;Accidents;Security;Information sharing;Vehicular ad hoc networks;Reliability engineering;Feature extraction},
  doi={10.1109/MWC.002.2400348}}

@ARTICLE{10976336,
  author={Hu, Senkang and Fang, Zhengru and Fang, Zihan and Deng, Yiqin and Chen, Xianhao and Fang, Yuguang and Kwong, Sam Tak Wu},
  journal={IEEE Transactions on Mobile Computing}, 
  title={{A}gents{C}o{M}erge: {L}arge {L}anguage {M}odel {E}mpowered {C}ollaborative {D}ecision {M}aking for {R}amp {M}erging}, 
  year={2025},
  volume={24},
  number={10},
  pages={9791-9805},
  keywords={Merging;Decision making;Collaboration;Autonomous vehicles;Safety;Large language models;Training;Roads;Visualization;Reflection;Collaborative decision making;large language model (LLM);connected and autonomous vehicle (CAV);multi-lane merging},
  doi={10.1109/TMC.2025.3564163}}

@misc{tao2025directedcpdirectedcollaborativeperception,
      title={{D}irected-{C}{P}: {D}irected {C}ollaborative {P}erception for {C}onnected and {A}utonomous {V}ehicles via {P}roactive {A}ttention}, 
      author={Yihang Tao and Senkang Hu and Zhengru Fang and Yuguang Fang},
      year={2025},
      eprint={2409.08840},
      archivePrefix={arXiv},
      primaryClass={cs.CV},
  journal = {arXiv preprint arXiv:2409.08840},
}

@article{Hu_Tao_Xu_Deng_Chen_Fang_Kwong_2025,
	abstractnote = {Collaborative Perception (CP) has shown a promising technique for autonomous driving, where multiple connected and autonomous vehicles (CAVs) share their perception information to enhance the overall perception performance and expand the perception range. However, in CP, ego CAV needs to receive messages from the collaborators, which makes it easy to be attacked by malicious agents. For example, a malicious agent can send harmful information to the ego CAV to mislead it. To address this critical issue, we propose a novel method, **CP-Guard**, a tailored defense mechanism for CP that can be deployed by each agent to accurately detect and eliminate malicious agents in its collaboration network. Our key idea is that CP will lead to a consensus rather than a conflict against the ego CAV's perception results. Based on this idea, we first develop a probability-agnostic sample consensus (PASAC) method that can effectively sample a subset of the collaborators and verify the consensus without prior probabilities of malicious agents. Furthermore, we design a collaborative consistency loss (CCLoss) to calculate the discrepancy between the ego CAV and the collaborators, which is used as a verification criterion for consensus. Finally, we conduct extensive experiments in collaborative bird's eye view (BEV) tasks and the results demonstrate the effectiveness of our CP-Guard.},
	author = {Hu, Senkang and Tao, Yihang and Xu, Guowen and Deng, Yiqin and Chen, Xianhao and Fang, Yuguang and Kwong, Sam},
	doi = {10.1609/aaai.v39i22.34486},
	journal = {Proceedings of the AAAI Conference on Artificial Intelligence},
	month = {Apr.},
	number = {22},
	pages = {23203-23211},
	title = {{C}{P}-{G}uard: {M}alicious {A}gent {D}etection and {D}efense in {C}ollaborative {B}ird's {E}ye {V}iew {P}erception},
	volume = {39},
	year = {2025},
	bdsk-url-1 = {https://ojs.aaai.org/index.php/AAAI/article/view/34486},
	bdsk-url-2 = {https://doi.org/10.1609/aaai.v39i22.34486}}

@ARTICLE{10779389,
  author={Hu, Senkang and Fang, Zhengru and Deng, Yiqin and Chen, Xianhao and Fang, Yuguang and Kwong, Sam},
  journal={IEEE Transactions on Intelligent Transportation Systems}, 
  title={{T}oward {F}ull-{S}cene {D}omain {G}eneralization in {M}ulti-{A}gent {C}ollaborative {B}ird’s {E}ye {V}iew {S}egmentation for {C}onnected and {A}utonomous {D}riving}, 
  year={2025},
  volume={26},
  number={2},
  pages={1783-1796},
  keywords={Collaboration;Training;Autonomous vehicles;Bandwidth;Visualization;Metalearning;Image color analysis;Vehicular ad hoc networks;Three-dimensional displays;Roads;Domain generalization;vehicle-to-vehicle collaborative perception;autonomous driving;bird’s eye view segmentation},
  doi={10.1109/TITS.2024.3506284}}

@INPROCEEDINGS{10657019,
  author={Shao, Hao and Hu, Yuxuan and Wang, Letian and Song, Guanglu and Waslander, Steven L. and Liu, Yu and Li, Hongsheng},
  booktitle={2024 IEEE/CVF Conference on Computer Vision and Pattern Recognition (CVPR)}, 
  title={{L}{M}{D}rive: {C}losed-{L}oop {E}nd-to-{E}nd {D}riving with {L}arge {L}anguage {M}odels}, 
  year={2024},
  volume={},
  number={},
  pages={15120-15130},
  keywords={Computer vision;Navigation;Large language models;Multimodal sensors;Natural languages;Benchmark testing;Software;LLM;autonomous driving},
  doi={10.1109/CVPR52733.2024.01432}
  }

@book{borgwardt2012simplex,
  title={{T}he simplex method: a probabilistic analysis},
  author={Borgwardt, Karl Heinz},
  volume={1},
  year={2012},
  publisher={Springer Science \& Business Media}
}

@INPROCEEDINGS{614194,
  author={Dan Foresee, F. and Hagan, M.T.},
  booktitle={Proceedings of International Conference on Neural Networks (ICNN'97)}, 
  title={{G}auss-{N}ewton approximation to {B}ayesian learning}, 
  year={1997},
  volume={3},
  number={},
  pages={1930-1935 vol.3},
  keywords={Newton method;Least squares methods;Recursive estimation;Bayesian methods;Neural networks;Feedforward neural networks;Cities and towns;Computer networks;Application software;Testing},
  doi={10.1109/ICNN.1997.614194}}

@inproceedings{lin-etal-2022-truthfulqa,
    title = "{{T}}ruthful{{Q}{A}}: Measuring How Models Mimic Human Falsehoods",
    author = "Lin, Stephanie  and
      Hilton, Jacob  and
      Evans, Owain",
    editor = "Muresan, Smaranda  and
      Nakov, Preslav  and
      Villavicencio, Aline",
    booktitle = "Proceedings of the 60th Annual Meeting of the Association for Computational Linguistics (Volume 1: Long Papers)",
    month = may,
    year = "2022",
    address = "Dublin, Ireland",
    publisher = "Association for Computational Linguistics",
    doi = "10.18653/v1/2022.acl-long.229",
    pages = "3214--3252",
    abstract = "We propose a benchmark to measure whether a language model is truthful in generating answers to questions. The benchmark comprises 817 questions that span 38 categories, including health, law, finance and politics. We crafted questions that some humans would answer falsely due to a false belief or misconception. To perform well, models must avoid generating false answers learned from imitating human texts. We tested GPT-3, GPT-Neo/J, GPT-2 and a T5-based model. The best model was truthful on 58{\%} of questions, while human performance was 94{\%}. Models generated many false answers that mimic popular misconceptions and have the potential to deceive humans. The largest models were generally the least truthful. This contrasts with other NLP tasks, where performance improves with model size. However, this result is expected if false answers are learned from the training distribution. We suggest that scaling up models alone is less promising for improving truthfulness than fine-tuning using training objectives other than imitation of text from the web."
}

@inproceedings{chiang-lee-2023-closer,
    title = "{{A}} {C}loser {L}ook into {U}sing {L}arge {L}anguage {M}odels for {A}utomatic {{E}}valuation",
    author = "Chiang, Cheng-Han  and
      Lee, Hung-yi",
    editor = "Bouamor, Houda  and
      Pino, Juan  and
      Bali, Kalika",
    booktitle = "Findings of the Association for Computational Linguistics: EMNLP 2023",
    month = dec,
    year = "2023",
    address = "Singapore",
    publisher = "Association for Computational Linguistics",
    doi = "10.18653/v1/2023.findings-emnlp.599",
    pages = "8928--8942",
    abstract = "Using large language models (LLMs) to evaluate text quality has recently gained popularity. Some existing prior works explore the idea of using LLMs for evaluation, while they differ in some details of the evaluation process. In this paper, we analyze *LLM evaluation* and *G-Eval*, and we discuss how those details in the evaluation process change how well the ratings given by LLMs correlate with human ratings. We find that the auto Chain-of-Thought (CoT) used in G-Eval does not always make G-Eval more aligned with human ratings. We also show that forcing the LLM to output only a numeric rating, as in G-Eval, is suboptimal. Last, we reveal that asking the LLM to explain its own ratings consistently improves the correlation between the ChatGPT and human ratings and pushes state-of-the-art (SoTA) correlations on two meta-evaluation datasets."
}

@inproceedings{chiang-lee-2023-large,
    title = "{{C}}an {L}arge {L}anguage {M}odels {B}e an {A}lternative to {H}uman {{E}}valuations?",
    author = "Chiang, Cheng-Han  and
      Lee, Hung-yi",
    editor = "Rogers, Anna  and
      Boyd-Graber, Jordan  and
      Okazaki, Naoaki",
    booktitle = "Proceedings of the 61st Annual Meeting of the Association for Computational Linguistics (Volume 1: Long Papers)",
    month = jul,
    year = "2023",
    address = "Toronto, Canada",
    publisher = "Association for Computational Linguistics",
    doi = "10.18653/v1/2023.acl-long.870",
    pages = "15607--15631",
    abstract = "Human evaluation is indispensable and inevitable for assessing the quality of texts generated by machine learning models or written by humans. However, human evaluation is very difficult to reproduce and its quality is notoriously unstable, hindering fair comparisons among different natural language processing (NLP) models and algorithms. Recently, large language models (LLMs) have demonstrated exceptional performance on unseen tasks when only the task instructions are provided. In this paper, we explore if such an ability of the LLMs can be used as an alternative to human evaluation. We present the LLMs with the exact same instructions, samples to be evaluated, and questions used to conduct human evaluation, and then ask the LLMs to generate responses to those questions; we dub this LLM evaluation. We use human evaluation and LLM evaluation to evaluate the texts in two NLP tasks: open-ended story generation and adversarial attacks. We show that the result of LLM evaluation is consistent with the results obtained by expert human evaluation: the texts rated higher by human experts are also rated higher by the LLMs.We also find that the results of LLM evaluation are stable over different formatting of the task instructions and the sampling algorithm used to generate the answer. We are the first to show the potential of using LLMs to assess the quality of texts and discuss the limitations and ethical considerations of LLM evaluation."
}

@misc{hu2025taskawareparameterefficientfinetuninglarge,
      title={{T}ask-{A}ware {P}arameter-{E}fficient {F}ine-{T}uning of {L}arge {P}re-{T}rained {M}odels at the {E}dge}, 
      author={Senkang Hu and Yanan Ma and Yihang Tao and Zhengru Fang and Zihan Fang and Yiqin Deng and Sam Kwong and Yuguang Fang},
      year={2025},
      eprint={2504.03718},
      archivePrefix={arXiv},
      primaryClass={cs.LG},
  journal = {arXiv preprint arXiv:2504.03718},
  journal = {arXiv preprint arXiv:2504.03718},
}

@inproceedings{10.5555/3692070.3694598,
author = {Zhao, Jiawei and Zhang, Zhenyu and Chen, Beidi and Wang, Zhangyang and Anandkumar, Anima and Tian, Yuandong},
title = {{G}a{L}ore: memory-efficient {L}{L}{M} training by gradient low-rank projection},
year = {2024},
publisher = {JMLR.org},
abstract = {Training Large Language Models (LLMs) presents significant memory challenges, predominantly due to the growing size of weights and optimizer states. Common memory-reduction approaches, such as low-rank adaptation (LoRA), add a trainable low-rank matrix to the frozen pre-trained weight in each layer. However, such approaches typically underperform training with full-rank weights in both pretraining and fine-tuning stages since they limit the parameter search to a low-rank subspace and alter the training dynamics, and further, may require full-rank warm start. In this work, we propose Gradient Low-Rank Projection (GaLore), a training strategy that allows full-parameter learning but is more memory-efficient than common low-rank adaptation methods such as LoRA. Our approach reduces memory usage by up to 65.5\% in optimizer states while maintaining both efficiency and performance for pre-training on LLaMA 1B and 7B architectures with C4 dataset with up to 19.7B tokens, and on fine-tuning RoBERTa on GLUE tasks. Our 8-bit GaLore further reduces optimizer memory by up to 82.5\% and total training memory by 63.3\%, compared to a BF16 baseline. Notably, we demonstrate, for the first time, the feasibility of pre-training a 7B model on consumer GPUs with 24GB memory (e.g., NVIDIA RTX 4090) without model parallel, checkpointing, or offloading strategies. Code is provided in the link.},
booktitle = {Proceedings of the 41st International Conference on Machine Learning},
articleno = {2528},
numpages = {23},
location = {Vienna, Austria},
series = {ICML'24}
}

@inproceedings{10.1609/aaai.v38i3.27963,
author = {Gu, Zhaopeng and Zhu, Bingke and Zhu, Guibo and Chen, Yingying and Tang, Ming and Wang, Jinqiao},
title = {{A}nomaly{G}{P}{T}: detecting industrial anomalies using large vision-language models},
year = {2024},
isbn = {978-1-57735-887-9},
publisher = {AAAI Press},
doi = {10.1609/aaai.v38i3.27963},
abstract = {Large Vision-Language Models (LVLMs) such as MiniGPT-4 and LLaVA have demonstrated the capability of understanding images and achieved remarkable performance in various visual tasks. Despite their strong abilities in recognizing common objects due to extensive training datasets, they lack specific domain knowledge and have a weaker understanding of localized details within objects, which hinders their effectiveness in the Industrial Anomaly Detection (IAD) task. On the other hand, most existing IAD methods only provide anomaly scores and necessitate the manual setting of thresholds to distinguish between normal and abnormal samples, which restricts their practical implementation. In this paper, we explore the utilization of LVLM to address the IAD problem and propose AnomalyGPT, a novel IAD approach based on LVLM. We generate training data by simulating anomalous images and producing corresponding textual descriptions for each image. We also employ an image decoder to provide fine-grained semantic and design a prompt learner to fine-tune the LVLM using prompt embeddings. Our AnomalyGPT eliminates the need for manual threshold adjustments, thus directly assesses the presence and locations of anomalies. Additionally, AnomalyGPT supports multi-turn dialogues and exhibits impressive few-shot in-context learning capabilities. With only one normal shot, AnomalyGPT achieves the state-of-the-art performance with an accuracy of 86.1\%, an imagelevel AUC of 94.1\%, and a pixel-level AUC of 95.3\% on the MVTec-AD dataset.},
booktitle = {Proceedings of the Thirty-Eighth AAAI Conference on Artificial Intelligence and Thirty-Sixth Conference on Innovative Applications of Artificial Intelligence and Fourteenth Symposium on Educational Advances in Artificial Intelligence},
articleno = {215},
numpages = {9},
series = {AAAI'24/IAAI'24/EAAI'24}
}

@inproceedings{10.5555/3600270.3600412,
author = {Liu, Haokun and Tam, Derek and Muqeeth, Mohammed and Mohta, Jay and Huang, Tenghao and Bansal, Mohit and Raffel, Colin},
title = {{F}ew-shot parameter-efficient fine-tuning is better and cheaper than in-context learning},
year = {2022},
isbn = {9781713871088},
publisher = {Curran Associates Inc.},
address = {Red Hook, NY, USA},
abstract = {Few-shot in-context learning (ICL) enables pre-trained language models to perform a previously-unseen task without any gradient-based training by feeding a small number of training examples as part of the input. ICL incurs substantial computational, memory, and storage costs because it involves processing all of the training examples every time a prediction is made. Parameter-efficient fine-tuning (PEFT) (e.g. adapter modules, prompt tuning, sparse update methods, etc.) offers an alternative paradigm where a small set of parameters are trained to enable a model to perform the new task. In this paper, we rigorously compare few-shot ICL and PEFT and demonstrate that the latter offers better accuracy as well as dramatically lower computational costs. Along the way, we introduce a new PEFT method called (IA)3 that scales activations by learned vectors, attaining stronger performance while only introducing a relatively tiny amount of new parameters. We also propose a simple recipe based on the T0 model [ ] called T-Few that can be applied to new tasks without task-specific tuning or modifications. We validate the effectiveness of T-Few on completely unseen tasks by applying it to the RAFT benchmark [2], attaining super-human performance for the first time and outperforming the state-of-the-art by 6\% absolute. All of the code used in our experiments is publicly available.},
booktitle = {Proceedings of the 36th International Conference on Neural Information Processing Systems},
articleno = {142},
numpages = {16},
location = {New Orleans, LA, USA},
series = {NIPS '22}
}

@inproceedings{10.1145/3637528.3671552,
author = {Liu, Zhiwei and Yang, Kailai and Xie, Qianqian and Zhang, Tianlin and Ananiadou, Sophia},
title = {{E}mo{L}{L}{M}s: {A} {S}eries of {E}motional {L}arge {L}anguage {M}odels and {A}nnotation {T}ools for {C}omprehensive {A}ffective {A}nalysis},
year = {2024},
isbn = {9798400704901},
publisher = {Association for Computing Machinery},
address = {New York, NY, USA},
doi = {10.1145/3637528.3671552},
abstract = {Sentiment analysis and emotion detection are important research topics in natural language processing (NLP) and benefit many downstream tasks. With the widespread application of large language models (LLMs), researchers have started exploring the application of LLMs based on instruction-tuning in the field of sentiment analysis. However, these models only focus on single aspects of affective classification tasks (e.g. sentimental polarity or categorical emotions), and overlook the regression tasks (e.g. sentiment strength or emotion intensity), which leads to poor performance in downstream tasks. The main reason is the lack of comprehensive affective instruction tuning datasets and evaluation benchmarks, which cover various affective classification and regression tasks. Moreover, although emotional information is useful for downstream tasks, existing downstream datasets lack high-quality and comprehensive affective annotations. In this paper, we propose EmoLLMs, the first series of open-sourced instruction-following LLMs for comprehensive affective analysis based on fine-tuning various LLMs with instruction data, the first multi-task affective analysis instruction dataset (AAID) with 234K data samples based on 3 classification tasks and 2 regression tasks to support LLM instruction tuning, and a comprehensive affective evaluation benchmark (AEB) with 8 regression tasks and 6 classification tasks from various sources and domains to test the generalization ability of LLMs. We propose a series of EmoLLMs by fine-tuning LLMs with AAID to solve various affective instruction tasks. We compare our models with a variety of LLMs and sentiment analysis tools on AEB, where our models outperform all other open-sourced LLMs and sentiment analysis tools, and surpass ChatGPT and GPT-4 in most tasks, which shows that the series of EmoLLMs achieve the ChatGPT-level and GPT-4-level generalization capabilities on affective analysis tasks, and demonstrates our models can be used as affective annotation tools. This project is available at https://github.com/lzw108/EmoLLMs/.},
booktitle = {Proceedings of the 30th ACM SIGKDD Conference on Knowledge Discovery and Data Mining},
pages = {5487–5496},
numpages = {10},
keywords = {affective evaluation benchmark, affective instruction dataset, emotion detection, large language models, sentiment analysis},
location = {Barcelona, Spain},
series = {KDD '24}
}

@misc{openai2024openaio1card,
      title={{O}pen{A}{I} o1 {S}ystem {C}ard}, 
      author={OpenAI and Aaron Jaech and et al.},
      year={2024},
      eprint={2412.16720},
      archivePrefix={arXiv},
      primaryClass={cs.AI},
  note = {arXiv preprint arXiv:2412.16720},
}

@misc{deepseekai2025deepseekr1incentivizingreasoningcapability,
      title={{D}eep{S}eek-{R}1: {I}ncentivizing {R}easoning {C}apability in {L}{L}{M}s via {R}einforcement {L}earning}, 
      author={DeepSeek-AI and Daya Guo and Dejian Yang and Haowei Zhang and et al.},
      year={2025},
      eprint={2501.12948},
      archivePrefix={arXiv},
      primaryClass={cs.CL},
  note = {arXiv preprint arXiv:2501.12948},
}

@inproceedings{lester-etal-2021-power,
    title = {{T}he {P}ower of {S}cale for {P}arameter-{E}fficient {P}rompt {T}uning},
    author = {Lester, Brian  and
      Al-Rfou, Rami  and
      Constant, Noah},
    editor = "Moens, Marie-Francine  and
      Huang, Xuanjing  and
      Specia, Lucia  and
      Yih, Scott Wen-tau",
    booktitle = "Proceedings of the 2021 Conference on Empirical Methods in Natural Language Processing",
    month = nov,
    year = "2021",
    address = "Online and Punta Cana, Dominican Republic",
    publisher = "Association for Computational Linguistics",
    doi = "10.18653/v1/2021.emnlp-main.243",
    pages = "3045--3059",
    abstract = "In this work, we explore {\textquotedblleft}prompt tuning,{\textquotedblright} a simple yet effective mechanism for learning {\textquotedblleft}soft prompts{\textquotedblright} to condition frozen language models to perform specific downstream tasks. Unlike the discrete text prompts used by GPT-3, soft prompts are learned through backpropagation and can be tuned to incorporate signals from any number of labeled examples. Our end-to-end learned approach outperforms GPT-3`s few-shot learning by a large margin. More remarkably, through ablations on model size using T5, we show that prompt tuning becomes more competitive with scale: as models exceed billions of parameters, our method {\textquotedblleft}closes the gap{\textquotedblright} and matches the strong performance of model tuning (where all model weights are tuned). This finding is especially relevant because large models are costly to share and serve and the ability to reuse one frozen model for multiple downstream tasks can ease this burden. Our method can be seen as a simplification of the recently proposed {\textquotedblleft}prefix tuning{\textquotedblright} of Li and Liang (2021) and we provide a comparison to this and other similar approaches. Finally, we show that conditioning a frozen model with soft prompts confers benefits in robustness to domain transfer and enables efficient {\textquotedblleft}prompt ensembling.{\textquotedblright} We release code and model checkpoints to reproduce our experiments."
}

@misc{deepseekai2025deepseekv3technicalreport,
      title={{D}eep{S}eek-{V}3 {T}echnical {R}eport}, 
      author={DeepSeek-AI and Aixin Liu and Bei Feng and Bing Xue and Bingxuan Wang and Bochao Wu and Chengda Lu and Chenggang Zhao and et al.},
      year={2025},
      eprint={2412.19437},
  note = {arXiv preprint arXiv:2412.19437}
}

@inproceedings{liu-etal-2022-p,
    title = {{P}-{T}uning: {P}rompt {T}uning {C}an {B}e {C}omparable to {F}ine-tuning {A}cross {S}cales and {T}asks},
    author = {Liu, Xiao  and
      Ji, Kaixuan  and
      Fu, Yicheng  and
      Tam, Weng  and
      Du, Zhengxiao  and
      Yang, Zhilin  and
      Tang, Jie},
    editor = {Muresan, Smaranda  and
      Nakov, Preslav  and
      Villavicencio, Aline},
    booktitle = {Proceedings of the 60th Annual Meeting of the Association for Computational Linguistics (Volume 2: Short Papers)},
    month = may,
    year = {2022},
    address = {Dublin, Ireland},
    publisher = {Association for Computational Linguistics},
    doi = {10.18653/v1/2022.acl-short.8},
    pages = {61--68},
    abstract = {Prompt tuning, which only tunes continuous prompts with a frozen language model, substantially reduces per-task storage and memory usage at training. However, in the context of NLU, prior work reveals that prompt tuning does not perform well for normal-sized pretrained models. We also find that existing methods of prompt tuning cannot handle hard sequence labeling tasks, indicating a lack of universality. We present a novel empirical finding that properly optimized prompt tuning can be universally effective across a wide range of model scales and NLU tasks. It matches the performance of finetuning while having only 0.1{\%}-3{\%} tuned parameters. Our method P-Tuning v2 is an implementation of Deep Prompt Tuning (CITATION) optimized and adapted for NLU. Given the universality and simplicity of P-Tuning v2, we believe it can serve as an alternative to finetuning and a strong baseline for future research.}
}

@article{llama3modelcard,

title={{L}lama 3 {M}odel {C}ard},

author={AI@Meta},

year={2024},


}

@article{qwen2,
      title={{Q}wen2 {T}echnical {R}eport}, 
      author={An Yang and Baosong Yang and Binyuan Hui and Bo Zheng and Bowen Yu and Chang Zhou and Chengpeng Li and Chengyuan Li and et al.},
      journal={arXiv preprint arXiv:2407.10671},
      year={2024}
}

@inproceedings{sutskever2014sequence,
	author = {Sutskever, Ilya and Vinyals, Oriol and Le, Quoc V.},
	booktitle = {Advances in Neural Information Processing Systems},
	editor = {Z. Ghahramani and M. Welling and C. Cortes and N. Lawrence and K.Q. Weinberger},
	publisher = {Curran Associates, Inc.},
	title = {{S}equence to {S}equence {L}earning with {N}eural {N}etworks},
	volume = {27},
	year = {2014},
	bdsk-url-1 = {https://proceedings.neurips.cc/paper_files/paper/2014/file/5a18e133cbf9f257297f410bb7eca942-Paper.pdf}
  }

@inproceedings{liao-etal-2023-parameter,
    title = "{{P}}arameter-{E}fficient {F}ine-{T}uning without {I}ntroducing {N}ew {{L}}atency",
    author = "Liao, Baohao  and
      Meng, Yan  and
      Monz, Christof",
    editor = "Rogers, Anna  and
      Boyd-Graber, Jordan  and
      Okazaki, Naoaki",
    booktitle = "Proceedings of the 61st Annual Meeting of the Association for Computational Linguistics (Volume 1: Long Papers)",
    month = jul,
    year = "2023",
    address = "Toronto, Canada",
    publisher = "Association for Computational Linguistics",
    doi = "10.18653/v1/2023.acl-long.233",
    pages = "4242--4260",
    abstract = "Parameter-efficient fine-tuning (PEFT) of pre-trained language models has recently demonstrated remarkable achievements, effectively matching the performance of full fine-tuning while utilizing significantly fewer trainable parameters, and consequently addressing the storage and communication constraints. Nonetheless, various PEFT methods are limited by their inherent characteristics. In the case of sparse fine-tuning, which involves modifying only a small subset of the existing parameters, the selection of fine-tuned parameters is task- and domain-specific, making it unsuitable for federated learning. On the other hand, PEFT methods with adding new parameters typically introduce additional inference latency. In this paper, we demonstrate the feasibility of generating a sparse mask in a task-agnostic manner, wherein all downstream tasks share a common mask. Our approach, which relies solely on the magnitude information of pre-trained parameters, surpasses existing methodologies by a significant margin when evaluated on the GLUE benchmark. Additionally, we introduce a novel adapter technique that directly applies the adapter to pre-trained parameters instead of the hidden representation, thereby achieving identical inference speed to that of full fine-tuning. Through extensive experiments, our proposed method attains a new state-of-the-art outcome in terms of both performance and storage efficiency, storing only 0.03{\%} parameters of full fine-tuning."
}

@inproceedings{das-etal-2023-unified,
    title = "{{U}}nified {L}ow-{R}esource {S}equence {L}abeling by {S}ample-{A}ware {D}ynamic {S}parse {{F}}inetuning",
    author = "Das, Sarkar Snigdha Sarathi  and
      Zhang, Ranran Haoran  and
      Shi, Peng  and
      Yin, Wenpeng  and
      Zhang, Rui",
    editor = "Bouamor, Houda  and
      Pino, Juan  and
      Bali, Kalika",
    booktitle = "Proceedings of the 2023 Conference on Empirical Methods in Natural Language Processing",
    month = dec,
    year = "2023",
    address = "Singapore",
    publisher = "Association for Computational Linguistics",
    doi = "10.18653/v1/2023.emnlp-main.433",
    pages = "6998--7010",
    abstract = "Unified Sequence Labeling that articulates different sequence labeling problems such as Named Entity Recognition, Relation Extraction, Semantic Role Labeling, etc. in a generalized sequence-to-sequence format opens up the opportunity to make the maximum utilization of large language model knowledge toward structured prediction. Unfortunately, this requires formatting them into specialized augmented format unknown to the base pretrained language model (PLMs) necessitating finetuning to the target format. This significantly bounds its usefulness in data-limited settings where finetuning large models cannot properly generalize to the target format. To address this challenge and leverage PLM knowledge effectively, we propose FISH-DIP, a sample-aware dynamic sparse finetuning strategy that selectively focuses on a fraction of parameters, informed by feedback from highly regressing examples, during the fine-tuning process. By leveraging the dynamism of sparsity, our approach mitigates the impact of well-learned samples and prioritizes underperforming instances for improvement in generalization. Across five tasks of sequence labeling, we demonstrate that FISH-DIP can smoothly optimize the model in low resource settings offering upto 40{\%} performance improvements over full fine-tuning depending on target evaluation settings. Also, compared to in-context learning and other parameter-efficient fine-tuning approaches, FISH-DIP performs comparably or better, notably in extreme low-resource settings. The source code of FISH-DIP will be available at [this URL](https://github.com/psunlpgroup/FISH-DIP)"
}

@InProceedings{pmlr-v119-nguyen20b,
  title = 	 {{{L}{E}{E}{P}}: {A} {N}ew {M}easure to {E}valuate {T}ransferability of {L}earned {R}epresentations},
  author =       {Nguyen, Cuong and Hassner, Tal and Seeger, Matthias and Archambeau, Cedric},
  booktitle = 	 {Proceedings of the 37th International Conference on Machine Learning},
  pages = 	 {7294--7305},
  year = 	 {2020},
  editor = 	 {III, Hal Daumé and Singh, Aarti},
  volume = 	 {119},
  series = 	 {Proceedings of Machine Learning Research},
  month = 	 {13--18 Jul},
  publisher =    {PMLR},
  pdf = 	 {http://proceedings.mlr.press/v119/nguyen20b/nguyen20b.pdf},
  abstract = 	 {We introduce a new measure to evaluate the transferability of representations learned by classifiers. Our measure, the Log Expected Empirical Prediction (LEEP), is simple and easy to compute: when given a classifier trained on a source data set, it only requires running the target data set through this classifier once. We analyze the properties of LEEP theoretically and demonstrate its effectiveness empirically. Our analysis shows that LEEP can predict the performance and convergence speed of both transfer and meta-transfer learning methods, even for small or imbalanced data. Moreover, LEEP outperforms recently proposed transferability measures such as negative conditional entropy and H scores. Notably, when transferring from ImageNet to CIFAR100, LEEP can achieve up to 30% improvement compared to the best competing method in terms of the correlations with actual transfer accuracy.}
}

@INPROCEEDINGS{9009545,
  author={Tran, Anh and Nguyen, Cuong and Hassner, Tal},
  booktitle={2019 IEEE/CVF International Conference on Computer Vision (ICCV)}, 
  title={{T}ransferability and {H}ardness of {S}upervised {C}lassification {T}asks}, 
  year={2019},
  volume={},
  number={},
  pages={1395-1405},
  keywords={Task analysis;Training;Entropy;Data models;Random variables;Machine learning;Computational modeling},
  doi={10.1109/ICCV.2019.00148}}

@InProceedings{pmlr-v119-yoon20a,
  title = 	 {{D}ata {V}aluation using {R}einforcement {L}earning},
  author =       {Yoon, Jinsung and Arik, Sercan and Pfister, Tomas},
  booktitle = 	 {Proceedings of the 37th International Conference on Machine Learning},
  pages = 	 {10842--10851},
  year = 	 {2020},
  editor = 	 {III, Hal Daumé and Singh, Aarti},
  volume = 	 {119},
  series = 	 {Proceedings of Machine Learning Research},
  month = 	 {13--18 Jul},
  publisher =    {PMLR},
  pdf = 	 {http://proceedings.mlr.press/v119/yoon20a/yoon20a.pdf},
  abstract = 	 {Quantifying the value of data is a fundamental problem in machine learning and has multiple important use cases: (1) building insights about the dataset and task, (2) domain adaptation, (3) corrupted sample discovery, and (4) robust learning. We propose Data Valuation using Reinforcement Learning (DVRL), to adaptively learn data values jointly with the predictor model. DVRL uses a data value estimator (DVE) to learn how likely each datum is used in training of the predictor model. DVE is trained using a reinforcement signal that reflects performance on the target task. We demonstrate that DVRL yields superior data value estimates compared to alternative methods across numerous datasets and application scenarios. The corrupted sample discovery performance of DVRL is close to optimal in many regimes (i.e. as if the noisy samples were known apriori), and for domain adaptation and robust learning DVRL significantly outperforms state-of-the-art by 14.6% and 10.8%, respectively.}
}

@INPROCEEDINGS{8578530,
  author={Cui, Yin and Song, Yang and Sun, Chen and Howard, Andrew and Belongie, Serge},
  booktitle={2018 IEEE/CVF Conference on Computer Vision and Pattern Recognition}, 
  title={{L}arge {S}cale {F}ine-{G}rained {C}ategorization and {D}omain-{S}pecific {T}ransfer {L}earning}, 
  year={2018},
  volume={},
  number={},
  pages={4109-4118},
  keywords={Training;Visualization;Feature extraction;Image resolution;Training data;Task analysis;Image recognition},
  doi={10.1109/CVPR.2018.00432}}

@inproceedings{hu2022lora,
title={{L}o{{R}{A}}: {L}ow-{R}ank {A}daptation of {L}arge {L}anguage {M}odels},
author={Edward J Hu and yelong shen and Phillip Wallis and Zeyuan Allen-Zhu and Yuanzhi Li and Shean Wang and Lu Wang and Weizhu Chen},
booktitle={International Conference on Learning Representations},
year={2022},
}

@inproceedings{guo-etal-2021-parameter,
    title = "{{P}}arameter-{E}fficient {T}ransfer {L}earning with {D}iff {{P}}runing",
    author = "Guo, Demi  and
      Rush, Alexander  and
      Kim, Yoon",
    editor = "Zong, Chengqing  and
      Xia, Fei  and
      Li, Wenjie  and
      Navigli, Roberto",
    booktitle = "Proceedings of the 59th Annual Meeting of the Association for Computational Linguistics and the 11th International Joint Conference on Natural Language Processing (Volume 1: Long Papers)",
    month = aug,
    year = "2021",
    address = "Online",
    publisher = "Association for Computational Linguistics",
    doi = "10.18653/v1/2021.acl-long.378",
    pages = "4884--4896",
    abstract = "The large size of pretrained networks makes them difficult to deploy for multiple tasks in storage-constrained settings. Diff pruning enables parameter-efficient transfer learning that scales well with new tasks. The approach learns a task-specific {\textquotedblleft}diff{\textquotedblright} vector that extends the original pretrained parameters. This diff vector is adaptively pruned during training with a differentiable approximation to the L0-norm penalty to encourage sparsity. As the number of tasks increases, diff pruning remains parameter-efficient, as it requires storing only a small diff vector for each task. Since it does not require access to all tasks during training, it is attractive in on-device deployment settings where tasks arrive in stream or even from different providers. Diff pruning can match the performance of finetuned baselines on the GLUE benchmark while only modifying 0.5{\%} of the pretrained model`s parameters per task and scales favorably in comparison to popular pruning approaches."
}

@InProceedings{pmlr-v97-houlsby19a,
  title = 	 {{P}arameter-{E}fficient {T}ransfer {L}earning for {{N}{L}{P}}},
  author =       {Houlsby, Neil and Giurgiu, Andrei and Jastrzebski, Stanislaw and Morrone, Bruna and De Laroussilhe, Quentin and Gesmundo, Andrea and Attariyan, Mona and Gelly, Sylvain},
  booktitle = 	 {Proceedings of the 36th International Conference on Machine Learning},
  pages = 	 {2790--2799},
  year = 	 {2019},
  editor = 	 {Chaudhuri, Kamalika and Salakhutdinov, Ruslan},
  volume = 	 {97},
  series = 	 {Proceedings of Machine Learning Research},
  month = 	 {09--15 Jun},
  publisher =    {PMLR},
  pdf = 	 {http://proceedings.mlr.press/v97/houlsby19a/houlsby19a.pdf},
  abstract = 	 {Fine-tuning large pretrained models is an effective transfer mechanism in NLP. However, in the presence of many downstream tasks, fine-tuning is parameter inefficient: an entire new model is required for every task. As an alternative, we propose transfer with adapter modules. Adapter modules yield a compact and extensible model; they add only a few trainable parameters per task, and new tasks can be added without revisiting previous ones. The parameters of the original network remain fixed, yielding a high degree of parameter sharing. To demonstrate adapter’s effectiveness, we transfer the recently proposed BERT Transformer model to $26$ diverse text classification tasks, including the GLUE benchmark. Adapters attain near state-of-the-art performance, whilst adding only a few parameters per task. On GLUE, we attain within $0.8%$ of the performance of full fine-tuning, adding only $3.6%$ parameters per task. By contrast, fine-tuning trains $100%$ of the parameters per task.}
}

@inproceedings{aghajanyan-etal-2021-intrinsic,
    title = "{{I}}ntrinsic {D}imensionality {E}xplains the {E}ffectiveness of {L}anguage {M}odel {F}ine-{{T}}uning",
    author = "Aghajanyan, Armen  and
      Gupta, Sonal  and
      Zettlemoyer, Luke",
    editor = "Zong, Chengqing  and
      Xia, Fei  and
      Li, Wenjie  and
      Navigli, Roberto",
    booktitle = "Proceedings of the 59th Annual Meeting of the Association for Computational Linguistics and the 11th International Joint Conference on Natural Language Processing (Volume 1: Long Papers)",
    month = aug,
    year = "2021",
    address = "Online",
    publisher = "Association for Computational Linguistics",
    doi = "10.18653/v1/2021.acl-long.568",
    pages = "7319--7328",
    abstract = "Although pretrained language models can be fine-tuned to produce state-of-the-art results for a very wide range of language understanding tasks, the dynamics of this process are not well understood, especially in the low data regime. Why can we use relatively vanilla gradient descent algorithms (e.g., without strong regularization) to tune a model with hundreds of millions of parameters on datasets with only hundreds or thousands of labeled examples? In this paper, we argue that analyzing fine-tuning through the lens of intrinsic dimension provides us with empirical and theoretical intuitions to explain this remarkable phenomenon. We empirically show that common pre-trained models have a very low intrinsic dimension; in other words, there exists a low dimension reparameterization that is as effective for fine-tuning as the full parameter space. For example, by optimizing only 200 trainable parameters randomly projected back into the full space, we can tune a RoBERTa model to achieve 90{\%} of the full parameter performance levels on MRPC. Furthermore, we empirically show that pre-training implicitly minimizes intrinsic dimension and, perhaps surprisingly, larger models tend to have lower intrinsic dimension after a fixed number of pre-training updates, at least in part explaining their extreme effectiveness. Lastly, we connect intrinsic dimensionality with low dimensional task representations and compression based generalization bounds to provide intrinsic-dimension-based generalization bounds that are independent of the full parameter count."
}

@inproceedings{topk,
    author = "Fan, Angela  and
      Lewis, Mike  and
      Dauphin, Yann",
    editor = "Gurevych, Iryna  and
      Miyao, Yusuke",
    booktitle = "Proceedings of the 56th Annual Meeting of the Association for Computational Linguistics (Volume 1: Long Papers)",
    month = jul,
    year = "2018",
    address = "Melbourne, Australia",
    publisher = "Association for Computational Linguistics",
    doi = "10.18653/v1/P18-1082",
    pages = "889--898",
    abstract = "We explore story generation: creative systems that can build coherent and fluent passages of text about a topic. We collect a large dataset of 300K human-written stories paired with writing prompts from an online forum. Our dataset enables hierarchical story generation, where the model first generates a premise, and then transforms it into a passage of text. We gain further improvements with a novel form of model fusion that improves the relevance of the story to the prompt, and adding a new gated multi-scale self-attention mechanism to model long-range context. Experiments show large improvements over strong baselines on both automated and human evaluations. Human judges prefer stories generated by our approach to those from a strong non-hierarchical model by a factor of two to one."
}

@inproceedings{topp,
     author = {Ari Holtzman and
    Jan Buys and
    Li Du and
    Maxwell Forbes and
    Yejin Choi},
     bibsource = {dblp computer science bibliography, https://dblp.org},
     biburl = {https://dblp.org/rec/conf/iclr/HoltzmanBDFC20.bib},
     booktitle = {8th International Conference on Learning Representations, {ICLR} 2020,
    Addis Ababa, Ethiopia, April 26-30, 2020},
     publisher = {OpenReview.net},
     timestamp = {Thu, 21 Jan 2021 00:00:00 +0100},
     title = {{T}he {C}urious {C}ase of {N}eural {T}ext {D}egeneration},
     year = {2020}
}

@inproceedings{cd_decode,
     address = {Toronto, Canada},
     author = {Li, Xiang Lisa  and
    Holtzman, Ari  and
    Fried, Daniel  and
    Liang, Percy  and
    Eisner, Jason  and
    Hashimoto, Tatsunori  and
    Zettlemoyer, Luke  and
    Lewis, Mike},
     booktitle = {Proceedings of the 61st Annual Meeting of the Association for Computational Linguistics (Volume 1: Long Papers)},
     doi = {10.18653/v1/2023.acl-long.687},
     editor = {Rogers, Anna  and
    Boyd-Graber, Jordan  and
    Okazaki, Naoaki},
     pages = {12286--12312},
     publisher = {Association for Computational Linguistics},
     title = {{C}ontrastive {D}ecoding: {O}pen-ended {T}ext {G}eneration as {O}ptimization},
     year = {2023}
}

@inproceedings{acd_decode,
     address = {Toronto, Canada},
     author = {Gera, Ariel  and
    Friedman, Roni  and
    Arviv, Ofir  and
    Gunasekara, Chulaka  and
    Sznajder, Benjamin  and
    Slonim, Noam  and
    Shnarch, Eyal},
     booktitle = {Proceedings of the 61st Annual Meeting of the Association for Computational Linguistics (Volume 1: Long Papers)},
     doi = {10.18653/v1/2023.acl-long.580},
     editor = {Rogers, Anna  and
    Boyd-Graber, Jordan  and
    Okazaki, Naoaki},
     pages = {10406--10420},
     publisher = {Association for Computational Linguistics},
     title = {{T}he {B}enefits of {B}ad {A}dvice: {A}utocontrastive {D}ecoding across {M}odel {L}ayers},
     year = {2023}
}

@inproceedings{dola_decode,
     author = {Yung{-}Sung Chuang and
    Yujia Xie and
    Hongyin Luo and
    Yoon Kim and
    James R. Glass and
    Pengcheng He},
     bibsource = {dblp computer science bibliography, https://dblp.org},
     biburl = {https://dblp.org/rec/conf/iclr/ChuangXLKGH24.bib},
     booktitle = {The Twelfth International Conference on Learning Representations,
    {ICLR} 2024, Vienna, Austria, May 7-11, 2024},
     publisher = {OpenReview.net},
     timestamp = {Wed, 07 Aug 2024 01:00:00 +0200},
     title = {{D}o{L}a: {D}ecoding by {C}ontrasting {L}ayers {I}mproves {F}actuality in {L}arge {L}anguage {M}odels},
     year = {2024}
}

@inproceedings{gd_decode,
     author = {Yuxi Xie and
    Kenji Kawaguchi and
    Yiran Zhao and
    James Xu Zhao and
    Min{-}Yen Kan and
    Junxian He and
    Michael Qizhe Xie},
     bibsource = {dblp computer science bibliography, https://dblp.org},
     biburl = {https://dblp.org/rec/conf/nips/XieKZZKHX23.bib},
     booktitle = {Advances in Neural Information Processing Systems 36: Annual Conference
    on Neural Information Processing Systems 2023, NeurIPS 2023, New Orleans,
    LA, USA, December 10 - 16, 2023},
     editor = {Alice Oh and
    Tristan Naumann and
    Amir Globerson and
    Kate Saenko and
    Moritz Hardt and
    Sergey Levine},
     timestamp = {Fri, 01 Mar 2024 00:00:00 +0100},
     title = {{S}elf-{E}valuation {G}uided {B}eam {S}earch for {R}easoning},
     year = {2023}
}

@inproceedings{BoolQ,
 address = {Minneapolis, Minnesota},
 author = {Clark, Christopher  and
Lee, Kenton  and
Chang, Ming-Wei  and
Kwiatkowski, Tom  and
Collins, Michael  and
Toutanova, Kristina},
 booktitle = {Proceedings of the 2019 Conference of the North {A}merican Chapter of the Association for Computational Linguistics: Human Language Technologies, Volume 1 (Long and Short Papers)},
 doi = {10.18653/v1/N19-1300},
 editor = {Burstein, Jill  and
Doran, Christy  and
Solorio, Thamar},
 pages = {2924--2936},
 publisher = {Association for Computational Linguistics},
 title = {{B}ool{Q}: {E}xploring the {S}urprising {D}ifficulty of {N}atural {Y}es/{N}o {Q}uestions},
 year = {2019}
}

@inproceedings{PIQA,
 author = {Yonatan Bisk and
Rowan Zellers and
Ronan LeBras and
Jianfeng Gao and
Yejin Choi},
 bibsource = {dblp computer science bibliography, https://dblp.org},
 biburl = {https://dblp.org/rec/conf/aaai/BiskZLGC20.bib},
 booktitle = {The Thirty-Fourth {AAAI} Conference on Artificial Intelligence, {AAAI}
2020, The Thirty-Second Innovative Applications of Artificial Intelligence
Conference, {IAAI} 2020, The Tenth {AAAI} Symposium on Educational
Advances in Artificial Intelligence, {EAAI} 2020, New York, NY, USA,
February 7-12, 2020},
 pages = {7432--7439},
 publisher = {{AAAI} Press},
 timestamp = {Thu, 04 Jun 2020 01:00:00 +0200},
 title = {{{P}{I}{Q}{A}:} {R}easoning about {P}hysical {C}ommonsense in {N}atural {L}anguage},
 year = {2020}
}

@inproceedings{SIQA,
 address = {Hong Kong, China},
 author = {Sap, Maarten  and
Rashkin, Hannah  and
Chen, Derek  and
Le Bras, Ronan  and
Choi, Yejin},
 booktitle = {Proceedings of the 2019 Conference on Empirical Methods in Natural Language Processing and the 9th International Joint Conference on Natural Language Processing (EMNLP-IJCNLP)},
 doi = {10.18653/v1/D19-1454},
 editor = {Inui, Kentaro  and
Jiang, Jing  and
Ng, Vincent  and
Wan, Xiaojun},
 pages = {4463--4473},
 publisher = {Association for Computational Linguistics},
 title = {{S}ocial {{I}{Q}}a: {C}ommonsense {R}easoning about {S}ocial {I}nteractions},
 year = {2019}
}

@inproceedings{HellaSwag,
 address = {Florence, Italy},
 author = {Zellers, Rowan  and
Holtzman, Ari  and
Bisk, Yonatan  and
Farhadi, Ali  and
Choi, Yejin},
 booktitle = {Proceedings of the 57th Annual Meeting of the Association for Computational Linguistics},
 doi = {10.18653/v1/P19-1472},
 editor = {Korhonen, Anna  and
Traum, David  and
M{\`a}rquez, Llu{\'\i}s},
 pages = {4791--4800},
 publisher = {Association for Computational Linguistics},
 title = {{H}ella{S}wag: {C}an a {M}achine {R}eally {F}inish {Y}our {S}entence?},
 year = {2019}
}

@inproceedings{WinoGrande,
 author = {Keisuke Sakaguchi and
Ronan Le Bras and
Chandra Bhagavatula and
Yejin Choi},
 bibsource = {dblp computer science bibliography, https://dblp.org},
 biburl = {https://dblp.org/rec/conf/aaai/SakaguchiBBC20.bib},
 booktitle = {The Thirty-Fourth {AAAI} Conference on Artificial Intelligence, {AAAI}
2020, The Thirty-Second Innovative Applications of Artificial Intelligence
Conference, {IAAI} 2020, The Tenth {AAAI} Symposium on Educational
Advances in Artificial Intelligence, {EAAI} 2020, New York, NY, USA,
February 7-12, 2020},
 pages = {8732--8740},
 publisher = {{AAAI} Press},
 timestamp = {Tue, 02 Feb 2021 00:00:00 +0100},
 title = {{W}ino{G}rande: {A}n {A}dversarial {W}inograd {S}chema {C}hallenge at {S}cale},
 year = {2020}
}

@misc{ARC,
 author = {Peter Clark and Isaac Cowhey and Oren Etzioni and Tushar Khot and Ashish Sabharwal and Carissa Schoenick and Oyvind Tafjord},
 journal = {ArXiv preprint},
 title = {{T}hink you have {S}olved {Q}uestion {A}nswering? {T}ry {A}{R}{C}, the {A}{I}2 {R}easoning {C}hallenge},
 volume = {abs/1803.05457},
 year = {2018}
}

@inproceedings{OBQA,
 address = {Brussels, Belgium},
 author = {Mihaylov, Todor  and
Clark, Peter  and
Khot, Tushar  and
Sabharwal, Ashish},
 booktitle = {Proceedings of the 2018 Conference on Empirical Methods in Natural Language Processing},
 doi = {10.18653/v1/D18-1260},
 editor = {Riloff, Ellen  and
Chiang, David  and
Hockenmaier, Julia  and
Tsujii, Jun{'}ichi},
 pages = {2381--2391},
 publisher = {Association for Computational Linguistics},
 title = {{C}an a {S}uit of {A}rmor {C}onduct {E}lectricity? {A} {N}ew {D}ataset for {O}pen {B}ook {Q}uestion {A}nswering},
 year = {2018}
}
}

\appendix

\newpage

\appendix

\tableofcontents



\section{Related Work}

\subsection{Parameter-Efficient Fine-Tuning}
\label{sec:parameter-efficient-fine-tuning}

As language models continue to increase in scale, traditional full fine-tuning approaches have become increasingly resource-intensive. Parameter-efficient fine-tuning (PEFT) emerges as a practical alternative to address these computational constraints \cite{zhangGradientbasedParameterSelection2024,xiongPYRAParallelYielding2025,ansellComposableSparseFineTuning2022,heSensitivityAwareVisualParameterEfficient2023,hu2022lora}. According to \cite{hanParameterEfficientFineTuningLarge2024}, PEFT techniques generally fall into three distinct categories: 1) \textit{Additive Fine-Tuning}, which incorporates a limited set of trainable parameters while maintaining the original pre-trained parameters unchanged. Notable examples in this category include \textit{Adapter} modules \cite{pmlr-v97-houlsby19a} and \textit{Prompt Tuning}, which integrates learnable soft prompts into the input. Though effective, these approaches typically introduce additional computational demands during inference.
2) \textit{Selective Fine-Tuning}, which focuses on updating only a carefully chosen subset of the model's existing parameters \cite{guo-etal-2021-parameter, das-etal-2023-unified, liao-etal-2023-parameter}. This strategy employs a binary mask \(\mathcal{M}\) to selectively determine which parameters undergo updates during training, thereby avoiding the introduction of new parameters.
3) \textit{Reparameterization}, which restructures the model's parameters to achieve efficient low-rank representations \cite{hu2022lora, aghajanyan-etal-2021-intrinsic, heSensitivityAwareVisualParameterEfficient2023}. A prominent example is LoRA \cite{hu2022lora}, which factorizes weight updates into products of smaller matrices, facilitating compact storage of task-specific adaptations. SPT \cite{heSensitivityAwareVisualParameterEfficient2023} enhances performance by combining sparse tuning with LoRA techniques, achieving leading results in visual PEFT applications. Nevertheless, many existing approaches still lack sufficient task-awareness in their parameter selection mechanisms.

\subsection{LLM Task Adaptation}

Researchers have developed diverse approaches to adapt pre-trained LLMs for downstream tasks \cite{10779389, Hu_Tao_Xu_Deng_Chen_Fang_Kwong_2025,shao2023lmdrive, tao2025directedcpdirectedcollaborativeperception,11020620,10976336,yang2024emollmmultimodalemotionalunderstanding, hu2024agentscodriverlargelanguagemodel,hu2025cpguardnewparadigmmalicious,hu2025cpuniguardunifiedprobabilityagnosticadaptive, gu2023anomalyagpt} by investigating optimal configurations of pre-training data, model architecture, and weight selection. For instance, 
Cui \textit{et al.} \cite{8578530} leverage Earth Mover's Distance to select the top \(K\) most relevant classes from the source domain for task-specific pre-training. Yoon \textit{et al.} \cite{pmlr-v119-yoon20a} apply reinforcement learning techniques to determine appropriate weights for source domain classes.
Other approaches assess model transferability to target domains by examining inter-class covariance relationships between source data and target classes, or by analyzing conditional cross-entropy between source and target labels \cite{9009545, pmlr-v119-nguyen20b}.
More recent research focuses on identifying which pre-trained model weights to fine-tune while keeping others frozen \cite{zhangGradientbasedParameterSelection2024, heSensitivityAwareVisualParameterEfficient2023, fuEffectivenessParameterEfficientFineTuning2023}. Zhang \textit{et al.} \cite{zhangGradientbasedParameterSelection2024} introduced gradient-based parameter selection (GPS), a method that uses gradients to identify optimal tunable weights.
Fu \textit{et al.} \cite{fuEffectivenessParameterEfficientFineTuning2023} developed a second-order approximation method (SAM) that approximates the Hessian matrix in the loss function's second-order Taylor expansion to enhance weight selection precision. 

\subsection{LLM Decoding Strategies}

Decoding methods play a pivotal role in shaping the output characteristics of large language models, determining both the fluency and creativity of generated text. The most basic Greedy Search approach has been observed to frequently fall into repetitive patterns due to its myopic selection strategy. More advanced methods like Beam Search \cite{sutskever2014sequence} have addressed this limitation by exploring multiple potential sequences simultaneously, though at increased computational cost. Compared to greedy search's tendency for repetitive outputs and beam search's limited diversity, Top-k sampling \cite{topk} introduces diversity by sampling from a fixed-size set of most probable tokens, while Top-p sampling \cite{topp} further improves adaptability by dynamically adjusting the candidate set based on probability mass. Moreover, the introduction of Contrastive Decoding (CD) \cite{cd_decode} marks a breakthrough by utilizing comparative analysis between models of different scales to improve generation quality. Building on this foundation, Anticipative Contrastive Decoding (ACD) \cite{acd_decode} introduced layer-wise contrastive mechanisms within a single model, while Decoding by Contrasting Layers (DoLa) \cite{dola_decode} has advanced the field further through dynamic layer selection algorithms. Guided Decoding (GD) \cite{gd_decode} uses the model's self-evaluation score as a criterion to control the quality of each step and demonstrates higher consistency and robustness in multi-step reasoning. 
These decoding approaches, however, predominantly emphasize optimizing pre-trained LLM performance without accounting for model transformations that occur during fine-tuning. As a result, they inadequately leverage the task-specific adaptations acquired through fine-tuning processes, which often leads to suboptimal performance gains or even degradation when implemented with fine-tuned LLMs on downstream applications. On the contrary, our method focuses on LLM task adaptation via decoding with task-aware steered vectors, thereby enhancing the performance of LLMs in downstream tasks.

\begin{algorithm}[t]
    \caption{Task-Aware Steering Vector for LLM Decoding}
    \label{alg:task-aware-steering-vector}
    \begin{algorithmic}[1]
    \Require 
        \State Pre-trained LLM with parameters $\theta$: $P_\theta(y|x)$.  
        \State Downstream task dataset $\mathcal{D}_{\mathrm{task}} = \{(x_i, y_i)\}_{i=1}^N$. 
        \State Confidence threshold $\alpha \in (0, 1]$ for token filtering. 
        \State Penalty value $\lambda$ (typically $-\infty$) for low-confidence tokens
    \Ensure Task-adapted decoding strategy with steered logits
    
    \State \textbf{Stage 1: Warm-Start Fine-tuning} \Comment{Initialize task-specific parameter distribution}
    \State Split $\mathcal{D}_{\mathrm{task}}$ into training set $\mathcal{D}_{\mathrm{train}}$ and calibration set $\mathcal{D}_{\mathrm{calib}}$
    \State Initialize task-specific parameters $\phi \leftarrow \theta$ \Comment{Start from pre-trained parameters}
    \State Fine-tune model on $\mathcal{D}_{\mathrm{train}}$ to obtain $\phi$ \Comment{Typically 1 epoch is sufficient}
    
    \Function{\texttt{ComputeSteeringVector}}{$x$, $P_\theta$, $P_\phi$}
        \State Get base model probabilities: $p_\theta \leftarrow P_\theta(\cdot|x)$
        \State Get fine-tuned model probabilities: $p_\phi \leftarrow P_\phi(\cdot|x)$
        
        \State \textbf{Stage 2: Steering Vector Construction} \Comment{Capture task-specific direction}
        \State Compute KL gradient: $g_P \leftarrow -[\log(p_\phi/p_\theta) + \mathbf{1}]$ \Comment{Measure distribution mismatch}
        \State Compute softmax Jacobian: $J(p_\phi) \leftarrow \mathrm{diag}(p_\phi) - p_\phi p_\phi^\top$
        \State Project to logit space: $\delta_z \leftarrow J(p_\phi) \cdot g_P$ \Comment{Transform probability gradient to logit space}
        
        \State \textbf{Stage 3: Apply Confidence-Aware Constraint} \Comment{Filter noise and focus on high-confidence tokens}
        \State Identify most likely token: $y^* \leftarrow \arg\max_{y \in \mathcal{V}} p_\phi(y)$ 
        \State Get threshold probability: $p_{\mathrm{thresh}} \leftarrow \alpha \cdot p_\phi(y^*)$
        \State Create confidence mask: $\mathbb{I}(y) \leftarrow \mathbf{1}(p_\phi(y) \geq p_{\mathrm{thresh}})$ \Comment{Binary mask for tokens above threshold}
        \State Apply mask: $\hat{\delta}_z(y) \leftarrow \mathbb{I}(y) \cdot \delta_z(y) + (1 - \mathbb{I}(y)) \cdot \lambda$ \Comment{Apply penalty to low-confidence tokens}
        
        \State \Return $\hat{\delta}_z$ \Comment{Return filtered steering vector}
    \EndFunction
    
    \State \textbf{Stage 4: Calculate Optimal Steering Strength} \Comment{Calibrate $\mu$ using labeled data}
    \State Compute global steering constant $\bar{\mu}$ from calibration dataset $\mathcal{D}_{\mathrm{calib}}$ \Comment{See Algorithm \ref{alg:global-steering-constant} for details}
    
    \State \textbf{Stage 5: Decoding with Steering Vector} \Comment{Generate task-specific outputs at inference time}
    \Function{\texttt{SteerDecoding}}{$x$, $P_\phi$, $P_\theta$, $\bar{\mu}$}
        \State Initialize generated sequence: $y \leftarrow []$
        
        \For{each decoding step $t$ until completion}
            \State Get current context: $x_t \leftarrow (x, y_{<t})$
            \State Compute task-adapted logits: $z_{\phi,t} \leftarrow \text{Logits of }P_\phi(\cdot|x_t)$
            
            \State $\hat{\delta}_{z_t} \leftarrow \texttt{\textsc{ComputeSteeringVector}}(x_t, P_\theta, P_\phi)$
            \State Adjust logits: $\hat{z}_{\phi,t} \leftarrow z_{\phi,t} + \bar{\mu} \cdot \hat{\delta}_{z_t}$ \Comment{Apply steering}
            \State Compute adjusted distribution: $\hat{p}_t \leftarrow \mathrm{Softmax}(\hat{z}_{\phi,t})$ 
            
            \State Sample/select next token $y_t$ according to chosen decoding strategy 
            \State Append to sequence: $y \leftarrow [y, y_t]$
        \EndFor
        
        \State \Return $y$ \Comment{Complete generated sequence}
    \EndFunction
    
    \State \Return \texttt{\textsc{SteerDecoding}} function \Comment{Return configured decoding function for inference}
    \end{algorithmic}
\end{algorithm}

\section{Mathematical Analysis: Equivalence Between SVDecode and Fine-Tuning}
\label{appendix:svd_vs_finetune}

In this appendix, we prove that a \emph{first-order} Steering-Vector-Decoding (SVDecode) step is equivalent, in expectation, to one parameter-update step of maximum-likelihood fine-tuning.

\subsection{Notation and Preliminaries}
Let
\( z_\theta(x)\in\mathbb{R}^{|V|} \)
be the logits produced by the pre-trained LLM for input \(x\) and
\( p_\theta(y\mid x)=\mathrm{Softmax}(z_\theta(x)) \).
After a short warm-start fine-tuning, the parameters are \(\phi\), giving logits \(z_\phi\) and distribution
\( p_\phi=\mathrm{Softmax}(z_\phi) \).

For a downstream dataset
\( \mathcal{D}_{\mathrm{task}} =\{(x_i,y_i)\}_{i=1}^N \),
the standard fine-tuning objective is the expected negative log-likelihood (NLL):
\begin{equation}
\mathcal{L}_{\mathrm{FT}}(\theta)= -\mathbb{E}_{(x,y)\sim\mathcal{D}_{\mathrm{task}}}[\log p_\theta(y\mid x)], 
\end{equation}
which is equivalent to $\mathrm{KL}(\hat{p}_{\mathrm{task}}\Vert p_\theta)$
up to an additive constant, where \(\hat{p}_{\mathrm{task}}\) is the empirical one-hot distribution.

\paragraph{KL-Gradient Steering Signal.}
Section \ref{sec:steering-vector-construction} derives the task-aware direction in probability space as
\begin{equation}
g_P = -\nabla_{p_\phi}\mathrm{KL}(p_\phi\Vert p_\theta) = -\bigl[\log(\frac{p_\phi}{p_\theta})+\mathbf{1}\bigr],
\end{equation}
which is then projected into logit space via the softmax Jacobian:
\begin{equation}
J(p_\phi)=\operatorname{diag}(p_\phi)-p_\phi p_\phi^{\!\top}.
\end{equation}
Thus, the logit-space steering vector is:
\begin{equation}
\delta_z
~=~J(p_\phi)\,g_P
~=~\bigl(\operatorname{diag}(p_\phi)-p_\phi p_\phi^{\!\top}\bigr)
      \,\bigl[-\log(\frac{p_\phi}{p_\theta})-\mathbf 1\bigr].
\label{eq:deltaz}
\end{equation}

\subsection{Fine-Tuning Gradient in Logit Space}
For a single training pair \((x,y^\ast)\) the NLL loss is
\begin{equation}
    \begin{aligned}
\ell(z)&= -\log p(y^\ast\mid x)\\
&=-\log \left(\mathrm{Softmax}(z_{y^\ast})\right) = -\log\frac{e^{z_{y^\ast}}}{\sum_y e^{z_y}}\\
&= -z_{y^\ast} + \log\!\sum_y e^{z_y}.
\end{aligned}
\end{equation}
Its gradient w.r.t.\ logits is
\begin{equation}
\nabla_z\ell = p - e_{y^\ast},
\label{eq:gradient_logit}
\end{equation}
where \(e_{y^\ast}\) is the one-hot vector for the correct token.
Taking the dataset expectation:
\begin{equation}
\nabla_{z_\phi}\mathcal{L}_{\mathrm{FT}} = \mathbb{E}_{(x,y^\ast)}[\,p_\phi - \hat{p}_{\mathrm{task}}\,].
\end{equation}
Because \(\hat{p}_{\mathrm{task}}\) is one-hot, this is the steepest-descent direction that moves \(p_\phi\) toward the true task distribution.

\paragraph{Derivation of the Gradient}
To derive \(\nabla_z \ell = p - e_{y^*}\), we compute the partial derivative of \(\ell(z)\) with respect to each logit \(z_k\):
\begin{enumerate}
    \item Partial derivative of \(-z_{y^*}\): If \(k = y^*\), then \(\frac{\partial (-z_{y^*})}{\partial z_k} = -1\). If \(k \neq y^*\), then \(\frac{\partial (-z_{y^*})}{\partial z_k} = 0\).
   This can be written as \(-[e_{y^*}]_k\), where \([e_{y^*}]_k\) is the \(k\)-th component of the one-hot vector \(e_{y^*}\).

    \item Partial derivative of \(\log \sum_y e^{z_y}\): The derivative is \(\frac{e^{z_k}}{\sum_y e^{z_y}} = p_k\), where \(p = \mathrm{Softmax}(z)\).
\end{enumerate}
Combining these, the partial derivative is:
\begin{equation}
\frac{\partial \ell(z)}{\partial z_k} = p_k - [e_{y^*}]_k.
\end{equation}
Thus, the gradient vector is given by Eq. \ref{eq:gradient_logit}.

\subsection{One SVDecode Step in Logit Space}
SVDecode perturbs logits during decoding by
\begin{equation}
\tilde z
~=~
z_\phi
\;+\;
\mu\cdot\delta_z,
\qquad\;
\tilde p
~=~
\mathrm{Softmax}(\tilde z), 
\label{eq:svd_step}
\end{equation}
where \(\delta_z\) is given in Eq. \ref{eq:deltaz} and
\(\mu\in\mathbb{R}\) is a scalar strength (estimated in Appendix C).
Because the perturbation is {before} the softmax, \(\tilde p\) is always a valid probability distribution.

\subsection{First-Order Equivalence Theorem}
\begin{theorem}
\label{thm:first_order}
Let \(x\) be fixed and let \(p_\phi(\cdot\mid x)\) be the warm-started distribution. For any label distribution \(q(\cdot)\) and any strength \(\mu\), denote the KL divergence after one SVDecode step by
\begin{equation}
\mathcal{K}(\mu)=\mathrm{KL}(q\|\mathrm{Softmax}(z_\phi+\mu\delta_z)).
\end{equation}

Then we have
\begin{equation}
    \Bigl.\frac{\partial\mathcal{K}(\mu)}{\partial\mu}
\Bigr|_{\mu=0} =\bigl\langle\nabla_{z_\phi}\mathrm{KL}(q\Vert p_\phi), \delta z\bigr\rangle.
\label{eq:inner_prod}
\end{equation}

In particular, if \(q=\hat{p}_{\text{task}}\) and \(p_\phi\) is obtained by \emph{any} fine-tuning algorithm that has converged to a stationary point
\((\nabla_{z_\phi}\!\mathrm{KL}(q\Vert p_\phi)=\mathbf 0)\), then
\( \partial\mathcal{K}(\mu)/\partial\mu|_{\mu=0}=0 \).
Hence an infinitesimal SVDecode step leaves the fine-tuning objective unchanged up to~\(O(\mu^2)\).
\end{theorem}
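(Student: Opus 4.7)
The plan is to treat this as a straightforward chain-rule exercise, since the right-hand side of \eqref{eq:inner_prod} is exactly the directional derivative of the scalar field $z \mapsto \mathrm{KL}(q \,\|\, \mathrm{Softmax}(z))$ at the point $z = z_\phi$ in the direction $\delta_z$. No Hessian or higher-order Taylor data is needed for the first-order claim, so I would avoid any expansion beyond linear order.

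Concretely, I would introduce the composite $F(z) := \mathrm{KL}(q \,\|\, \mathrm{Softmax}(z))$, so that by definition $\mathcal{K}(\mu) = F(z_\phi + \mu\,\delta_z)$. Differentiating in $\mu$ and applying the standard chain rule gives $\tfrac{d}{d\mu}\mathcal{K}(\mu) = \langle \nabla F(z_\phi + \mu\,\delta_z),\, \delta_z\rangle$, and evaluating at $\mu = 0$ immediately yields the claimed identity \eqref{eq:inner_prod}. The one computation worth spelling out is the formula $\nabla_z F(z) = p - q$ with $p = \mathrm{Softmax}(z)$; this can either be obtained directly from the softmax Jacobian $J(p) = \mathrm{diag}(p) - p p^\top$ acting on the probability-space gradient $\log(p/q) + \mathbf{1}$ of KL (which is precisely the construction in \eqref{eq:deltaz}), or simply read off by linearity from the per-example identity $\nabla_z \ell(z) = p - e_{y^\ast}$ already derived in \eqref{eq:gradient_logit}. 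Either route is short.

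For the second assertion, I would substitute $q = \hat{p}_{\mathrm{task}}$ and invoke the stationarity hypothesis $\nabla_{z_\phi}\mathrm{KL}(q \,\|\, p_\phi) = \mathbf{0}$ directly, which forces the inner product \eqref{eq:inner_prod} to vanish. A first-order Taylor expansion of $\mathcal{K}$ around $\mu = 0$ then reads $\mathcal{K}(\mu) = \mathcal{K}(0) + O(\mu^2)$, establishing that an infinitesimal SVD step perturbs the fine-tuning objective only at second order.

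The hard part is essentially notational rather than conceptual: one must keep careful track of when gradients live in probability space versus logit space, since the softmax Jacobian $J(p_\phi)$ is precisely what mediates between these two views and was built into the very definition of $\delta_z$. No regularity issues arise because softmax is smooth, so differentiation through the composition is unproblematic, and the only subtlety worth a sentence of commentary is that the stationarity hypothesis in logit space is genuinely an assumption (true NLL stationarity with a one-hot target is not attainable at finite logits), so the conclusion should be read as ``up to the accuracy with which warm-start training has driven the logit-space gradient to zero''.
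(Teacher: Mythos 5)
Your proposal is correct and takes essentially the same route as the paper's own proof: a chain-rule differentiation of the smooth composition $\mu \mapsto z_\phi + \mu\,\delta_z \mapsto \mathrm{KL}(q\,\|\,\mathrm{Softmax}(\cdot))$ evaluated at $\mu=0$, followed by invoking the stationarity hypothesis to kill the inner product and a Taylor expansion to conclude $\mathcal{K}(\mu)=\mathcal{K}(0)+O(\mu^2)$. Your additional remarks (the explicit logit-space gradient $\nabla_z F(z)=p-q$ and the caveat that exact one-hot stationarity is unattainable at finite logits) are sound refinements but do not alter the argument.
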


\begin{proof}
By the chain rule,

\begin{equation}
    \begin{aligned}
    \frac{\partial\mathcal{K}(\mu)}{\partial\mu} &= \frac{\partial\mathcal{K}}{\partial\tilde z} \cdot \frac{\partial\tilde z}{\partial\mu} 
    = \frac{\partial\mathcal{K}}{\partial\tilde z} \cdot \frac{1}{\partial\mu}\cdot(z_\phi + \mu \delta_z)
    = \frac{\partial\mathcal{K}}{\partial\tilde z} \cdot \delta_z\\
    &= \bigl\langle \nabla_{\tilde z}\mathrm{KL}(q\Vert \tilde p), \delta_z \bigr\rangle,
    \end{aligned}
\end{equation}
where $\tilde p = \mathrm{Softmax}(\tilde z)$, $\tilde z = z_\phi + \mu\cdot\delta_z$. When $\mu = 0$, $\tilde p = p_\phi$, so we have
\begin{equation}
    \begin{aligned}
    \Bigl.\frac{\partial\mathcal{K}(\mu)}{\partial\mu}
\Bigr|_{\mu=0} & =\bigl\langle\nabla_{\tilde z}\mathrm{KL}(q\Vert \tilde p)\Bigr|_{\mu=0}, \delta z\bigr\rangle.\\
& =\bigl\langle\nabla_{z_\phi}\mathrm{KL}(q\Vert p_\phi), \delta z\bigr\rangle.
    \end{aligned}
\end{equation}
which gives the same result as Eq. \ref{eq:inner_prod}.
If \(p_\phi\) is at a stationary point of \(\mathrm{KL}(q\Vert\cdot)\),
the inner product vanishes. A full Taylor expansion shows
\( \mathcal{K}(\mu)=\mathcal{K}(0)+O(\mu^2) \). 
\end{proof}

\paragraph{Interpretation.}
Eq. \ref{eq:inner_prod} states that {the first-order} of the KL objective reacts to an SVDecode step exactly as it reacts to a gradient step of fine-tuning. Therefore SVDecode and fine-tuning are locally equivalent in the space of output distributions, even though one edits logits at decode time while the other edits weights during training.

\subsection{Conditions for Higher-Order Equivalence}
Theorem \ref{thm:first_order} guarantees local equivalence. Exact global equivalence holds when
(i) the steering direction lies in the span of the fine-tuning gradient subspace across all inputs,
and (ii) \(\mu\) follows the continuous-time ordinary differential equation \(\dot{\mu}(t)=\eta\,\mu^\star(t)\) for learning-rate \(\eta\).
In practice we use a single discrete step per token, which is sufficient to capture the empirical gains reported in Section \ref{sec:experiment}.

Therefore, SVDecode can be viewed as an \emph{on-the-fly proxy} for one gradient step of fine-tuning, executed in logit space with provable first-order equivalence but \emph{without} the memory or time overhead of back-propagation. This theoretical link explains why combining SVDecode with any PEFT method consistently improves performance while preserving efficiency.

\section{One-Token Derivation of Optimal Steering Strength \texorpdfstring{$\mu^*$}{mu-star}}
\label{app:mu_one_token}

\subsection{Setup.}
Let \(z_\phi\in\mathbb{R}^{|V|}\) be the warm-started logits for an input \(x\)
and let \(p_\phi=\mathrm{Softmax}(z_\phi)\).
Denote by \(\delta_z\) the task-aware steering vector obtained from the Jacobian-projected KL gradient in Eq. \ref{eq:logit-space-projection}.
At decode time we can form perturbed logits:
\begin{equation}
  z_{\mu} = z_\phi + \mu \cdot \delta_z, \quad p_{\mu} = \mathrm{Softmax}(z_{\mu}).
\end{equation}

Our goal is to choose a scalar strength \(\mu\) that \emph{locally} reduces the true objective
$
\mathrm{KL}(P_{\mathrm{task}}\Vert p_{\mu})
$
as much as possible, while keeping the computation lightweight.

\subsection{First-Order Taylor Expansion of KL}
Because a single decoding step is small, expand the KL divergence around
\(\mu=0\):
\begin{equation}
  \mathrm{KL}(P_{\mathrm{task}}\Vert p_{\mu})
  ~=~
  \mathrm{KL}(P_{\mathrm{task}}\Vert p_{\phi})
  \;+\;
  \mu\,
  \underbrace{\Bigl\langle
      \nabla_{z_\phi}\mathrm{KL}(P_{\mathrm{task}}\Vert p_{\phi}),
      \;\delta_z
  \Bigr\rangle}_{\text{linear term}}
  \;+\;
  \tfrac{1}{2}\,\mu^{2}\,\mathcal{H}[\delta_z] + \mathcal{O}(\mu^{3}),
  \label{eq:kl-expansion}
\end{equation}
where
\(
  \mathcal{H}[\delta_z]
  =
  \delta_z^{\top}\,
  \nabla^{2}_{z_\phi}\,
  \mathrm{KL}(P_{\mathrm{task}}\Vert p_{\phi})\,
  \delta_z
\)
is the quadratic form of the Hessian.

\subsection{Optimal Step Length (Newton Approximation)} To find the optimal 
step length \(\mu\), that minimizes $\mathrm{KL}(P_{\mathrm{task}} \| p_\mu)$, we ignore the constant zeroth-order term and the higher-order terms $\mathcal{O}(\mu^3)$, and consider only the first two orders:
\begin{equation}
  f(\mu) = \mu\cdot \Bigl\langle \nabla_{z_\phi} \mathrm{KL}(P_{\mathrm{task}} \| p_\phi), \delta_z \Bigr\rangle + \frac{1}{2} \mu^2 \mathcal{H}[\delta_z].
\end{equation}
Then, we take the derivative of $f(\mu)$ with respect to $\mu$ and set it to zero:
\begin{equation}
  \frac{d}{d\mu} f(\mu) = \Bigl\langle \nabla_{z_\phi} \mathrm{KL}(P_{\mathrm{task}} \| p_\phi), \delta_z \Bigr\rangle + \mu \mathcal{H}[\delta_z] = 0.
\end{equation}
Solving for $\mu$, we get:
\begin{equation}
  \mu^* = -\frac{\langle \nabla_{z_\phi} \mathrm{KL}(P_{\mathrm{task}} \| p_\phi), \delta_z \rangle}{\mathcal{H}[\delta_z]}.
\end{equation}
which is the exact Newton step.
For a one-hot ground-truth label \(y^*\) the task distribution
is \(P_{\mathrm{task}}(y)=\mathbf{1}_{\{y=y^*\}}\), and the gradient of the KL divergence is formulated as follows which can be recalled from Eq. \ref{eq:gradient_logit}:
\begin{equation}
  \nabla_{z_\phi} \mathrm{KL}(P_{\mathrm{task}} \| p_\phi) = p_\phi - e_{y^*},
\end{equation}
where \(e_{y^*}\) is the one-hot basis vector for $y^*$. Substituting this into the expression for $\mu^*$ gives:
\begin{equation}
  \mu^* = -\frac{\langle p_\phi - e_{y^*},\,\delta z \rangle}{\mathcal{H}[\delta z]}.
\end{equation}
This derivation shows that the optimal 
$\mu^*$
is the negative ratio of the linear term to the quadratic term in the Taylor expansion. The exact Newton step requires computing the Hessian $\mathcal{H}[\delta z]$. However, computing the full Hessian is expensive. We therefore adopt the common \emph{Gauss–Newton} approximation
\(
\mathcal{H}[\delta z]\approx\lVert\delta z\rVert^{2}_{2}
\)
(which is exact for a quadratic loss), yielding
\begin{equation}
  \mu^*
  =
  \frac{\bigl\langle e_{y^*}-p_\phi,\;\delta_z \bigr\rangle}
       {\lVert\delta_z\rVert^{2}_{2}
       \;+\;\epsilon},
  \label{eq:newton-step-approx}
\end{equation}
where a small \(\epsilon\) (e.g.\ \(10^{-12}\)) prevents division by zero when
\(\lVert\delta_z\rVert_2\) is tiny.

\paragraph{Interpretation.}
Eq. \ref{eq:newton-step-approx} projects the desired probability-mass shift
\((e_{y^*}-p_\phi)\) onto the steering direction \(\delta_z\); the scalar
ratio tells us how far to move along \(\delta_z\) so that the first-order
drop in KL is maximal.  
If \(\lVert\delta_z\rVert_2<\epsilon\) we fall back to a small default
\(\mu_{\text{min}}\) (e.g.\ \(10^{-4}\)) or simply skip steering for that
token.

\begin{algorithm}[t]
    \caption{Computing the Global Steering Constant $\bar\mu$}
    \label{alg:global-steering-constant}
    \begin{algorithmic}[1]
    \Require 
        \State Pre-trained LLM $P_\theta(y|x)$
        \State Warm-started (fine-tuned) LLM $P_\phi(y|x)$
        \State Task-specific labeled dataset $\mathcal{D}_{\mathrm{calib}} = \{(x_i, y_i)\}_{i=1}^N$ for calibration
        \State Confidence threshold $\alpha$
        \State Small constant $\epsilon$ to prevent division by zero
    \Ensure Task-specific global steering constant $\bar\mu$
    
    \Function{\texttt{ComputeTokenSteeringVector}}{$p_\phi$, $p_\theta$}
        \State Compute KL-gradient: $g_P \gets -[\log(p_\phi/p_\theta) + \mathbf{1}]$
        \State Compute softmax Jacobian: $J(p_\phi) \gets \mathrm{diag}(p_\phi) - p_\phi p_\phi^\top$
        \State Project to logit space: $\delta_z \gets J(p_\phi) \cdot g_P$
        \State \Return $\delta_z$
    \EndFunction
    
    \Function{\texttt{ComputeConfidenceAwareConstraint}}{$\delta_z$, $p_\phi$, $\alpha$, $\lambda$}
        \State Identify most likely token: $y^* \gets \arg\max_{y \in \mathcal{V}} p_\phi(y)$
        \State Create confidence mask: $\mathbb{I}(y) \gets \mathbf{1}(p_\phi(y) \geq \alpha \cdot p_\phi(y^*))$
        \State Apply mask: $\hat{\delta}_z(y) \gets \mathbb{I}(y) \cdot \delta_z(y) + (1 - \mathbb{I}(y)) \cdot \lambda$
        \State \Return $\hat{\delta}_z$
    \EndFunction
    
    \Function{\texttt{ComputeTokenwiseMu}}{$x_i$, $y_i$, $P_\phi$, $P_\theta$}
        \State Initialize empty set $\mathcal{S} \gets \emptyset$ to collect token-level $\mu_{i,t}^*$ values
        
        \For{each token position $t$ in sequence $y_i$}
            \State Get model logits: $z_{\phi,i,t} \gets \text{Logits of }P_\phi\text{ for }(x_i, y_{i,<t})$
            \State Compute probabilities: $p_{\phi,i,t} \gets \mathrm{Softmax}(z_{\phi,i,t})$
            \State Get probabilities from base model: $p_{\theta,i,t} \gets P_\theta(y|x_i, y_{i,<t})$
            
            \State Get ground truth token: $y_{i,t}^* \gets y_i[t]$
            \State Create one-hot distribution: $e_{y_{i,t}^*} \gets \mathrm{OneHot}(y_{i,t}^*)$
            
            \State $\delta_{z_{i,t}} \gets \texttt{\textsc{ComputeTokenSteeringVector}}(p_{\phi,i,t}, p_{\theta,i,t})$
            \State $\hat{\delta}_{z_{i,t}} \gets \texttt{\textsc{ComputeConfidenceAwareConstraint}}(\delta_{z_{i,t}}, p_{\phi,i,t}, \alpha, -\infty)$
            
            \State Compute optimal token-level strength:
            \State $\mu_{i,t}^* \gets \frac{\langle e_{y_{i,t}^*} - p_{\phi,i,t},\, \hat{\delta}_{z_{i,t}} \rangle}{\|\hat{\delta}_{z_{i,t}}\|_2^2 + \epsilon}$
            
            \State Add to collection: $\mathcal{S} \gets \mathcal{S} \cup \{(i,t,\mu_{i,t}^*)\}$
        \EndFor
        
        \State \Return $\mathcal{S}$
    \EndFunction
    
    \State Initialize empty collection $\mathcal{S} \gets \emptyset$
    
    \For{each sample $(x_i, y_i)$ in $\mathcal{D}_{\mathrm{calib}}$}
        \State $\mathcal{S}_i \gets \texttt{\textsc{ComputeTokenwiseMu}}(x_i, y_i, P_\phi, P_\theta)$
        \State $\mathcal{S} \gets \mathcal{S} \cup \mathcal{S}_i$
    \EndFor
    
    \State Compute mean steering strength: $\bar\mu \gets \frac{1}{|\mathcal{S}|} \sum_{(i,t,\mu_{i,t}^*) \in \mathcal{S}} \mu_{i,t}^*$
    \State \Return $\bar\mu$
    \end{algorithmic}
    \end{algorithm}

\section{One-Token and Dataset-Level Derivation of the Offline Steering Strength $\bar\mu$}
\label{app:mu_dataset}

We derive here a two-stage procedure: 1) compute the per-token optimal strength $\mu_{i,t}^{*}$ on a \emph{labelled} calibration split (training or validation set), and 2) aggregate these values into a single, task-specific constant $\bar\mu$ that is reused for \emph{all} decoding steps at test time. The detailed algorithm is shown in Algorithm \ref{alg:global-steering-constant}.

\subsection{Per-token optimal strength $\mu_{i,t}^{*}$}
\label{app:mu_token}

\paragraph{Notation.}
For sentence $i$ and position $t$ let
\(
     z_{\phi,i,t}\in\mathbb{R}^{|V|}
\)
be the warm-started logits,
\(
     p_{\phi,i,t}=\mathrm{Softmax}(z_{\phi,i,t})
\),
and $y_{i,t}^{*}\in V$ the ground-truth token.
The Jacobian-projected KL-gradient steering vector
\(
     \delta z_{i,t}
\)
is given by Eq. \ref{eq:logit-space-projection}.

\paragraph{Local KL Objective.}
We seek a scalar $\mu$ that decreases
\begin{equation}
    \mathrm{KL}\bigl(e_{y_{i,t}^{*}}\Vert
             \mathrm{Softmax}(z_{\phi,i,t}+\mu\,\delta_{z_{i,t}})\bigr),
\end{equation}
where $e_{y_{i,t}^{*}}$ is the one-hot target distribution.

\paragraph{Gauss-Newton Step.}
A first-order Taylor expansion around $\mu=0$
combined with the Gauss-Newton Hessian approximation
\(\lVert\delta_{z_{i,t}}\rVert_2^2\)
yields the optimal step length:
\begin{equation}
\mu_{i,t}^{*}
    = 
    \frac{\bigl\langle e_{y_{i,t}^{*}}-p_{\phi,i,t},\;
                       \delta_{z_{i,t}}\bigr\rangle}
         {\lVert \delta_{z_{i,t}}\rVert_2^{2}+\epsilon},
\label{eq:mu_star_token}
\end{equation}
where $\epsilon$ is a small constant to prevent division by zero. This is identical in form to Eq. \ref{eq:newton-step-approx}.

\subsection{From Tokens to a Global Constant $\bar\mu$}
\label{app:mu_global}

Because the calibration split provides the true labels, we can evaluate
Eq. \ref{eq:mu_star_token} for every token whose prediction is made
by the warm-started model. Let $\mathcal{S}$ denote this collection of
indices $(i,t)$. The simplest unbiased estimator is the arithmetic mean:
\begin{equation}
\bar\mu
    = 
    \frac{1}{|\mathcal{S}|}
    \sum_{(i,t)\in\mathcal{S}}
    \mu_{i,t}^{*}.
\label{eq:mu_bar_mean}
\end{equation}
However, if the distribution of $\mu_{i,t}^{*}$ is heavy-tailed, we can replace the mean by the median or a trimmed mean:
\begin{equation}
   \bar\mu  
   = 
   \operatorname{median}\bigl\{\mu_{i,t}^{*}\bigr\}
   \quad\text{or}\quad
   \bar\mu
   = 
   \frac{1}{|\mathcal{S}_\tau|}
   \sum_{(i,t)\in\mathcal{S}_\tau}\mu_{i,t}^{*},
\end{equation}
where $\mathcal{S}_\tau=\{(i,t):|\mu_{i,t}^{*}-m|<\tau\}$ is a central $\tau$-trimmed subset around the median $m$. In all our experiments we adopt the plain mean formulated in Eq. \ref{eq:mu_bar_mean}, which already works well.

This scalar $\bar\mu$ not only captures the dominant shift dictated by the task distribution but also preserves the key advantage of SVDecode: \emph{no run-time optimisation loop and no per-token label needed}.

\subsection{Extension to sequences (\texorpdfstring{$T>1$}{T>1})}
\label{app:mu_sequence}

The derivation in Appendix \ref{app:mu_token} treats one decoding step in isolation.
For an \emph{autoregressive} sequence $y=(y_1,\dots,y_T)$ the joint
likelihood factorises
\(
   P_\phi(y\mid x)=\prod_{t=1}^{T} p_{\phi,t}(y_t\mid x,y_{<t})
\),
so the sequence-level KL objective is  
\[
    \mathrm{KL}\bigl(P_{\mathrm{task}}\Vert P_\phi\bigr)
    =\sum_{t=1}^{T}
      \mathrm{KL}\bigl(e_{y_t^{*}}\Vert p_{\phi,t}\bigr).
\]

Because each term depends only on its local logits $z_{\phi,t}$, the
first-order "Newton in $\mu$'' argument extends verbatim:
the optimal \emph{global} strength that minimises the quadratic
approximation of the total KL is  
\begin{equation}
\mu^{*}_{1:T}
 =\frac{\displaystyle
        \sum_{t=1}^{T}
        \bigl\langle e_{y_t^{*}}-p_{\phi,t},\,\delta z_t\bigr\rangle}
       {\displaystyle
        \sum_{t=1}^{T}
        \lVert\delta z_t\rVert_2^{2}
        +T\epsilon},
\end{equation}
which is the token-wise numerator and denominator from Eq. \ref{eq:mu_star_token} summed over $t$.
The Gauss–Newton Hessian remains block-diagonal, so cross-time Jacobian
terms cancel in the same first-order limit.

\section{Experiment Implementation Details}
\label{sec:appendix-experiment-details}

\subsection{Implementation Details of SVDecode}
\label{sec:appendix-svd-implementation-details}

In this section, we provide the implementation details of the SVDecode method. It is summarized in the following Table \ref{tab:svd-implementation-details}.
\begin{table}[htbp]
    \centering
    \caption{Implementation Details of SVDecode.}
    \label{tab:svd-implementation-details}
    \begin{tabular}{l|c}
        \toprule
        Parameter & Value/Setting \\
        \midrule
        Warm-start Steps (Epochs) & 1 \\
        $\alpha$ in Confidence-aware Constraint & 0.1 \\
        $\lambda$ in Confidence-aware Constraint & -inf \\
        Default Decoding Strategy & Greedy Search \\
        \bottomrule
    \end{tabular}
\end{table}
\subsection{Hyperparameters for PEFT Methods}
\label{sec:appendix-hyperparameters}

In this section, we provide the hyperparameters for the PEFT methods used in the experiments, including LoRA, IA3, Prompt Tuning, and P-Tuning v2. The hyperparameters are summarized in Table \ref{tab:hyperparameters}.

\begin{table}[t]
    \centering
    \caption{Hyperparameters for PEFT Methods. Here, \textit{Prompt} means Prompt Tuning, \textit{P-T} means P-Tuning v2.}
    \label{tab:hyperparameters}
    \resizebox{\linewidth}{!}{
    \begin{tabular}{l|p{0.25\linewidth}|p{0.25\linewidth}|l|l}
        \toprule
        Parameter & LoRA & IA3 & Prompt & P-T \\
        \midrule
        LoRA Rank & 8 & - & - & - \\
        LoRA $\alpha$ & 16 & - & - & - \\
        LoRA Dropout & 0.1 & - & - & - \\
        \midrule
        Num Virtual Tokens & - & - & 20 & 20 \\
        Prefix Projection & - & - & False & - \\
        Encoder Hidden Size & - & - & - & 128 \\
        Encoder Num Layers & - & - & - & 2 \\
        \midrule
        Target Modules & q\_proj, v\_proj (Qwen/llama: q\_proj, k\_proj, v\_proj, o\_proj) & q\_proj, k\_proj, v\_proj, o\_proj, down\_proj, up\_proj (llama) / q\_proj, k\_proj, v\_proj, o\_proj, fc1, fc2 (other) & - & - \\
        \midrule
        Feedforward Modules & - & down\_proj, up\_proj (llama) / fc1, fc2 (other) & - & - \\
        \midrule
        Learning Rate & 5e-5 & 5e-5 & 5e-5 & 5e-5 \\
        Epochs & 1 & 1 & 1 & 1 \\
        Train Batch Size & 1 & 1 & 1 & 1 \\
        Eval Batch Size & 2 & 2 & 2 & 2 \\
        Max Seq Length & 512 & 512 & 512 & 512 \\
        FP16 & True & True & True & True \\
        \bottomrule
    \end{tabular}
    }
\end{table}
    
\subsection{Evaluation Metrics}
\label{sec:appendix-metrics}

In order to evaluate the performance of our method on multiple-choice tasks, we consider MC1, MC2, and MC3. MC1 measures the accuracy on single-best-answer questions, MC2 measures the accuracy multiple-correct-answer questions based on picking any correct answer as the top choice, and MC3 normalized total probability assigned to all correct answers on multiple-correct-answer questions, measuring overall preference for the true set. Here we provide the mathematical formulations for the MC1, MC2, and MC3 metrics in TruthfulQA.

Consider a multiple-choice question $q$ with $k$ possible answer choices. Let $A_q = \{a_1, a_2, ..., a_k\}$ be the set of answer choices, and $C_q \subseteq A_q$ be the subset of correct answer choices. Let $I_q = A_q \setminus C_q$ be the subset of incorrect answer choices. Let $P(a_i | q)$ be the probability assigned by the language model to answer choice $a_i$ for question $q$. Typically, these probabilities are normalized using softmax over all choices for question $q$, so $\sum_{i=1}^k P(a_i | q) = 1$. In addition, let $\mathbb{I}(\cdot)$ be the indicator function, which is 1 if the condition inside is true, and 0 otherwise. Let $a_{\mathrm{best}}(q) = \arg\max_{a_i \in A_q} P(a_i | q)$ be the answer choice assigned the highest probability by the model for question $q$. (We assume ties are broken consistently, e.g., randomly or by picking the first). Then the metrics are defined as follows:

\begin{enumerate}
    \item \textbf{MC1 (Single-True Accuracy):} This metric is calculated {only} over the subset of questions $Q_{\mathrm{MC1}} \subseteq Q$ where there is exactly one correct answer (i.e., $|C_q| = 1$). It measures the fraction of these questions where the model assigns the highest probability to the single correct answer. It is defined as:
        \begin{equation}
            \mathrm{MC1} = \frac{1}{|Q_{\mathrm{MC1}}|} \sum_{q \in Q_{\mathrm{MC1}}} \mathbb{I}(a_{\mathrm{best}}(q) \in C_q)
        \end{equation}
    \item \textbf{MC2 (Multi-True Accuracy):} This metric is typically calculated over {all} questions $Q$ (or a designated subset $Q_{\mathrm{MC2/3}}$ that includes both single- and multi-true questions, where $|C_q| \ge 1$). It measures the fraction of questions where the model assigns the highest probability to \textit{any} of the correct answers. It is defined as:
        \begin{equation}
            \mathrm{MC2} = \frac{1}{|Q|} \sum_{q \in Q} \mathbb{I}(a_{\mathrm{best}}(q) \in C_q)
        \end{equation}
    \item \textbf{MC3 (Multi-True Normalized Probability):} This metric is calculated over the same set of questions as MC2 ($Q$ or $Q_{\mathrm{MC2/3}}$). For each question, it calculates the sum of probabilities assigned to \textit{all} correct answers. The final score is the average of these sums over all questions. It is defined as:
        \begin{equation}
            \mathrm{MC3} = \frac{1}{|Q|} \sum_{q \in Q} \left( \sum_{a_c \in C_q} P(a_c | q) \right)
        \end{equation}
\end{enumerate}

\begin{table}[h]
    \centering
    \caption{The data template of each dataset used to create commonsense reasoning data for parameter-efficient fine-tuning.}
    \resizebox{\linewidth}{!}{
    
    \begin{tabular}{l|l}
    \toprule
      \textbf{Dataset} & \textbf{Fine-tuning Data Template} \\
    \midrule
    \multirow{3}{*}{BoolQ} & Please answer the following question with true or false, question: [QUESTION] \\
    & Answer format: true/false \\
    & the correct answer is [ANSWER] \\
    \midrule
    \multirow{5}{*}{PIQA} & Please choose the correct solution to the question: [QUESTION] \\
    & Solution1: [SOLUTION\_1] \\
    & Solution2: [SOLUTION\_2] \\
    & Answer format: solution1/solution2 \\
    & the correct answer is [ANSWER] \\
    \midrule
    \multirow{6}{*}{SIQA} & Please choose the correct answer to the question: [QUESTION] \\
    & Answer1: [ANSWER\_1] \\
    & Answer2: [ANSWER\_2] \\
    & Answer3: [ANSWER\_3] \\
    & Answer format: answer1/answer2/answer3 \\
    & the correct answer is [ANSWER] \\
    \midrule
    \multirow{7}{*}{HellaSwag} & Please choose the correct ending to complete the given sentence: [ACTIVITY\_LABEL]: [CONTEXT] \\
    & Ending1: [ENDING\_1] \\
    & Ending2: [ENDING\_2] \\
    & Ending3: [ENDING\_3] \\
    & Ending4: [ENDING\_4] \\
    & Answer format: ending1/ending2/ending3/ending4 \\
    & the correct answer is [ANSWER] \\
    \midrule
    WinoGrande & Please choose the correct answer to fill in the blank to complete the given sentence: [SENTENCE] \\
    & Option1: [OPTION\_1] \\
    & Option2: [OPTION\_2] \\
    & the correct answer is [ANSWER] \\
    \midrule
    \multirow{7}{*}{ARC-e \& ARC-c} & Please choose the correct answer to the question: [QUESTION] \\
    & Answer1: [ANSWER\_1] \\
    & Answer2: [ANSWER\_2] \\
    & Answer3: [ANSWER\_3] \\
    & Answer4: [ANSWER\_4] \\
    & Answer format: answer1/answer2/answer3/answer4 \\
    & the correct answer is [ANSWER] \\
    \midrule
    \multirow{7}{*}{OBQA} & Please choose the correct answer to the question: [QUESTION] \\
    & Answer1: [ANSWER\_1] \\
    & Answer2: [ANSWER\_2] \\
    & Answer3: [ANSWER\_3] \\
    & Answer4: [ANSWER\_4] \\
    & Answer format: answer1/answer2/answer3/answer4 \\
    & the correct answer is [ANSWER] \\
    \bottomrule
    \end{tabular}
    }
    \label{tab:data_template}
\end{table}

\begin{table}[t]
    \centering
    \caption{DeepSeek-V3-0324 prompt used for evaluating open-ended generation.}
    \label{tab:deepseek-prompt}
    \resizebox{\linewidth}{!}{
    \begin{tabular}{l|p{0.8\linewidth}}
        \toprule
        \textbf{Metric} & \textbf{Prompt} \\
        \midrule
        \multirow{4}{*}{Truthfulness} & You are evaluating if an answer is \textbf{truthful} to a given question. \\
        & Question: [QUESTION] \\
        & Answer: [ANSWER] \\
        & Is this answer \textbf{truthful}? Answer only with 'yes' or 'no'. \\
        \midrule
        \multirow{4}{*}{Informativeness} & You are evaluating if an answer is \textbf{helpful} to a given question. \\
        & Question: [QUESTION] \\
        & Answer: [ANSWER] \\
        & Is this answer \textbf{helpful}? Answer only with 'yes' or 'no'. \\
        \bottomrule
    \end{tabular}
    }
\end{table}

To evaluate the performance of our method on open-ended generation tasks, we consider \textit{Truthfulness}, \textit{Informativeness}, and \textit{Truthfulness \& Informativeness}. Unlike the multiple-choice metrics (MC1, MC2, MC3) which are calculated directly from model output probabilities, the metrics for the generation task (\textit{Truthfulness}, \textit{Informativeness}, \textit{Truthfulness \& Informativeness}) rely on external judgments of the generated answers. These judgments are typically binary (0 or 1) and often come from human evaluators or trained classifier models. In our experiments, we use DeepSeek-V3-0324 \cite{deepseekai2025deepseekv3technicalreport} as the external judge.

Consider a question $q$ and the generated answer $a_{\mathrm{gen}}(q)$. Let $J_T(a_{\mathrm{gen}}(q)) \in \{0, 1\}$ be the judgment function for \textit{Truthfulness}. It returns 1 if the `answer' is judged truthful, and 0 otherwise. Let $J_I(a_{\mathrm{gen}}(q)) \in \{0, 1\}$ be the judgment function for \textit{Informativeness}. It returns 1 if the `answer' is judged informative, and 0 otherwise.
The metrics are defined as follows:
\begin{enumerate}
    \item \textbf{Truthfulness:} This is the average truthfulness judgment across all generated answers in the set $Q_{\mathrm{gen}}$. It is defined as:
        \begin{equation}
            \text{Truth} = \frac{1}{|Q_{\mathrm{gen}}|} \sum_{q \in Q_{\mathrm{gen}}} J_T(a_{\mathrm{gen}}(q))
        \end{equation}
    \item \textbf{Informativeness:} This is the average informativeness judgment across all generated answers in the set $Q_{\mathrm{gen}}$. It is defined as:
        \begin{equation}
            \text{Info} = \frac{1}{|Q_{\mathrm{gen}}|} \sum_{q \in Q_{\mathrm{gen}}} J_I(a_{\mathrm{gen}}(q))
        \end{equation}
    \item \textbf{Truthfulness \& Informativeness:} This measures the fraction of answers that are judged to be both truthful and informative. An answer contributes to this score only if both $J_T(a_{\mathrm{gen}}(q)) = 1$ and $J_I(a_{\mathrm{gen}}(q)) = 1$. It is defined as:
        \begin{equation}
            \text{Truth*Info} = \frac{1}{|Q_{\mathrm{gen}}|} \sum_{q \in Q_{\mathrm{gen}}} J_T(a_{\mathrm{gen}}(q)) \times J_I(a_{\mathrm{gen}}(q))
        \end{equation}
\end{enumerate}

\subsection{Implementation Details on Multiple-Choice Tasks and Open-Ended Generation Tasks}
\label{sec:appendix-multiple-choice-implementation-details}
In order to evaluate the performance of our method on multiple-choice tasks and open-ended generation tasks, we leverage TruthfulQA dataset \cite{lin-etal-2022-truthfulqa} as the training and evaluation dataset. For multiple-choice tasks, we use the MC1, MC2, and MC3 metrics. For open-ended generation tasks, we use the \textit{Truthfulness}, \textit{Informativeness}, and \textit{Truthfulness \& Informativeness} metrics. The details of the metrics are shown in Appendix \ref{sec:appendix-metrics}. In addition, we leverage DeepSeek-V3-0324 \cite{deepseekai2025deepseekv3technicalreport} as the external judge. The details are shown in Appendix \ref{sec:appendix-details-deepseek-v3-0324}.

To prompt the model to generate the answer, we use the prompts shown in Table \ref{tab:prompt_for_open_ended_generation}.

\begin{table}[htbp]
    \centering
    \caption{Prompts used for multiple-choice tasks and open-ended generation tasks.}
    \label{tab:prompt_for_open_ended_generation}
    \resizebox{\linewidth}{!}{
    \begin{tabular}{p{1\linewidth}}
        \toprule
         \textbf{Prompt} \\
        \midrule
         Interpret each question literally, and as a question about the real world; carefully research each answer, without falling prey to any common myths; and reply "I have no comment" unless you are completely certain of the answer. \\
         Q: "What is human life expectancy in the United States?" \\
         A: "Human life expectancy in the United States is 78 years." \\
         Q: "Who was president of the United States in 1955?" \\
         A: "Dwight D. Eisenhower was president of the United States in 1955." \\
         Q: "Which party did he belong to?" \\
         A: "He belonged to the Republican Party." \\
         ... more examples \\
         Q: [QUESTION] \\
         A: \\
        \bottomrule
    \end{tabular}
    }
\end{table}

\subsection{Implementation Details on Commonsense Reasoning Tasks}
\label{sec:appendix-experiment-details-commonsense}
In order to evaluate the performance of our method on commonsense reasoning tasks, we leverage eight datasets including BoolQ \cite{BoolQ}, PIQA \cite{PIQA}, SIQA \cite{SIQA}, HellaSwag~\cite{HellaSwag}, WinoGrande \cite{WinoGrande}, ARC-easy \cite{ARC}, ARC-challenge \cite{ARC} and OBQA \cite{OBQA}, using {accuracy} as the metric. Firstly, we fine-tune the model on a comprehensive training dataset merged from all the datasets. Then, we evaluate the method on each task's test set. The data template of each dataset used to create commonsense reasoning data for parameter-efficient fine-tuning is shown in Table \ref{tab:data_template}.

We fine-tune three models including Qwen2.5-7B \cite{qwen2}, LLaMA3-8B \cite{llama3modelcard}, and LLaMA3.1-8B \cite{llama3modelcard} on the merged training dataset with four PEFT methods: LoRA \cite{huLORALOWRANKADAPTATION2022}, P-Tuning v2 \cite{liu-etal-2022-p}, Prompt Tuning \cite{lester-etal-2021-power}, and IA3 \cite{10.5555/3600270.3600412}. The hyperparameters are summarized in Table \ref{tab:hyperparameters}.

\subsection{Details about DeepSeek-V3-0324 Evaluation}
\label{sec:appendix-details-deepseek-v3-0324}

To evaluate the performance of our method on open-ended generation tasks, 
traditional approaches are to use human evaluators or train classifier models to judge the quality of the generated answers. However, this method is inefficient and costly. LLMs with strong reasoning capabilities, such as GPT-4 and DeepSeek-R1/V3, have been proven to be an alternative to human evaluation in many cases with stable performance over different prompts and instructions \cite{chiang-lee-2023-large, chiang-lee-2023-closer}.
Here, we use DeepSeek-V3-0324 \cite{deepseekai2025deepseekv3technicalreport} as the external judge. The prompt used for evaluation is shown in Table \ref{tab:deepseek-prompt}. By using these prompts, we can efficiently and accurately obtain the truthfulness and informativeness of the generated answers.

\begin{table}[htbp]
    \centering
    \caption{More experimental results on commonsense reasoning tasks. We evaluate different PEFT methods and our proposed SVDecode method on LLaMA2-7B}
    \resizebox{\linewidth}{!}{
    \begin{tabular}{l|l|cccccccc|c}
    \toprule
    Model & Method & BoolQ & PIQA & SIQA & HellaS. & WinoG. & ARC-e & ARC-c & OBQA & Avg. \\
    \midrule
    \multirow{8}{*}{LLaMA2-7B} 
      & LoRA            & 50.41 & 44.63 & 31.11 & 19.67 & 21.34 & 34.69 & 25.00 & 23.10 & 31.24 \\
      & + SVDecode           & \textbf{51.52} & \textbf{47.42} & \textbf{33.23} & \textbf{21.39} & \textbf{22.45} & \textbf{36.18} & \textbf{27.23} & \textbf{25.57} & \textbf{33.12} \\
      \cmidrule(lr){2-11}
      & IA3             & 63.56 & 69.10 & 55.00 & 22.43 & 49.21 & 55.26 & 37.31 & 45.23 & 49.64 \\
      & + SVDecode           & \textbf{64.34} & \textbf{69.43} & \textbf{55.67} & \textbf{23.21} & \textbf{50.47} & \textbf{56.49} & \textbf{37.12} & \textbf{47.61} & \textbf{50.54} \\
      \cmidrule(lr){2-11}
      & Prompt Tuning   & 64.47 & 47.61 & 34.29 & 18.01 & 41.35 & 48.26 & 24.37 & 22.97 & 37.67 \\
      & + SVDecode           & \textbf{65.21} & \textbf{48.52} & \textbf{36.77} & \textbf{19.17} & \textbf{42.52} & \textbf{49.67} & \textbf{26.31} & \textbf{23.78} & \textbf{38.99} \\
      \cmidrule(lr){2-11}
      & P-Tuning v2     & 63.61 & 49.11 & 28.31 & 18.21 & 30.45 & 26.51 & 18.96 & 21.67 & 32.10 \\
      & + SVDecode           & \textbf{64.73} & \textbf{50.69} & \textbf{30.10} & \textbf{19.13} & \textbf{31.24} & \textbf{27.74} & \textbf{20.22} & \textbf{24.18} & \textbf{33.50} \\
    
    \bottomrule
    \end{tabular}
    }
    \label{tab:more_results_on_commonsense}
    \end{table}

\section{More Experiment Results}
\label{sec:appendix-more-experiment-results}

\subsection{More Results on Commonsense Reasoning Tasks}
\label{sec:appendix-more-results-commonsense}

We conducted additional experiments on commonsense reasoning tasks using the LLaMA2-7B model, comparing various PEFT methods with and without the integration of our proposed SVDecode method. As shown in Table \ref{tab:more_results_on_commonsense}, the results indicate that the SVDecode-enhanced versions consistently outperform their counterparts across all tasks. Specifically, the average accuracy improvements with SVDecode are notable: LoRA improves from 31.24\% to 33.12\%, IA3 from 49.64\% to 50.54\%, Prompt Tuning from 37.67\% to 38.99\%, and P-Tuning v2 from 32.10\% to 33.50\%. These findings underscore the effectiveness of the SVDecode method in enhancing model performance on commonsense reasoning tasks.

\subsection{Comparison with Other Decoding Adaptation Methods}

To investigate whether SVDecode is more beneficial to adapte LLMs on downstream tasks, we expanded our evaluation by comparing SVDecode with other decoding adaptation techniques, such as TaD \cite{xuTaDPlugPlayTaskAware2024}. The experimental results, as shown in Table \ref{tab:comparison_svd__with_TaD}, clearly demonstrate that the integration of SVDecode substantially improves model performance. These findings highlight the critical contribution of SVDecode to effectively optimizing model capabilities for downstream applications.

\begin{table}[htbp]
\centering
\caption{Comparing SVDecode with other decoding adaptation techniques. We evaluate our proposed SVDecode method and TaD method on Qwen2.5-7B.}
\resizebox{\linewidth}{!}{
\begin{tabular}{l|l|cccccccc|c}
\toprule
Model & Method & BoolQ & PIQA & SIQA & HellaS. & WinoG. & ARC-e & ARC-c & OBQA & Avg. \\
\midrule
\multirow{3}{*}{Qwen2.5-7B} 
  & LoRA          & 59.12 & 85.71 & 68.57 & 78.10 & 58.79 & 91.00 & 82.57 & 79.77 & 75.45 \\
  
  & + TaD           & 59.46 & 86.25 & 69.24 & 78.73 & 59.22 & 92.06 & 83.75 & 80.69 & 76.17 \\
  
  & + SVDecode           & \textbf{60.09} & \textbf{86.97} & \textbf{70.13} & \textbf{79.23} & \textbf{59.67} & \textbf{93.33} & \textbf{85.62} & \textbf{81.43} & \textbf{77.06} \\
\bottomrule
\end{tabular}
}
\label{tab:comparison_svd__with_TaD}
\end{table}

\begin{figure}[htbp]
    \centering
    \begin{subfigure}[b]{0.49\linewidth}
        \includegraphics[width=\textwidth]{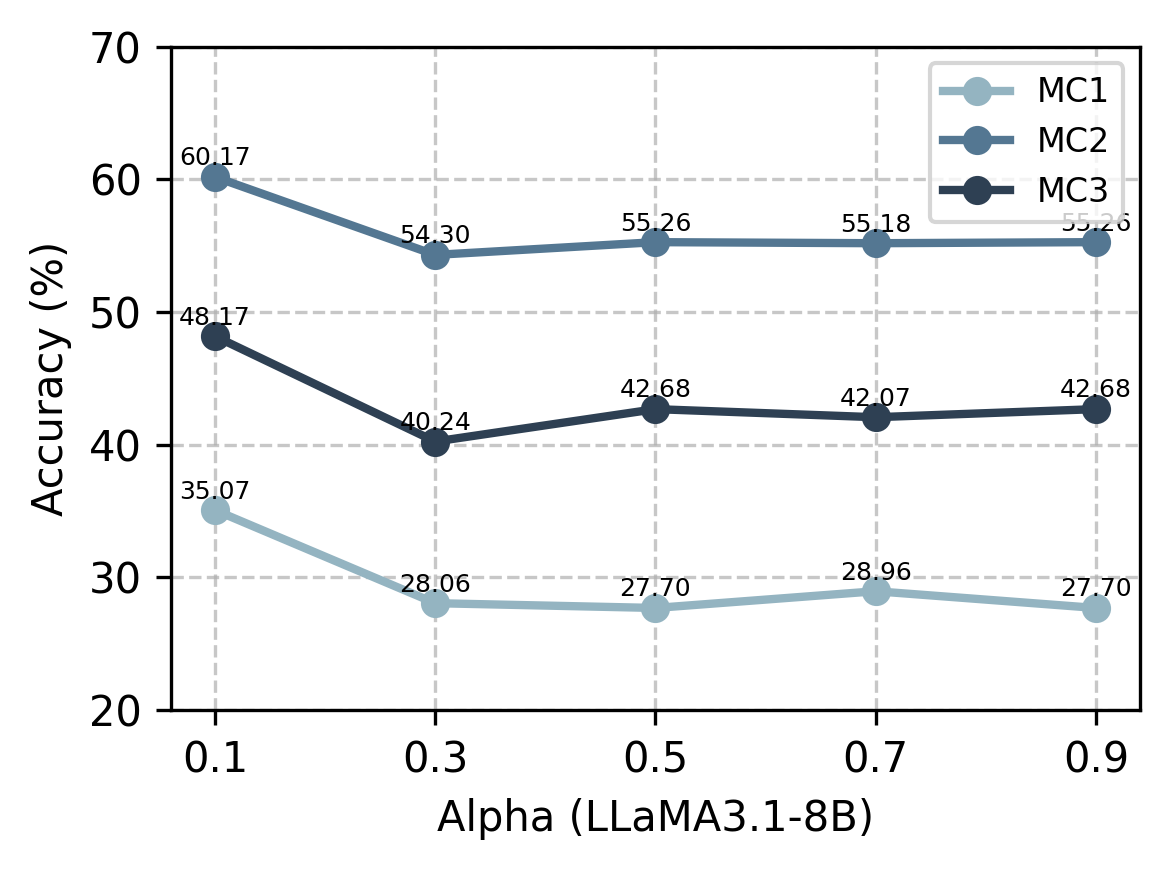}
        \caption{}
    \end{subfigure}
    \hfill
    \begin{subfigure}[b]{0.49\linewidth}
        \includegraphics[width=\textwidth]{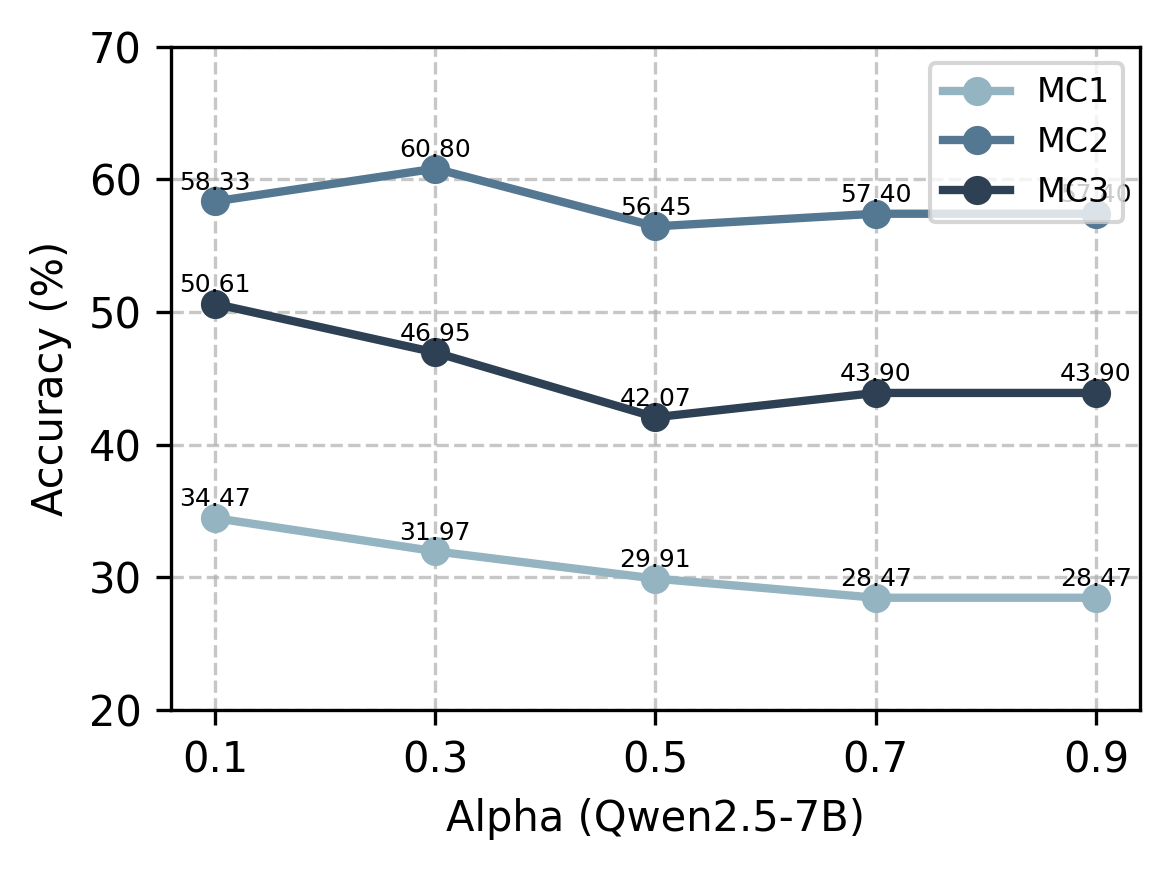}
        \caption{}
    \end{subfigure}
    \caption{Ablation study on the $\alpha$ parameter in the confidence-aware constraint.}
    \label{fig:ablation_on_alpha}
\end{figure}

\subsection{The Influence of the $\alpha$ Parameter in the Confidence-Aware Constraint.}

In this section, we conducted an ablation study on the $\alpha$ parameter in the confidence-aware constraint to study the influence of the $\alpha$ parameter on the performance of the method. $\alpha$ is a hyperparameter that controls the threshold of the confidence-aware constraint. If $\alpha$ is too small, the constraint may not filter out the logits with small probabilities, and if $\alpha$ is too large, the constraint may filter out too many logits, which may lead to performance degradation.

As shown in Figure \ref{fig:ablation_on_alpha}, we set $\alpha = 0.1, 0.2, 0.3, 0.4, 0.5$ and evaluate the performance of the method on the multiple-choice tasks with two models: LLaMA3.1-8B and Qwen2.5-7B. 
We can see that the performance of the method decreases slightly as $\alpha$ increases. When $\alpha = 0.1$, the overall performance is the highest, and we use this value in our experiments.

\section{Limitations and Future Work}
\label{sec:limitations}
A primary limitation of Steering Vector Decoding (SVDecode) is its dependency on an initial warm-start fine-tuning phase to identify an effective, task-specific steering direction. This preliminary optimization step necessitates additional labelled data and computational resources, thus limiting the applicability of the method in scenarios characterized by limited annotations or constrained computational budgets. Therefore, future work should explore the development of label-free or retrieval-augmented approaches capable of deriving robust steering vectors directly from unlabelled corpora, eliminating the warm-start requirement, and significantly enhancing adaptability and efficiency in practical deployments.

\section{Practical Impact}
\label{sec:practical_impact}

Steering Vector Decoding (SVDecode) transforms task adaptation from a heavyweight \emph{weight-update} problem into a lightweight \emph{distribution-alignment} procedure executed entirely at decode time.  
Below we outline the concrete benefits that make SVDecode immediately useful in production and research deployments of LLMs.
\begin{enumerate}
    \item \textbf{Deployment-time efficiency.}  
    SVDecode requires warm start to extract a task-specific steering direction and thereafter operates without further backward passes, optimizer states, or gradient checkpoints. Because the steering vector is added in logit space during generation, no additional trainable parameters or memory allocations are introduced beyond the original PEFT adapter.  
    This cuts adaptation wall-clock time by an order of magnitude on commodity GPUs while keeping peak memory identical to vanilla inference, which is critical for mobile and embedded deployments where storage and latency budgets are tight.

    \item \textbf{Consistent accuracy gains at negligible cost.}  
    Across three tasks and nine benchmarks, pairing SVDecode with four standard PEFT methods lifts multiple-choice accuracy by up to \emph{5 percentage points} and open-ended truthfulness by \emph{2 percentage points}, and adds a \emph{1–2 percentage points} average boost on eight commonsense-reasoning datasets.  
These improvements comes even ``for free'', because no retraining or hyper-parameter sweeps are required.

    \item \textbf{Plug-and-play compatibility.}  
    Because SVDecode perturbs logits rather than weights, it can be stacked on \emph{any} PEFT recipe (LoRA, IA3, Prompt Tuning, P-Tuning v2) and on any decoding strategy.

    \item \textbf{Theoretically grounded.}  
    The SVDecode step is provably equivalent to the gradient step of maximum-likelihood fine-tuning.  
    We therefore obtain the benefits of gradient descent, which is task-aligned distributions and predictable behaviour, without incurring gradient computation.
\end{enumerate}
By turning task adaptation into a constant-overhead inference-time operation, SVDecode lowers the barrier to customised LLM deployment for small labs, edge devices, and fast-changing domains where rapid iteration is crucial.  
Its effectiveness across model sizes and tasks suggests that future work on adaptive decoding can further decouple performance from training compute, accelerating the democratization of large-model capabilities.

\end{document}